\documentclass{article}
\usepackage{svaiter}
\usepackage{algorithm2e}

\usepackage[textwidth=2.5cm, textsize=tiny]{todonotes}

\RequirePackage[colorlinks,citecolor=blue,urlcolor=blue]{hyperref}
\usepackage{color}

\usepackage{subcaption}

\usepackage[margin=1.2in]{geometry}

\def\Aexp{A^{\text{exp}}}
\def\Aunif{A^{\text{unif}}}
\def\Asmooth{A^{\text{smooth}}}

\def\Psmooth{P^{\text{smooth}}}
\def\rhoPZ{\rho^{\textup{(PZ)}}}

\def\betamax{\beta_{\max}}
\def\Cbeta{C_\beta}
\def\ntaumax{\bar{n}_{\max}}
\def\ntaumin{\bar{n}_{\min}}
\def\commratio{\mu_B}
\def\CCbeta{C'_\beta}
\def\err{E}

\newcommand{\order}[1]{\mathcal{O}\left(#1\right)}
\newcommand{\pa}[1]{\left(#1\right)}
\renewcommand{\norm}[1]{\left\lVert#1\right\rVert}

\def\figsize{0.4}

\captionsetup[subfigure]{font=small,labelfont=small}

\title{Sparse and Smooth:\\ improved guarantees for Spectral Clustering\\ in the Dynamic Stochastic Block Model}

\author{
    Nicolas Keriven\\
    CNRS \& GIPSA-lab\\
    11 rue des Math\'ematiques, 38400 St-Martin-d'H\'er\`es, France. \\
    nicolas.keriven@gipsa-lab.grenoble-inp.fr
    \and
    Samuel Vaiter\\
    CNRS \& IMB, Universit\'e de Bourgogne \\
    9 avenue Alain Savary, 21000 Dijon, France.  \\
    samuel.vaiter@u-bourgogne.fr
}
\date{}
\begin{document}

\maketitle

\begin{abstract}
In this paper, we analyse classical variants of the Spectral Clustering (SC) algorithm in the Dynamic Stochastic Block Model (DSBM). Existing results show that, in the relatively sparse case where the expected degree grows logarithmically with the number of nodes, guarantees in the static case can be extended to the dynamic case and yield improved error bounds when the DSBM is sufficiently \emph{smooth} in time, that is, the communities do not change too much between two time steps. We improve over these results by drawing a new link between the sparsity and the smoothness of the DSBM: the more regular the DSBM is, the more sparse it can be, while still guaranteeing consistent recovery. In particular, a mild condition on the smoothness allows to treat the \emph{sparse} case with bounded degree. We also extend these guarantees to the normalized Laplacian, and as a by-product of our analysis, we obtain to our knowledge the best spectral concentration bound available for the normalized Laplacian of matrices with independent Bernoulli entries.
\end{abstract}

\section{Introduction}

In recent years, the study of dynamic networks has appeared as a topic of great interest to model complex phenomenons that evolve with time, such as interactions in social networks, the spread of infectious diseases or opinions, or information packets in computer networks. In light of this, many random graphs models, traditionally static (non-dynamic), have been extended to the dynamic case, see~\cite{Goldenberg2009, Holme2015} for reviews. One of the most popular use of dynamic networks consists in detecting and tracking communities of well-connected nodes, for instance users of a social network~\cite{Yang2011b, Xu2014a, Wang2017}. In this context, the classical Stochastic Block Model (SBM)~\cite{Holland1983}, in which nodes intra- and inter-communities are linked independently with some prescribed probabilities, has been extended to dynamic settings (DSBM) in a myriad of ways. In this paper, we consider one of the first (and most popular) extension~\cite{Yang2011b} as a discrete Hidden Markov Model (HMM) as well as one of its simplification, where node memberships follow a Markov chain with respect to time, and connections are generated by a classical SBM conditionally on the memberships. We will also consider a slight simplification as in~\cite{Pensky2019}, where the authors remove the Markov Chain assumption and consider deterministic community memberships at each time steps. Many other models have been proposed since, to take into account evolving connection probabilities~\cite{Xu2014a, Matias2017, Pensky2019}, varying number of nodes~\cite{Xu2014}, connections that depend on their previous states~\cite{Xu2014}, mixed-membership SBM~\cite{Ho2011} or multi-graphs~\cite{Han2015}. 

The literature on clustering nodes in a graph is vast, with a variety of methods. Arguably, the most popular class of algorithms in practice is that of \emph{spectral clustering} (SC) methods~\cite{Ng2001, VonLuxburg2007}, which consist in applying a classical clustering algorithm for vectorial data, often the well-known $k$-means algorithm~\cite{Lloyd1982}, to the eigenvectors of a matrix related to the structure of the graph such as the adjacency matrix or normalized Laplacian. In a dynamic context, we will consider in this paper one of the simplest adaptation of SC, which consists in feeding a version of the adjacency matrix \emph{smoothed in time} to the classical SC algorithm, in hope of implicitely enforcing smoothness of the communities. This can be an averaged version of the adjacency matrix over a finite window~\cite{Ho2011, Pensky2019}, or computed through recursive updates with a certain ``forgetting factor''~\cite{Chi2007, Chi2009, Xu2010}, which is somehow more amenable to streaming computing. Other works explicitely enforce smoothness between the communities or between the eigenvectors considered in SC through efficient updates~\cite{Ning2007,Dhanjal2014,Liu2018}.

Beyond SC, many other methods have been proposed, such as Maximum Likelihood or variational approaches, which are consistent for the SBM and DSBM \cite{Celisse2012, Matias2017, Longepierre2019}, Bayesian approaches~\cite{Yang2011b}, learning-based approaches \cite{Bach2006}, or neural networks~\cite{Bruna2017}. Many variants of the SC itself exist, often to accelerate computation \cite{Tremblay2019}.

\paragraph*{Guarantees for Spectral Clustering}

There is a vast literature on the theoretical analysis of SC, and guarantees come in many different flavors. Several works analyze the algorithm when the graph is well-clustered (in some sense) \cite{Peng2014}, in terms of spectral convergence of the normalized Laplacian when the number of nodes goes to infinity \cite{VonLuxburg2008, Diao2016a, GarciaTrillos2018, Tang2018a}, or using random matrix theory~\cite{Couillet2016a}.

It is well-known that a key quantity to analyse SC algorithms is the density of edges with respect to the number of nodes. In the specific case of independent Bernoulli edges like the SBM and DSBM, this correspond to the mean probability of connection, which will be denoted by $\alpha_n$ in this paper, where $n$ is the number of nodes in the graph. The \emph{dense} case $\alpha_n \sim 1$ is generally trivial to analyse~\cite{Le2018}. At the other end of the spectrum, the so-called \emph{sparse} case $\alpha_n\sim \frac{1}{n}$ is much more complex, since the graph is not even guaranteed to be connected with high probability~\cite{Abbe2018}.

Modern analyses of the sparse case are often inspired by statistical physics~\cite{Krzakala2013, Mossel2017, Abbe2018}, and are interested with the computation of a \emph{detectability threshold}, that is, the characterization of regimes of parameters in which there exists (or not) an algorithm that asymptotically performs better than random guess. However, this approach does not concern the classic SC algorithm (which will generally fail \cite{Krzakala2013}), and the case where the number of communities $K$ is larger than $2$ is still largely open. In the dynamic case, a conjecture on the detectability threshold is given in \cite{Ghasemian2016}. In parallel, other works study the sparse case by regularizing the adjacency matrix or normalized Laplacian of the graph before the SC algorithm \cite{Le2017a, Le2018}.

In \cite{Lei2015}, Lei and Rinaldo provide strong, non-asymptotic consistency guarantees for the classic SC algorithm on the adjacency matrix (without regularization) in the \emph{relatively sparse} case $\alpha_n \gtrsim \frac{\log n}{n}$, showing that the proportion of misclassified nodes tends to $0$ with a probability that goes to $1$ when the number of nodes $n$ increases. Their recovery results %, based on a concentration inequality in spectral norm $\norm{A - \mathbb{E}(A)} \lesssim \sqrt{\log n}$, 
are valid for any $K$, potentially growing slowly with $n$. In~\cite{Pensky2019}, Pensky and Zhang extend this analysis to a particular Dynamic SBM, referred to as ``deterministic'' DSBM in the sequel, for the SC algorithm applied to a smoothed adjacency matrix. In this case, another key quantity is the temporal \emph{regularity} of the model $\varepsilon_n$, that is, the proportion of nodes that may change community between two time steps (the smaller $\varepsilon_n$ is, the more regular the model). They showed that, in the relatively sparse case, if the model was sufficiently regular in $\varepsilon_n = o\pa{\frac{1}{\log n}}$, then the error bound of the static case can be improved. However their analysis still takes place in the relatively sparse case even when $\varepsilon_n$ is very low. %Intuitively, we would expect very regular models to be allowed to be sparser.

%We aim at providing non-asymptotic rates to recover $K$ communities, for any $K$ (potentially growing with $n$). Such general results are generally obtained in the almost sparse case; in the much more difficult sparse case $\alpha_n \sim \frac{1}{n}$, results beyond the case $K=2$ are still largely open~\cite{Abbe}.

\paragraph*{Contributions} In this paper, we follow the analyses of \cite{Lei2015} and \cite{Pensky2019} and significantly extend them in several ways.
\begin{itemize}
\item[--] Our main contribution is to draw a new link between the sparsity $\alpha_n$ and regularity $\varepsilon_n$ in the analysis of the DSBM: we show that, the more regular the model, the sparser it can be, while still guaranteeing consistency. In particular, a mildly strengthened condition $\varepsilon_n \sim \frac{1}{\log^2 n}$ allows to \emph{give consistent guarantees in the sparse case} $\alpha_n \sim \frac{1}{n}$.
\item[--] We extend the analysis to the normalized Laplacian, which was left open by Lei and Rinaldo~\cite{Lei2015}. As a by-product, in the static case, we obtain, to our knowledge, the best spectral concentration bound available $\norm{L(A) - L(\mathbb{E}(A))} \lesssim \frac{1}{\sqrt{\log n}}$ in the relatively sparse case $\alpha_n \sim \frac{\log n}{n}$.
\item[--] We also improve the rate of the error bounds with respect to the number of communities $K$ when the probabilities of connection between communities decrease with $K$, in both the static \cite{Lei2015} and dynamic \cite{Pensky2019} cases.
\item[--] Finally, we extend our results to the Markov DSBM introduced in \cite{Yang2011b}, and the SC algorithm with an ``exponentially smoothed'' matrix, used in \cite{Chi2007, Xu2010} and appropriate in a streaming computing framework.
\end{itemize}

\paragraph*{Outline} In Section \ref{sec:notation}, we introduce notations, the SBM and DSBM, and recall the SC algorithm. In Section \ref{sec:reco}, we draw a link between recovery guarantees for SC and the concentration of the input matrix in spectral norm, similar to \cite{Lei2015} but extended to the normalized Laplacian. In Section \ref{sec:adj} and \ref{sec:Lapl}, we expose our main concentration results respectively for the adjacency matrix and normalized Laplacian. Proofs are given in Section \ref{sec:proof}, with technical computations deferred to the Appendix.

\section{Framework and notations}\label{sec:notation}

%\paragraph*{Notations}
The set of the first $n$ integers is denoted by $[n] = \{1,\ldots,n\}$.
For any vector $d \in \RR^n$, we define $\diag(d) \in \RR^{n \times n}$ to be the diagonal matrix whose elements are given by $d$. For a varying parameter $\alpha_n$, the notation $\alpha_n \sim f(n)$ indicates that, as $n\to \infty$, the quantity $\alpha_n/f(n)$ tends to a non-zero constant, $\alpha_n \lesssim f(n)$ indicates that there is a universal constant $C$ such that $\alpha_n \leq Cf(n)$, and similarly for $\alpha_n \gtrsim f(n)$.

An undirected graph $G = (V,E)$ is formed by a set of nodes $V$ and edges $E \subset V \times V$. For a graph with $n$ nodes, we often adopt $V = [n]$, and we define its (symmetric) adjacency matrix $A \in \set{0,1}^{n\times n}$ such that for $i,j \in [n]$,
\begin{equation*}
  A_{ij} =
  \begin{cases}
    1 & \text{if } \set{i,j} \in E, \\
    0 & \text{otherwise.}
  \end{cases}
\end{equation*}
We also define the (diagonal) degree matrix $D(A)$ by
\begin{equation*}
  D(A) = \diag\left( (d_i)_{i=1}^n \right)
  \qwhereq
  d_i = \sum_{j=1}^n A_{ij} .
\end{equation*}
For any symmetric matrix $A$ such that $\sum_j A_{ij}\neq 0$ for all $i$, the normalized Laplacian $L(A)$ is defined as
\[
  L(W) = D(A)^{-\frac12} A D(A)^{-\frac12} .
\]
We note that, typically, the normalized Laplacian is defined as the matrix $\Id - D(A)^{-\frac12} A D(A)^{-\frac12}$. However, SC is mainly concerned with the eigenvectors of the Laplacian, which are the same for both variants.

\paragraph*{Stochastic Block Model}
Let us start by introducing the classical static SBM. We take the following notations: $n$ the number of nodes, $K$ the number of communities. Each node belongs to exactly one community. We denote by $\Theta \in\set{0,1}^{n\times K}$ the $0-1$ matrix representing the memberships of nodes, where for each node $i$, $\Theta_{ik} = 1$ indicates that it belongs to the $k$th community, and is $0$ otherwise. The (symmetric) adjacency matrix is denoted by $A\in \set{0,1}^{n\times n}$.
Given $\Theta$, for $i<j$ we have
\[
A_{ij}~|~\set{\Theta_{ik} = 1, \Theta_{j\ell}=1} \sim \text{Ber}(B_{k\ell}) \\
\]
where $B\in [0,1]^{K\times K}$ is a symmetric connectivity matrix, and $\text{Ber}(p)$ indicates a Bernoulli random variable with parameter $p$. We also let $A_{ii}=0$ and $A_{ji} = A_{ij}$. Finally, we define $P = \Theta B \Theta^\top \in \RR^{n\times n}$ the matrix storing the probabilities of connection between two nodes off its diagonal, and we have
\begin{equation*}
\mathbb{E}(A) = P - \diag(P)
\end{equation*}
Typically, $B$ has high diagonal terms and low off-diagonal terms. We will consider $B$ of the form
\begin{equation}\label{eq:Bpredef}
B = \alpha_n B_0
\end{equation}
for some  $\alpha_n \in (0,1)$ and $B_0 \in [0,1]^{K\times K}$ whose elements are denoted by $b^{(0)}_{k\ell}$. It is known that the rate $\alpha_n$ when $n \to \infty$ is the main key quantity when analyzing the properties of random graphs. Typical settings include $\alpha_n \sim 1$ (dense graphs), $\alpha_n \sim 1/n$ (sparse graphs), or middle grounds such as $\alpha_n \sim \frac{\log n}{n}$, usually referred to ``relatively sparse'' graphs. As we will see, it is known that strong guarantees of consistency can be given in the relatively sparse case, while the sparse case it hard to analyze and only partially understood.

For some maximum and minimum community sizes $n_{\max}\geq \frac{n}{K}$ and $n_{\min}\leq \frac{n}{K}$, we define the set of admissible community sizes $N\eqdef \{(n_k)_{k=1}^K ~|~ n_{\min} \leq n_k \leq n_{\max}, \sum_k n_k = n\}$, and
\begin{align}\label{eq:ntaudef}
\ntaumax \eqdef \max_{(n_\ell)_\ell \in N, k\leq K} \sum_{\ell} n_\ell b_{k\ell}^{(0)}
,\quad\ntaumin \eqdef \min_{(n_\ell)_\ell \in N, k\leq K} \sum_{\ell} n_\ell b_{k\ell}^{(0)}
\end{align}
These quantities are such that the expected degree will be comprised between $\alpha_n \ntaumin$ and $\alpha_n \ntaumax$.
For simplicity, we will sometimes express our results with $B_0$ equal to:
\begin{equation}\label{eq:Bdef}
B_0 =(1-\tau) \Id_K + \tau 1_K 1_K^\top
\end{equation}
In other words, $B$ contains $\alpha_n$ on its diagonal and $\tau\alpha_n$ outside.
For this expression of $B_0$, we have $\ntaumax = (1-\tau) n_{\max} + n \tau$, and similarly for $\ntaumin$. Interestingly, in the case of balanced communities $n_{\max}, n_{\min} \sim \frac{n}{K}$, we have then
\[
\ntaumin, \ntaumax \sim \begin{cases}
n &\text{if $\tau \sim 1$} \\
\frac{n}{K} &\text{if $\tau \sim \frac{1}{K}$}
\end{cases}
\]

\paragraph*{Dynamic SBM} The Dynamic SBM (DSBM) is a random model for generating adjacency matrices $A_0,\ldots,A_t$ at each time step. Each $A_i$ will be generated according to a classical SBM with constant number of nodes $n$, number of communities $K$ and connectivity matrix $B$, but changing node memberships $\Theta_t$. Note that several works consider changing number of nodes~\cite{Xu2014} or changing connectivity matrix~\cite{Pensky2019}, but for simplicity we assume that they are constant in time here.
We will consider two potential models on the $\Theta_t$.
\begin{itemize}
\item[--] The simplest one, adopted in~\cite{Pensky2019}, is to consider that $\Theta_0,\ldots,\Theta_t$ are deterministic variables. In this case, we will assume that only a number $s\leq n$ of nodes change communities between each time step $t-1$ and $t$, and denote $\epsilon_n = s/n$ this relative proportion of nodes. We will also assume that at all time steps, the communities sizes are comprised between some $n_{\min}$ and $n_{\max}$, which will typically be of the order of $n/K$ for balanced communities. %If additionally $B$ is of the form \eqref{eq:Bdef}, we define $\ntaumax \eqdef n_{\max} + \tau n$ and $\ntaumin \eqdef n_{\min} + \tau n$. An interesting property is that if the communities are balanced and $\tau \sim \frac{1}{K}$, then $\ntaumax\sim\ntaumin\sim \frac{n}{K}$.
As a shorthand, we will simply refer to this model as \emph{deterministic} DSBM (keeping in mind that the $A_t$ are still random).
\item[--] In the second model, similar to~\cite{Yang2011b} we assume that the nodes memberships follow a Markov chain, such that between two time steps, all nodes have a probability $1-\varepsilon_n$ to stay in the same community, and $\varepsilon_n$ to go into any other community, that is:
\begin{align*}
&\forall i,k,~ \mathbb{P}\pa{(\Theta_t)_{ik} = 1~|~(\Theta_{t-1})_{ik} = 1} = 1-\varepsilon_n, \\
&\forall \ell \neq k,~ \mathbb{P}\pa{(\Theta_t)_{i\ell} = 1~|~(\Theta_{t-1})_{ik} = 1} = \frac{\varepsilon_n}{K-1}
\end{align*}
Then, conditionally on $\Theta_t$, the $A_t$ are drawn independently according to a SBM. The global model is thus a Hidden Markov Model (HMM).
We will simply refer to this case as Markov DSBM. Note that, in this case, it is rather difficult to quantify, in a non-asymptotic manner, the probability of having bounded community sizes globally holding for all time steps. Hence $\ntaumax,\ntaumin$ will not intervene in our analysis of this case.
\end{itemize}

\paragraph*{Goal and error measure} The goal of a clustering algorithm is to give an estimator $\hat \Theta$ of the node memberships $\Theta$, up to permutation of the communities labels. 
We consider the following measure of discrepancy between $\Theta$ and an estimator $\hat \Theta$~\cite{Lei2015}:
\begin{equation}
\label{eq:error_measure}
E(\hat \Theta, \Theta) = \min_{Q\in\mathcal{P}_k} \frac{1}{n} \norm{\hat \Theta Q - \Theta}_0
\end{equation}
where $\mathcal{P}_k$ is the set of permutation matrices of $[k]$ and $\norm{\cdot}_0$ counts the number of non-zero elements of a matrix. While other error measures are possible, as we will see one can generally relate them to a spectral concentration property, which will be the main focus of this paper. 

In the dynamic case, a possible goal is to estimate $\Theta_1,\ldots, \Theta_t$ for all time steps simultaneously~\cite{Xu2010, Pensky2019}. Here we consider a slightly different goal: \emph{at a given time step $t$}, we seek to estimate $\Theta_t$ with the best precision possible, by exploiting past data. In general, this will give rise to methods that are computationally lighter than simultaneous estimation of all the $\Theta_t$'s, and more amenable to streaming computing, where one maintains an estimator without having to keep all past data in memory. Naturally, such methods could be applied independently at each time step to produce estimators of all the $\Theta_t$'s, but this is not the primary goal here.

\paragraph*{Spectral Clustering (SC) algorithm}
Spectral Clustering~\cite{Ng2001} is nowadays one of the leading methods to identify communities in an unsupervised setting.
The basic idea is to solve the $K$-means problem~\cite{Lloyd1982} on the $K$ leading eigenvectors $E_K$ of either the adjacency matrix or (normalized) Laplacian.
Solving the $K$-means, i.e., obtaining 
\begin{equation}\label{eq:kmeans-functional}
  (\bar \Theta, \bar C) \in 
  \uArgmin{\Theta \in \RR^{n \times K}, C \in \RR^{K \times K}}
  \norm{\Theta C - E_K}_F^2 ,
\end{equation}
is known to be NP-hard, but several approximation algorithms, such as~\cite{Kumar2004}, are known to produce $1+\delta$ approximate solutions $(\hat \Theta, \hat C)$
\begin{equation*}
  \norm{\hat \Theta \hat C - E_K}_F^2 \leq (1+\delta) \norm{\bar \Theta \bar C - E_K}_F^2 .
\end{equation*}
The SC is summarized in Algorithm~\ref{alg:SC}.
\begin{algorithm}
\KwData{Matrix $M \in \RR^{n \times n}$ (typically adjacency or normalized Laplacian), number of communities $K$, approximation ratio $\delta > 0$}
\KwResult{Estimated communities $\hat \Theta \in \RR^{n \times K}$}
Compute the $K$ leading eigenvectors $E_K$ of $M$.

Obtain a $(1+\delta)$-approximation $(\hat \Theta, \hat C)$ of \eqref{eq:kmeans-functional}.

Return $\hat \Theta$.
\caption{Spectral Clustering algorithm}\label{alg:SC}
\end{algorithm}

In the dynamic case, a typical approach to exploit past data is to replace the adjacency matrix $A_t$ with a version ``smoothed'' in time $A^\text{smooth}_t$, and feed either $\hat P = A^\text{smooth}_t$ or the corresponding Laplacian $\hat L = L(A^\text{smooth}_t)$ to the classical SC algorithm.
In~\cite{Pensky2019}, the authors consider the smoothed adjacency matrix as an average over its last $r$ values:
\begin{equation}\label{eq:estimatorUnif}
\Aunif_t = \frac{1}{r} \sum_{k=0}^{r-1} A_{t-k} .
\end{equation}
Note that, in the original paper, the authors sometimes consider non-uniform weights due to potential changes in time of the connectivity matrix $B_t$, but in our case we consider a fixed $B$, and thus uniform weights $\frac{1}{r}$.
In this paper, we will also consider the ``exponentially smoothed'' estimator proposed by \cite{Chi2007,Chi2009,Xu2011a}, which is computed recursively as:
\begin{equation}\label{eq:estimatorExp}
\Aexp_t = (1-\lambda) \Aexp_{t-1} + \lambda A_t .
\end{equation}
for some ``forgetting factor'' $\lambda \in (0,1]$, and $\Aexp_0=A_0$. Compared to the uniform estimator \eqref{eq:estimatorUnif}, this kind of estimator is somewhat more amenable to streaming and online computing, since only the current $\Aexp_t$ needs to be stored in memory instead of the last $r$ values $A_t,A_{t-1},\ldots,A_{t-r+1}$ (note however that $\Aexp_t$ may be denser that a typical adjacency matrix, so the memory gain is sometimes mitigated depending on the case).

In Fig. \ref{fig:basic}, we illustrate the performance of the SC algorithm on a synthetic DSBM example. As expected, the normalized Laplacian $L(\Aexp_t)$ generally performs better than $\Aexp_t$. Interestingly, the optimal forgetting factor $\lambda$ is slightly different from one to the other, and the normalized Laplacian reaches a higher performance altogether. We then compare $\Aunif_t$ and $\Aexp_t$. As we will see in the sequel, taking $r \sim \frac{1}{\lambda}$ often results in the same performance for both estimators. However, a clear advantage of the exponential estimator is that it is not limited to discrete window sizes, but has a continuous forgetting factor. As such, $\Aexp_t$ with the optimal $\lambda$ often reaches a better performance than $\Aunif_t$ with the optimal $r$.

\begin{figure}[t] 
  \begin{subfigure}[t]{\figsize\linewidth}
    \centering
    \includegraphics[width=1.0\linewidth]{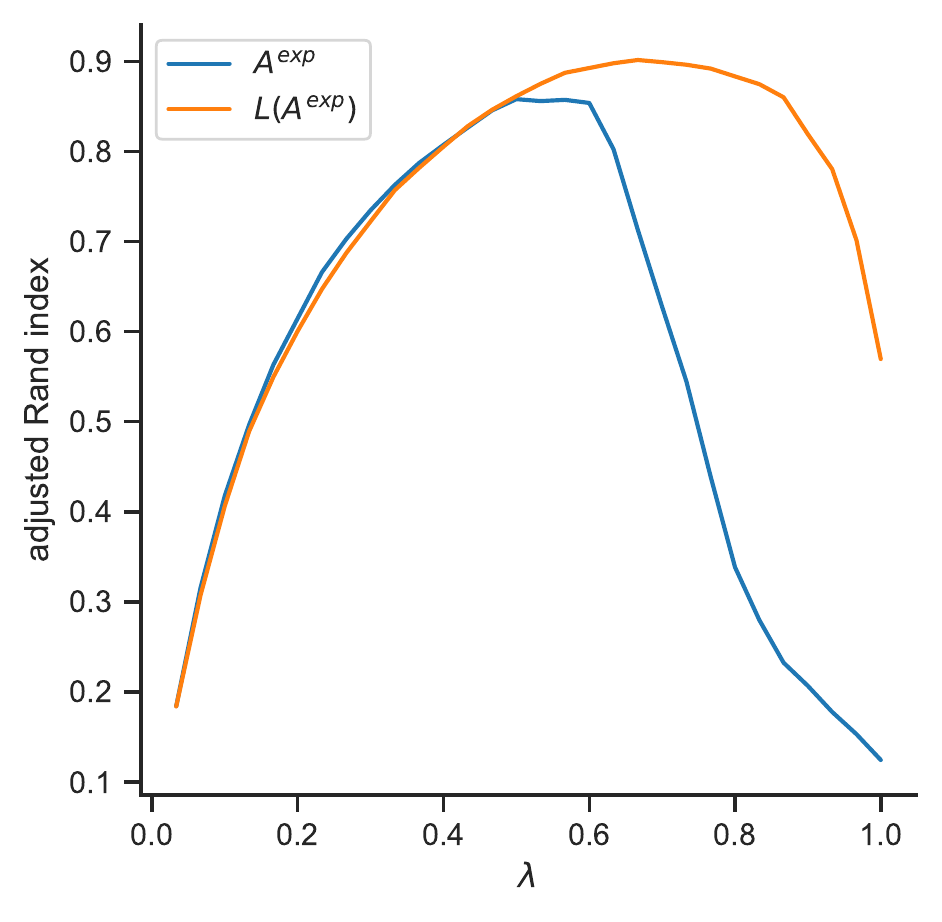} 
    \caption{Adjusted Rand index in function of the forgetting factor $\lambda$ for the adjacency matrix and the normalized Laplacian.} 
    \label{fig:basic:AvsL} 
    \vspace{4ex}
  \end{subfigure}%%
  \hspace{1em}
  \begin{subfigure}[t]{\figsize\linewidth}
    \centering
    \includegraphics[width=1.0\linewidth]{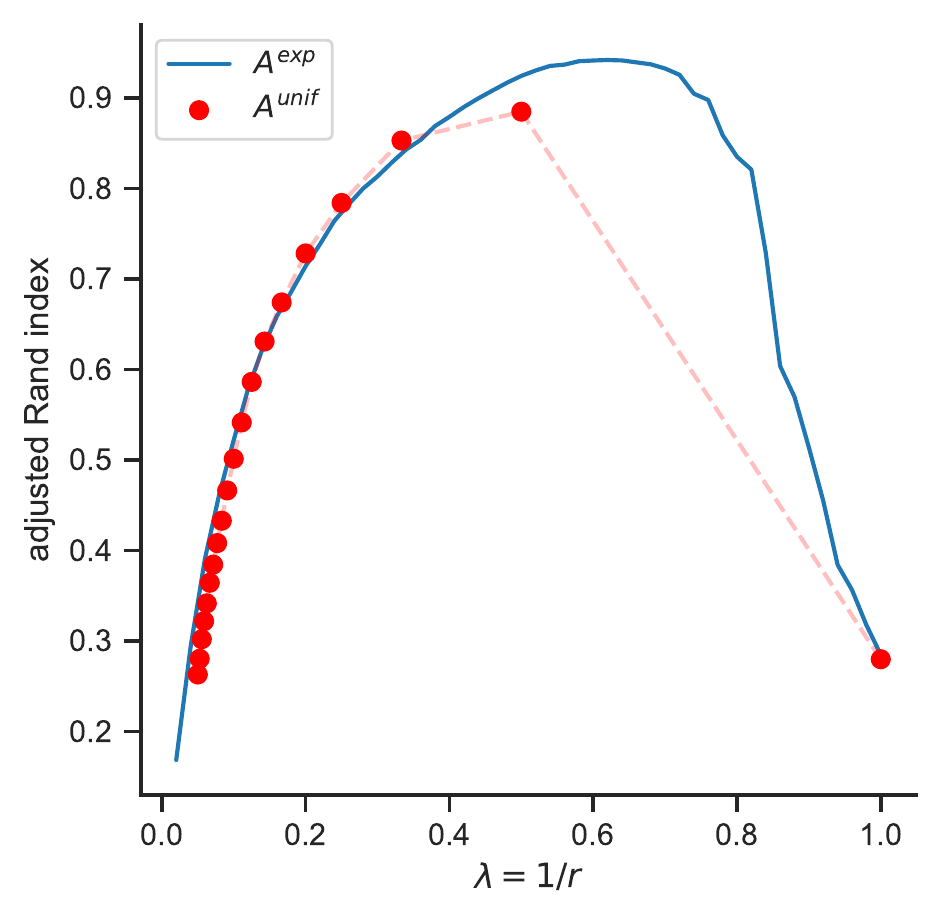} 
    \caption{Adjusted Rand index in function of the forgetting factor $\lambda$ for $\Aexp_t$ and the window size $r$ for $\Aunif_t$.}
    \label{fig:AvsL:unifVSexp} 
    \vspace{4ex}
  \end{subfigure} 
  \caption{Performance results for SC on synthetic data.}% {\bf Left:} comparison between adjacency matrix and normalized Laplacian with respect to the forgetting factor $\lambda$. {\bf Right:} comparison between the }
  \label{fig:basic} 
\end{figure}

\section{From Spectral Clustering to spectral norm concentration}\label{sec:reco}

%In this section, we state our main guarantees for SC in the DSBM model. These are greatly inspired by the guarantees for static SBM of Lei and Rinaldo~\cite{Lei2015} and DSBM of Pensky and Zhang~\cite{Pensky2019}, with notable improvements, the main of which being the relation between the \emph{time smoothness} of the model $\varepsilon_n$ and the sparsity of the SBM $\alpha_n$, and the extension to the normalized Laplacian.

%We start the section by relating the error of the SC algorithm to certain spectral norm concentration in the fashion of~\cite{Lei2015}, then state our main concentration results for the adjacency matrix and normalized Laplacian.

%\subsection{From Spectral Clustering to spectral norm concentration}

As described in~\cite{Lei2015}, a key quantity for analyzing SC algorithm is the concentration of the adjacency matrix around its expectation \emph{in spectral norm}. As a first contribution, we prove the following lemma, which is a generalisation of this result to the normalized Laplacian.
\begin{lemma}\label{lem:reco}
Let $P=\Theta B \Theta^\top$ correspond to some SBM with $K$ communities, where $n_{\max}$, $n'_{\max}$ and $n_{\min}$ are respectively the largest, second-largest and smallest community size. Assume $B=\alpha_n B_0$ for any $B_0$ with smallest eigenvalue $\gamma$. Let $\hat P$ be an estimator of $P$, and $\hat \Theta$ be the output of Algorithm~\ref{alg:SC} on $\hat P$ with a $(1+\delta)$-approximate $k$-means algorithm. Then
\begin{equation}\label{eq:guaranteeP}
E(\hat \Theta, \Theta) \lesssim (1+\delta) \frac{n'_{\max} K}{n \alpha_n^2 n_{\min}^2 \gamma^2}\norm{\hat P - P}^2\, ,
\end{equation}
Similarly, if $\hat L$ is an estimator of $L(P)$ and $\hat \Theta$ is the output of Algorithm~\ref{alg:SC} on $\hat L$, it holds that
\begin{equation}\label{eq:guaranteeL}
E(\hat \Theta, \Theta) \lesssim (1+\delta) \frac{n'_{\max} K \ntaumax^2}{n n_{\min}^2 \gamma^2}\norm{\hat L - L(P)}^2\, .
\end{equation}
When $B_0$ is defined as \eqref{eq:Bdef}, we have $\gamma = 1-\tau$.
\end{lemma}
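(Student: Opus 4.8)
The plan is to follow the two-step template for analysing spectral clustering introduced by Lei and Rinaldo, and to push it through verbatim for $L(P)$ by observing that $L(P)$ is itself the adjacency matrix of an SBM with a \emph{rescaled} connectivity matrix. \textbf{Step 1 (population eigenstructure of $P$).} Since the relevant quantity $\gamma$, the smallest eigenvalue of $B_0$, is used as a positive number, $B_0 \succ 0$, so $P = \alpha_n \Theta B_0 \Theta^\top$ is positive semidefinite of rank $K$ and its $K$ leading eigenvectors are exactly those of its nonzero eigenvalues. Writing $U \in \RR^{n\times K}$ for that eigenvector matrix, $UU^\top$ is the orthogonal projector onto $\mathrm{col}(\Theta)$, i.e. $UU^\top = \Theta(\Theta^\top\Theta)^{-1}\Theta^\top = \Theta D^{-1}\Theta^\top$ with $D = \diag(n_1,\dots,n_K)$; hence $U = \Theta X$ with $X\in\RR^{K\times K}$ and $XX^\top = D^{-1}$. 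In particular the rows of $X$ are orthogonal with $\norm{X_{k\cdot}}^2 = 1/n_k$, so distinct ``centroids'' are separated by $\norm{X_{k\cdot}-X_{\ell\cdot}}^2 = \tfrac1{n_k}+\tfrac1{n_\ell}$, whose minimum over $k\neq\ell$ equals $\tfrac1{n_{\max}}+\tfrac1{n'_{\max}}\geq \tfrac1{n'_{\max}}$. For the eigengap, the nonzero eigenvalues of $\Theta B\Theta^\top$ are those of $D^{1/2} B D^{1/2}$ (conjugating $BD = B\Theta^\top\Theta$ by $D^{1/2}$), so $\sigma_K(P) \geq \lambda_{\min}(B)\,n_{\min} = \alpha_n\gamma\, n_{\min}$.

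\textbf{Step 1bis (population eigenstructure of $L(P)$ — the new ingredient).} Because $P=\Theta B\Theta^\top$ has constant row sums over each community, $D(P)$ is constant on communities, taking value $d^{(k)} = \alpha_n\sum_\ell n_\ell b^{(0)}_{k\ell}$ on the rows of community $k$; consequently $D(P)^{-1/2}\Theta = \Theta\Lambda$ with $\Lambda = \diag\big((d^{(1)})^{-1/2},\dots,(d^{(K)})^{-1/2}\big)$, and $L(P) = \Theta(\Lambda B\Lambda)\Theta^\top$, an SBM adjacency matrix with connectivity $\tilde B = \Lambda B\Lambda$. Crucially $\Lambda$ is invertible, so $\mathrm{col}(L(P)) = \mathrm{col}(\Theta)$ and the argument of Step 1 applies \emph{unchanged}: the population eigenvector matrix is $\tilde U = \Theta\tilde X$ with $\tilde X\tilde X^\top = D^{-1}$, so the centroid separation is again $\geq 1/n'_{\max}$. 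Only the eigengap changes: the nonzero eigenvalues of $L(P)$ are those of $D^{1/2}\tilde B D^{1/2} = \alpha_n\,(D^{1/2}\Lambda)B_0(D^{1/2}\Lambda)$ with $D^{1/2}\Lambda = \diag\big(\sqrt{n_k/d^{(k)}}\big)$, hence $\sigma_K(L(P)) \geq \alpha_n\gamma\,\min_k \tfrac{n_k}{d^{(k)}} \geq \tfrac{\gamma\, n_{\min}}{\ntaumax}$, using $d^{(k)}\leq\alpha_n\ntaumax$ from \eqref{eq:ntaudef}.

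\textbf{Step 2 (perturbation and $k$-means).} By the Davis--Kahan $\sin\Theta$ theorem in Frobenius form (with $\sigma_{K+1}=0$ since $P$ and $L(P)$ have rank $K$), there is an orthogonal $O$ with $\norm{\hat U - UO}_F \lesssim \sqrt{K}\,\norm{\hat P - P}/\sigma_K(P)$, and analogously for $\hat L$ and $L(P)$. Running a $(1+\delta)$-approximate $k$-means on $\hat U$ and comparing with the exact centroids of $U$ (pairwise separation $\geq 1/n'_{\max}$), the standard counting argument gives $E(\hat\Theta,\Theta) \lesssim (1+\delta)\,n'_{\max}\,\norm{\hat U - UO}_F^2/n$. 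Chaining these two estimates and substituting the bounds on $\sigma_K$ from Steps 1 and 1bis yields exactly \eqref{eq:guaranteeP} and \eqref{eq:guaranteeL}. Finally, $B_0 = (1-\tau)\Id_K + \tau 1_K 1_K^\top$ has eigenvalues $1-\tau$ (multiplicity $K-1$) and $1-\tau+\tau K$, whose minimum is $1-\tau$, so $\gamma = 1-\tau$.

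\textbf{Main obstacle.} The genuinely new and most delicate point is Step 1bis: one must verify that the degree normalisation preserves the $\Theta$-structure of the leading eigenvectors (so that both the centroid-separation estimate and Davis--Kahan carry over unchanged), and correctly track how this normalisation converts the eigengap $\alpha_n\gamma n_{\min}$ into $\gamma n_{\min}/\ntaumax$; the rest is a transcription of the static adjacency-matrix analysis of Lei and Rinaldo.
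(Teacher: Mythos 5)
Your proposal is correct and follows essentially the same route as the paper: the key step in both is to write $L(P)=\Theta\,(D_B^{-1/2}BD_B^{-1/2})\,\Theta^\top$ (your $\Lambda B\Lambda$) as an SBM-structured matrix with rescaled connectivity and to lower-bound its smallest nonzero eigenvalue by $\gamma n_{\min}/\ntaumax$, after which the Lei--Rinaldo adjacency argument applies verbatim. The only cosmetic difference is that the paper invokes the Davis--Kahan plus $(1+\delta)$-$k$-means counting machinery as a black box from Lei and Rinaldo, whereas you re-derive it; the content is identical.
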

The proof of this lemma is deferred to Appendix~\ref{app:proof_reco}. The first bound \eqref{eq:guaranteeP} was proved in~\cite{Lei2015}, we extend it to the Laplacian case.
Note that $\hat L$ could be an estimator of $L(P)$ without being of the form $\hat L = L(M)$ for some matrix $M$.

Using this lemma, in the static SBM case, the goal is to find estimators $\hat P$ or $\hat L$ that concentrates around $P$ or $L(P)$ in spectral norm. In the dynamic case, where the goal is to estimate the communities at a particular time $t$, we seek the best estimators for $P_t$ or $L(P_t)$. As outlined in the previous section, we will consider smoothed versions of the adjacency matrix $\Asmooth_t$, and prove concentration of $\Asmooth_t$ around $P_t$ and $L(\Asmooth_t)$ around $L(P_t)$.

\begin{remark}
Assuming that all community sizes are of the order of $\frac{n}{K}$ and $\tau$ is fixed, the error in the adjacency case \eqref{eq:guaranteeP} scales as $\frac{K^2}{n^2 \alpha_n^2} \norm{\hat P - P}^2$, and in the normalized Laplacian case the error \eqref{eq:guaranteeL} scales as $K^2 \norm{\hat L - L(P)}^2$. Also note that, when $\ntaumax \sim \frac{n}{K}$, then the error \eqref{eq:guaranteeL} is as $ \norm{\hat L - L(P)}^2$. This does not explicitely depend on $\alpha_n$ or $K$, however these quantities will naturally appear in the concentration of the Laplacian.
\end{remark}

The next sections will therefore be devoted in analyzing the spectral concentration rates of the various estimators. Table \ref{tab:rate} summarize our results and compare them with previous works. As we will see in the next section, our main contribution is to weaken the hypothesis on the sparsity $\alpha_n$, and relate it to the regularity of the DSBM $\varepsilon_n$. We also provide the best bound available for the normalized Laplacian in the static case, and the first bound in the dynamic case.

In Figure \ref{fig:n}, we illustrate numerically the spectral concentration of $\Aexp_t$ and $L(\Aexp_t)$, and their actual clustering performance, with respect to the forgetting factor $\lambda$. We see that there is a slight discrepancy between the $\lambda$ that minimizes the spectral bound, and the one that yields the best clustering result. As we will see in the next sections, the $\lambda$ that minimizes the spectral error is theoretically of the order of $\sqrt{\alpha_n n \varepsilon_n}$. This rate is indeed verified numerically for the spectral error, however the actual best clustering performance deviates slightly. This indicates that spectral norm concentration probably does not yield sharp bounds in examining the performance of SC.

%\paragraph{Adjacency vs. normalized Laplacian} 

 %Nevertheless, future work may find it simpler to improve upon this result in the Laplacian case, since it only depends on the properties of $\hat L$ and is thus somewhat more flexible.
 
 \begin{figure}[htbp]
 \centering
  \begin{subfigure}[t]{\figsize\linewidth}
    \centering
    \includegraphics[width=1.0\linewidth]{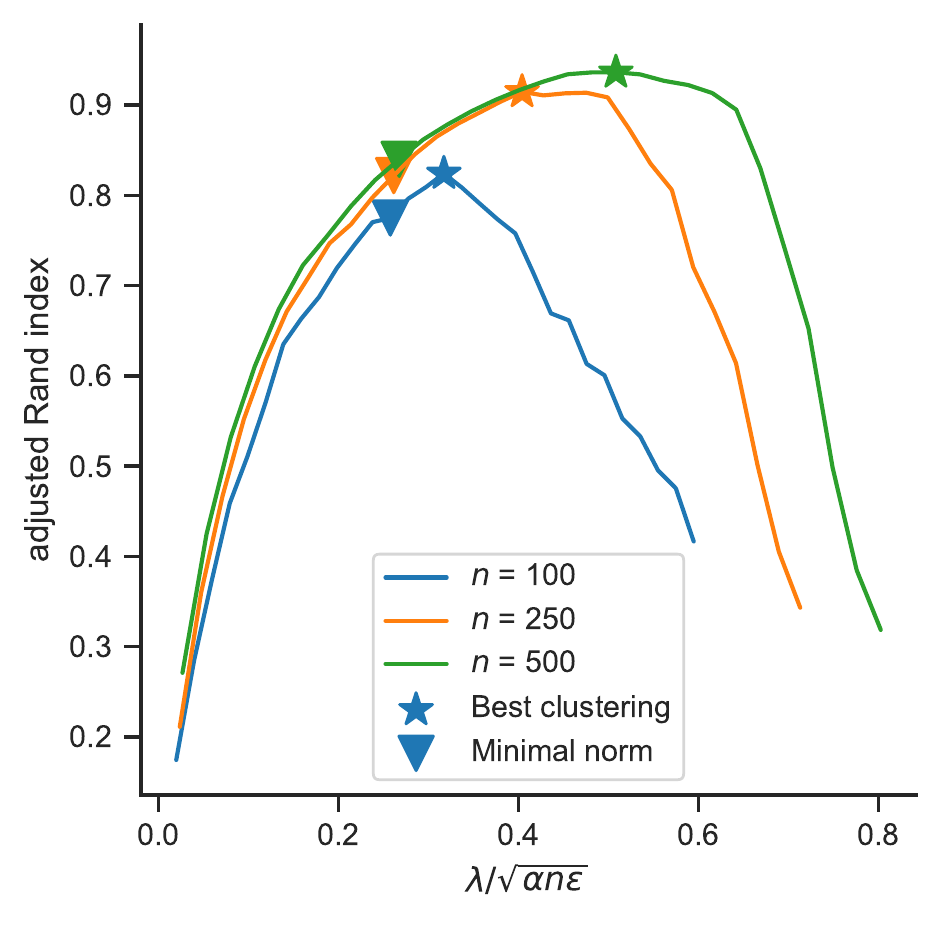} 
    \caption{Adjusted Rand index in function of the normalized forgetting factor $\lambda / \sqrt{\alpha_n n \epsilon_n}$ for the adjacency matrix.} 
    \label{fig:n:accA} 
  \end{subfigure}%%
  \hspace{1em}
  \begin{subfigure}[t]{\figsize\linewidth}
    \centering
    \includegraphics[width=1.0\linewidth]{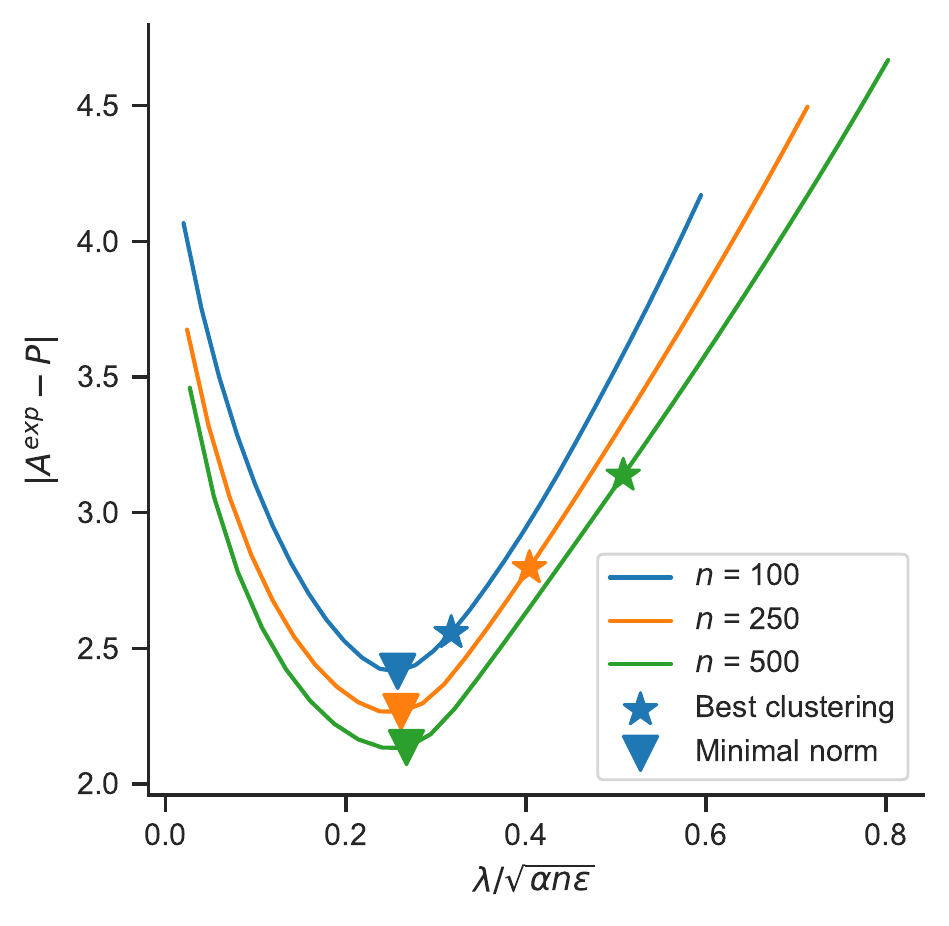} 
    \caption{Approximation (in norm) of $P$ by $\Aexp$ in function of the normalized forgetting factor $\lambda / \sqrt{\alpha_n n \epsilon_n}$ for the adjacency matrix.} 
    \label{fig:n:normA} 
  \end{subfigure}
    \begin{subfigure}[t]{\figsize\linewidth}
    \centering
    \includegraphics[width=1.0\linewidth]{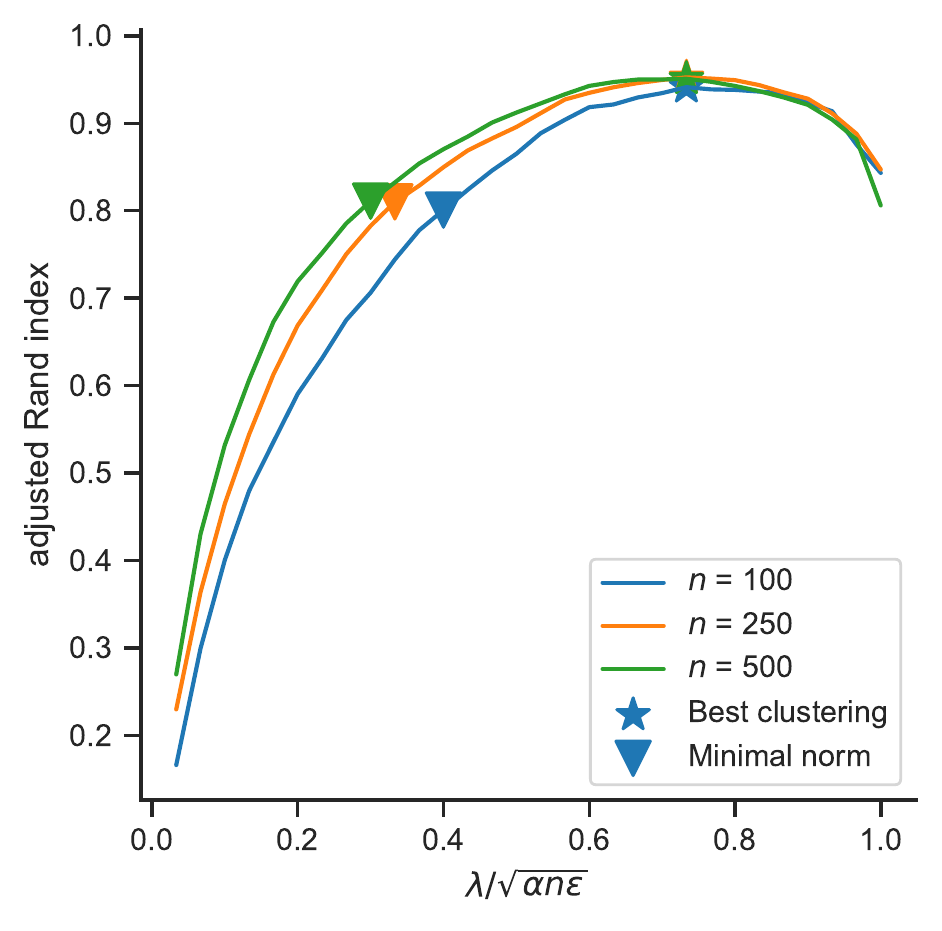} 
    \caption{Adjusted Rand index in function of the normalized forgetting factor $\lambda / \sqrt{\alpha_n n \epsilon_n}$ for the normalized Laplacian matrix.} 
    \label{fig:n:accL} 
  \end{subfigure}%%
  \hspace{1em}
  \begin{subfigure}[t]{\figsize\linewidth}
    \centering
    \includegraphics[width=1.0\linewidth]{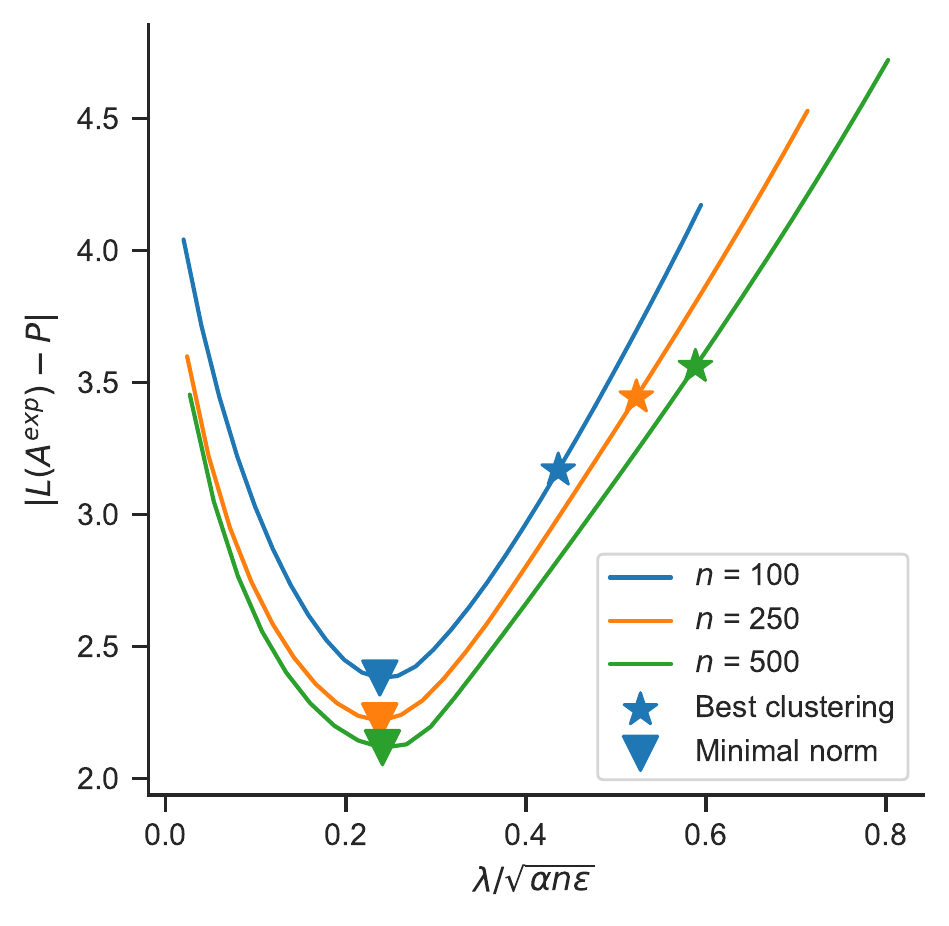} 
    \caption{Approximation (in norm) of $P$ by $L(\Aexp)$ in function of the normalized forgetting factor $\lambda / \sqrt{\alpha_n n \epsilon_n}$ for the normalized Laplacian matrix.} 
    \label{fig:n:normL} 
  \end{subfigure}
  \caption{SC on synthetic data. Comparison between the forgetting factor $\lambda$ that minimizes the spectral error, and the one that yields the best clustering result.}
  \label{fig:n} 
\end{figure}

\begin{table}
\def\arraystretch{1.5}
\centering
\begin{tabular}{|l|l|l|l|l|l|l|}
\hline
  & $A$ static & $L$ static & $A$ dyn. & $L$ dyn. & Hyp. & $E \to 0$ \\ \hline
    \cite{Oliveira2009} & $\log n$ & $\order{1}$ & & & $\alpha_n \gtrsim \frac{\log n}{n}$ & Yes \\
  \cite{Lei2015} & $\sqrt{\alpha_n n}$ & & & & $\alpha_n \gtrsim \frac{\log n}{n}$ & Yes \\
  \cite{Pensky2019} & $\sqrt{\alpha_n n}$ & & $\sqrt{\alpha_n n \rho_n}$ & & $\alpha_n \gtrsim \frac{\log n}{n}$ & Yes \\
  \cite{Bandeira2016a} & $\sqrt{\log n}$ & & & & $\alpha_n \gtrsim \frac{1}{n}$ & No \\
%  \cite{Rohe2011} & & ? & & & $\alpha_n \sim \frac{1}{\sqrt{\log n}}$ & ? \\
  Us & $\sqrt{\alpha_n n}$ & $\frac{1}{\sqrt{\alpha_n n}}$ & $\sqrt{\alpha_n n \rho_n}$ & $\sqrt{\frac{\rho_n}{\alpha_n n}}$ & $\frac{\alpha_n}{\rho_n} \gtrsim \frac{\log n}{n}$ & Yes \\ \hline
\end{tabular}
\caption{Concentration rates in spectral norm of the adjacency matrix and normalized Laplacian, in the static or dynamic case, with respect to the sparsity parameter $\alpha_n$, and the factor $\rho_n \eqdef \min(1, \sqrt{n\alpha_n \varepsilon_n})$ which includes the regularity $\varepsilon_n$. The last column indicate convergence of the error using Lemma \ref{lem:reco}. This table does not include methods with regularization~\cite{Le2018}.}
\label{tab:rate}
\end{table}

\section{Spectral concentration of the adjacency matrix}\label{sec:adj}

We start by recalling the result of~\cite{Lei2015} in the static case and prove an interesting minor improvement in some cases, then we examine the result for DSBM of~\cite{Pensky2019} and state our main contribution, that is, a weakening of the sparsity hypothesis for this case.

\subsection{Static case}
In their landmark paper~\cite{Lei2015}, Lei and Rinaldo analyze the relatively sparse case $\alpha_n \gtrsim \frac{\log n}{n}$ and show that, with probability at least $1-n^{-\nu}$ for some $\nu>0$, the adjacency matrix concentrates as
\begin{equation}\label{eq:concentrationLei}
\norm{A - P} \lesssim \sqrt{n \alpha_n}
\end{equation}
Therefore, by Lemma \ref{lem:reco}, using $A$ as an estimator for $P$ in an SC algorithm leads to an error $E(\hat \Theta, \Theta) \lesssim \frac{K^2}{\alpha_n n}$, such that $E(\hat \Theta, \Theta) \to 0$ whenever $K = o(\sqrt{n \alpha_n})$.
As a minor contribution, we remark that it is not hard to prove the following Lemma that improves over their result in the particular case when $B_0$ is defined as \eqref{eq:Bdef}.
\begin{proposition}\label{prop:improvedLei}
Consider a static SBM where $B_0$ is defined as \eqref{eq:Bdef}, assume that the community sizes $n_1,\ldots,n_K$ are comprised between $n_{\min}$ and $n_{\max}$, and that
\begin{equation}\label{eq:improvedLeiHyp}
\alpha_n \gtrsim \frac{\log n}{\ntaumin}
\end{equation}
Then, for all $\nu>0$, there exists a constant $C_\nu$ such that, with probability at least $1-\sum_k n_k^{-\nu}$, it holds that
\begin{equation}\label{eq:improvedLei}
\norm{A-P} \leq C_\nu \sqrt{\ntaumax \alpha_n}
\end{equation}
\end{proposition}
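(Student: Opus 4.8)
The bound \eqref{eq:concentrationLei} is obtained in~\cite{Lei2015} by controlling the maximum expected degree by the crude estimate $n\alpha_n$. The plan is to reopen that argument and to notice that its only dependence on the ambient dimension is through the \emph{maximum expected degree}
\[
d_{\max} \eqdef \max_{i\in[n]}\sum_{j=1}^n P_{ij} = \alpha_n\max_{k\leq K}\sum_{\ell} n_\ell b^{(0)}_{k\ell}.
\]
By \eqref{eq:ntaudef}, and because the true community sizes form an admissible tuple, every expected degree $d_i=\sum_j P_{ij}$ satisfies $\alpha_n\ntaumin\leq d_i\leq\alpha_n\ntaumax$; in particular hypothesis \eqref{eq:improvedLeiHyp} guarantees $d_i\gtrsim\log n$ for all $i$, which is exactly the regime in which the combinatorial concentration argument of~\cite{Lei2015} operates. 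Substituting the sharper bound $d_{\max}\leq\alpha_n\ntaumax$ wherever $n\alpha_n$ is used then turns $\sqrt{n\alpha_n}$ into $\sqrt{\alpha_n\ntaumax}$.

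Concretely, I would bound $\norm{A-\mathbb{E}(A)}$, written as a supremum of $x^\top(A-\mathbb{E}(A))y$ over unit vectors $x,y\in\RR^n$, along the lines of the combinatorial argument used in~\cite{Lei2015}: discretise the sphere, and for each pair of net points split the bilinear form into a ``light'' part, with small individual contributions, bounded by a Bernstein estimate whose variance proxy $\sum_{ij}x_i^2y_j^2P_{ij}\leq\alpha_n\leq d_{\max}$ depends on $P$ only through $\alpha_n$ and $d_{\max}$; and a ``heavy'' part, bounded on the high-probability event that every subset degree sum of $A$ stays within a constant factor of its expectation. By Chernoff bounds, that event fails with probability polynomially small in $n$ precisely because the expected degrees exceed a fixed multiple of $\log n$, i.e. by \eqref{eq:improvedLeiHyp}. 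Since $\mathbb{E}(A)$ enters only through $\alpha_n$ and $d_{\max}$, and $\alpha_n\leq d_{\max}\leq\alpha_n\ntaumax$, this produces $\norm{A-\mathbb{E}(A)}\leq C_\nu\sqrt{\alpha_n\ntaumax}$ on the good event.

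Two minor points then finish the argument. For the passage from $\mathbb{E}(A)=P-\diag(P)$ to $P$: when $B_0$ is as in \eqref{eq:Bdef} one has $\norm{\diag(P)}=\alpha_n$, and \eqref{eq:improvedLeiHyp} gives $\alpha_n\ntaumax\geq\alpha_n\ntaumin\gtrsim\log n\geq 1$, so $\norm{\diag(P)}\leq\alpha_n\lesssim\sqrt{\alpha_n\ntaumax}$ and hence $\norm{A-P}\leq\norm{A-\mathbb{E}(A)}+\norm{\diag(P)}\lesssim\sqrt{\alpha_n\ntaumax}$, which is \eqref{eq:improvedLei} after enlarging $C_\nu$. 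For the probability $1-\sum_k n_k^{-\nu}$: rather than taking a single global union bound over all $n$ vertices in the degree-concentration step, I would run it community by community, so that the bad event attached to community $k$ has probability at most $n_k^{-\nu}$ (its vertices having expected degree $\gtrsim\log n\geq\log n_k$), and then union-bound over the $K$ communities.

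The only genuine obstacle is bookkeeping: one must check, line by line through the proof in~\cite{Lei2015}, that no factor of $n$ survives other than through $d_{\max}$ — equivalently, that every occurrence of $n\alpha_n$ can be replaced by $\alpha_n\ntaumax$ — and that the Chernoff/union-bound step really does localise to each community; there is no new idea beyond~\cite{Lei2015}. The reason the improvement stays confined to the relatively sparse regime, and cannot reach $\alpha_n\sim\frac1n$, is intrinsic to the model rather than to the proof: once $\alpha_n\ntaumin$ drops below a constant multiple of $\log n$, $A$ possesses vertices whose degree greatly exceeds its expectation, so $\norm{A-\mathbb{E}(A)}$ no longer concentrates at the scale $\sqrt{d_{\max}}$, which is exactly why \eqref{eq:improvedLeiHyp} is imposed.
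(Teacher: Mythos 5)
There is a genuine gap, and it sits exactly where you wave it off as ``bookkeeping''. In the $\epsilon$-net argument of \cite{Lei2015}, the light-pair contribution at a fixed pair of net points $(x,y)$ is controlled by Bernstein's inequality with variance proxy $\sum_{i,j}x_i^2y_j^2p_{ij}\leq\max_{ij}p_{ij}=\alpha_n$, and this proxy genuinely depends on $\alpha_n$ alone, not on $d_{\max}$ (take $x=y$ uniform on one community: the variance is $\asymp\alpha_n$). Since the net has cardinality $e^{\Theta(n)}$, the union bound forces a failure probability $e^{-\Omega(n)}$ per net point, and Bernstein with variance $\alpha_n$ delivers $e^{-\Omega(n)}$ only at the deviation level $\sqrt{n\alpha_n}$; at the level $\sqrt{d_{\max}}=\sqrt{\ntaumax\alpha_n}$ it gives only $e^{-\Omega(d_{\max}/\alpha_n)}=e^{-\Omega(\ntaumax)}$, which cannot beat the $14^{n}$ net points whenever $\ntaumax=o(n)$ --- i.e.\ precisely in the regime ($\tau\sim 1/K$, $K\to\infty$) where the proposition improves on \eqref{eq:concentrationLei}. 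Rescuing the step by enlarging the Bernstein constant costs a factor $\asymp n/\ntaumax$ in $c$, hence a factor $\sqrt{n/\ntaumax}$ in the final bound, which cancels the gain. So ``substituting $\alpha_n\ntaumax$ for $n\alpha_n$ wherever it appears'' breaks at the very first step; the target inequality $\norm{A-\mathbb{E}(A)}\lesssim\sqrt{d_{\max}}$ for $d_{\max}\gtrsim\log n$ is in fact true, but it is obtained by quite different tools (e.g.\ the moment-method bounds of \cite{Bandeira2016a}), not by a line-by-line edit of the combinatorial proof.

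The paper's proof does not reopen \cite{Lei2015} at all; instead it exploits the special form \eqref{eq:Bdef} of $B_0$ --- which is why the proposition is stated only for that $B_0$, a restriction your argument would not need. Order the nodes by community and split $A-P=(A'-P')+(A''-P'')$, where $A'$ collects the $K$ diagonal blocks. Each diagonal block is an $n_k\times n_k$ Bernoulli matrix with constant probability $\alpha_n$, so \eqref{eq:concentrationLei} applied blockwise (this is where the $1-n_k^{-\nu}$ terms come from) gives $\norm{A'-P'}=\max_k\norm{A_k-P_k}\lesssim\sqrt{n_{\max}\alpha_n}$, the equality holding because $A'-P'$ is block diagonal; the off-diagonal residual $A''$ is itself generated by an SBM with maximal probability $\tau\alpha_n$, so \eqref{eq:concentrationLei} gives $\norm{A''-P''}\lesssim\sqrt{\tau n\alpha_n}$; and $\sqrt{n_{\max}\alpha_n}+\sqrt{\tau n\alpha_n}\lesssim\sqrt{((1-\tau)n_{\max}+\tau n)\alpha_n}=\sqrt{\ntaumax\alpha_n}$. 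Two black-box invocations of the static bound and a union bound suffice; your per-community localisation of the degree bound is a reasonable instinct, but the clean way to realise it is this block decomposition rather than surgery inside the net argument.
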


\begin{proof}
Denote by $S_1,\ldots,S_K \subset [n]$ the subset of indices of each community, assume without lost of generality that the nodes are ordered such that the $S_k$ are consecutive in $[n]$, that is, $S_1=\{1,\ldots,n_1\}$, $S_2=\{n_1+1,\ldots, n_1+n_2\}$, and so on. Define $A_k = A_{S_k, S_k} \in \set{0,1}^{n_k \times n_k}$ the adjacency matrix of the subgraph of nodes from the $k$th community. Note that by our assumption on $B$ we have $P_k = P_{S_k, S_k} = \alpha_n 1_{n_k} 1_{n_k}^\top$. Denote $A'\in\set{0,1}^{n \times n}$ the block matrix containing the $A_k$ on its diagonal of blocks, similarly $P'$, and $A'' = A-A'$, $P'' = P-P'$. We have
\[
\norm{A-P} \leq \norm{A'-P'} + \norm{A''-P''} = \max_k \norm{A_k - P_k} +\norm{A''-P''}
\]
where the equality is valid because $A'-P'$ is a block diagonal matrix. From Lei and Rinaldo's result above, for each $k$, if $\alpha_n \gtrsim \frac{\log n_k}{n_k}$, then with probability at least $1-n_k^{-\nu}$ it holds that $\norm{A_k - P_k} \lesssim \sqrt{n_k \alpha_n}$, such that $\norm{A'-P'} \lesssim \sqrt{n_{\max}\alpha_n}$. For the second term, we note that $A''$ is an adjacency matrix generated by the SBM corresponding to $P''$, whose maximal probability is $\tau\alpha_n$. Hence, if $\tau \alpha_n \gtrsim \frac{\log n}{n}$, then with probability $1-n^{-\nu}$ we have $\norm{A''-P''} \lesssim \sqrt{\tau n\alpha_n}$. We conclude with a union bound.
\end{proof}

This Lemma provides a better error rate than~\cite{Lei2015} when $\tau$ goes to $0$ with $K$, at the price of requiring a higher $\alpha_n$. For instance, when the communities sizes are balanced $n_k \sim \frac{n}{K}$, and we have $\tau \sim \frac{1}{K}$ and $\alpha_n \sim \frac{K\log n}{n}$, Lei and Rinaldo's rate \eqref{eq:concentrationLei} yields $E(\hat \Theta, \Theta) \lesssim \frac{K}{\log n}$ and converge only for $K=o(\log n)$, while using Proposition \ref{prop:improvedLei} we get $E(\hat \Theta, \Theta) \lesssim \frac{1}{\log n}$. The latter does not depend on $K$, which may grow with any rate in the number of nodes (recalling that $\tau$ and $\alpha_n$ depend on $K$, and that there must be a $\nu>0$ such that $K^{\nu +1} n^{-\nu}\to 0$ to obtain a probability rate that goes to $1$).

\subsection{Dynamic case}

In~\cite{Pensky2019}, Pensky and Zhang analyze the dynamic case with Lei and Rinaldo's proof technique. They consider the deterministic DSBM model in the almost sparse case $\alpha_n\gtrsim \frac{\log n}{n}$ and the uniform estimator \eqref{eq:estimatorUnif}. Defining a factor
\begin{equation}\label{eq:rhodefPZ}
\rhoPZ_n = \min(1, \sqrt{n\alpha_n \varepsilon_n})\, ,
\end{equation}
they show that, for an optimal choice of window size $r \sim \frac{1}{\rhoPZ_n}$, it holds that
\begin{equation}\label{eq:penskybound}
\norm{\Aunif_t - P_t} \lesssim \sqrt{n \alpha_n\rhoPZ_n}
\end{equation}
In particular, the concentration is better if $\rhoPZ_n = o(1)$, that is:
\begin{equation}\label{eq:epsilon_penskycondition}
\varepsilon_n = o\pa{\frac{1}{\alpha_n n}}\, .
\end{equation}
In other words, there is an improvement if we assume sufficient \emph{smoothness} in time, which then leads to a better error rate $E(\hat \Theta, \Theta) \lesssim \frac{K^2 \rhoPZ_n}{\alpha_n n}$ when using $\Aunif_t$ in the SC algorithm. Note that, with this proof technique a constant smoothness $\varepsilon_n \sim 1$ does not improve the error rate (see remark \ref{rem:constant_eps}).

We remark that, despite the assumption on the smoothness and the availability of more data, the result above still assumes the relative sparse case. However, with sufficient smoothness, it should be possible to weaken the hypothesis made on the sparsity $\alpha_n$, since intuitively, if there is more data available where the communities are almost the same as the present time step, the density of edges should not need to be as large. We solve this in the following theorem, which is the central contribution of this paper.
\begin{theorem}\label{thm:main_adj}
Consider the deterministic DSBM with any $B_0$. Define
\begin{equation}\label{eq:rhodef}
\rho_n \eqdef \min\pa{1, \sqrt{\ntaumax \alpha_n \varepsilon_n}}
\end{equation}
Assume $t \geq t_{\min}\eqdef \frac{\log\pa{\frac{\rho_n}{\alpha_n n}}}{2\log(1-\rho_n)}$, and
\begin{equation}
\label{eq:cond_alpha_main}
\frac{\alpha_n}{\rho_n} \gtrsim \frac{\log n}{n}\, .
\end{equation}
Consider either the uniform estimator $\Asmooth_t = \Aunif_t$ with $r \sim \frac{1}{\rho_n}$ or the exponential estimator $\Asmooth_t = \Aexp_t$ with $\lambda \sim \rho_n$.

For all $\nu>0$, there is a universal constant $C_\nu$ such that, with probability at least $1-n^{-\nu}$, it holds that
\begin{equation}\label{eq:main_adj}
\norm{\Asmooth_t - P_t} \leq C_\nu \sqrt{n\alpha_n \rho_n} .
\end{equation}
\end{theorem}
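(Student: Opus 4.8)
The plan is to run the standard bias--variance decomposition for spectral concentration, handling the two smoothing schemes in parallel. For the deterministic DSBM the memberships are non-random, so $\mathbb{E}[A_{t-k}]$ equals $P_{t-k}$ up to its diagonal (whose spectral norm is at most $\alpha_n$, and which I carry along silently). Writing $\Psmooth_t=\mathbb{E}[\Asmooth_t]$ for the correspondingly smoothed mean, I would bound
\[
\norm{\Asmooth_t - P_t} \;\le\; \norm{\Asmooth_t - \Psmooth_t} \;+\; \norm{\Psmooth_t - P_t}
\]
and show each term is $\order{\sqrt{n\alpha_n\rho_n}}$, the first with probability at least $1-n^{-\nu}$, the second deterministically.

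For the \textbf{bias} term I would first prove the elementary estimate $\norm{P_{t-k}-P_t}\lesssim\alpha_n\sqrt{\ntaumax\min(ks,n)}$, where $s=\varepsilon_n n$. Indeed, since at most $s$ nodes change community per step, $P_{t-k}-P_t$ is supported on the rows and columns indexed by the (at most $\min(ks,n)$) nodes that moved between times $t-k$ and $t$; it therefore has rank $\order{\min(ks,n)}$, and each of its rows has squared Euclidean norm at most $4\alpha_n^2\ntaumax$ because the community sizes stay in $[n_{\min},n_{\max}]$ at every time step, so that every row of every $P_{t'}$ has squared norm at most $\alpha_n^2\ntaumax$ by the definition \eqref{eq:ntaudef} of $\ntaumax$. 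Summing this against the smoothing weights --- $1/r$ over $k=0,\dots,r-1$ in the uniform case, $\lambda(1-\lambda)^k$ in the exponential case --- with $\sum_k\sqrt k$-type partial sums and the calibrations $r\sim1/\rho_n$, $\lambda\sim\rho_n$, one gets $\norm{\Psmooth_t-P_t}\lesssim\alpha_n\sqrt{\ntaumax\varepsilon_n n/\rho_n}$, which equals $\sqrt{n\alpha_n\rho_n}$ by the identity $\rho_n^2=\ntaumax\alpha_n\varepsilon_n$ valid when $\rho_n<1$ (and the bias vanishes when $\rho_n=1$, where $\Asmooth_t$ is essentially $A_t$). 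The hypothesis $t\ge t_{\min}$ enters exactly here: for the uniform estimator it guarantees the window is available ($t\ge r-1$), and for the exponential estimator it forces the residual weight on the initialization $\Aexp_0=A_0$ down to $(1-\lambda)^t\le\sqrt{\rho_n/(\alpha_n n)}$, making the tail contribution $(1-\lambda)^t\norm{P_0-P_t}\le 2(1-\lambda)^t\alpha_n\ntaumax$ also $\order{\sqrt{n\alpha_n\rho_n}}$; a geometric-tail computation disposes of the intermediate far-past terms.

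The core of the argument is the \textbf{stochastic} term. For the uniform estimator, $r(\Asmooth_t-\Psmooth_t)=\sum_{k=0}^{r-1}(A_{t-k}-\mathbb{E}A_{t-k})$ is a sum of independent, centred, symmetric matrices with bounded entries --- the centred adjacency matrix of the multigraph $\sum_k A_{t-k}$, whose maximal entrywise mean is $\le r\alpha_n$, hence ``effective maximum expected degree'' $\tilde d\le nr\alpha_n$; for the exponential estimator $\lambda^{-1}(\Aexp_t-\Pexp_t)$ is a geometrically weighted analogue of the same object with $\tilde d\asymp n\alpha_n/\lambda$ (up to the $A_0$ term, again controlled by $t\ge t_{\min}$). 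I would bound the spectral norm of such (weighted) sums by a Feige--Ofek / Lei--Rinaldo-type discrepancy argument: an $\varepsilon$-net reduction for $\norm{\cdot}$, a split of the resulting bilinear forms into ``light'' and ``heavy'' pairs relative to $\tilde d$, a Bernstein bound on the light part, and a degree-counting bound on the heavy part. The reason one cannot simply quote a matrix Bernstein inequality is that the almost-sure bound on the entries ($\le r$, resp.\ $\le\lambda^{-1}$) is far too crude; the entries being in fact (weighted) Poisson--binomial, they concentrate much better, and the heavy-pairs step must use Chernoff/Bennett bounds tailored to this. This is precisely the general concentration machinery announced in the abstract, and its hypothesis reduces to $\tilde d\gtrsim\log n$ --- that is, exactly \eqref{eq:cond_alpha_main}. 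Its conclusion is $\norm{\sum_k w_k(A_{t-k}-P_{t-k})}\lesssim\sqrt{\tilde d}$, so dividing by $r$ (resp.\ multiplying by $\lambda$) gives $\norm{\Asmooth_t-\Psmooth_t}\lesssim\frac{1}{r}\sqrt{nr\alpha_n}=\sqrt{n\alpha_n/r}\asymp\sqrt{n\alpha_n\rho_n}$ (resp.\ $\lambda\sqrt{n\alpha_n/\lambda}\asymp\sqrt{n\alpha_n\rho_n}$).

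Combining the two bounds and taking a union bound over the favourable event gives $\norm{\Asmooth_t-P_t}\le C_\nu\sqrt{n\alpha_n\rho_n}$ with probability at least $1-n^{-\nu}$ after renaming constants. The main obstacle is the stochastic term: pushing the Feige--Ofek/Lei--Rinaldo discrepancy argument through for the \emph{smoothed} (multigraph, resp.\ exponentially weighted) matrix while keeping the dependence on the window length sharp, so that the $1/r$ (resp.\ $\lambda$) prefactor exactly cancels the extra $\sqrt r$ (resp.\ $\lambda^{-1/2}$) inside $\sqrt{\tilde d}$ --- this is exactly what trades a mildly stronger smoothness assumption for a weaker sparsity assumption. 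A secondary nuisance is the exponential-weight bookkeeping (non-uniform weights, the $A_0$ initialization, geometric tails) and the routine verification that all ``far past'' contributions, to both the bias and the variance, are of lower order.
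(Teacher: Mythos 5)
Your proposal follows essentially the same route as the paper: the same bias--variance decomposition $\norm{\Asmooth_t-P_t}\le\norm{\Asmooth_t-\Psmooth_t}+\norm{\Psmooth_t-P_t}$, the same Frobenius-type bound $\norm{P_{t-k}-P_t}_F^2\lesssim\alpha_n^2\ntaumax\min(ks,n)$ summed against the smoothing weights for the bias, and the same Lei--Rinaldo/Feige--Ofek light-and-heavy-pairs machinery for the stochastic term, with the correct identification that the bounded-degree step is where the smoothing window relaxes the sparsity requirement to \eqref{eq:cond_alpha_main}. The only cosmetic difference is that you rescale the smoothed matrix into a multigraph with effective degree $\tilde d$ rather than carrying the weight conditions $(\betamax,\Cbeta,\CCbeta)$ through the concentration theorem as the paper does; the calibration $\betamax\sim\rho_n$ and the final rate are identical.
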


In this theorem, we improve over \cite{Pensky2019} in several ways. First, we improve $\rhoPZ_n$ to $\rho_n$ by replacing $n$ with $\ntaumax\leq n$. In the case where $\sum_\ell (B_0)_{k\ell}$ stays bounded, for instance if it is defined as \eqref{eq:Bdef} with $\tau \sim \frac{1}{K}$, we have $\ntaumax \sim \frac{n}{K}$ and this improves the bound \eqref{eq:main_adj} compared to \eqref{eq:penskybound}. We also extend the result to the exponential estimator with the right choice of forgetting factor.

More importantly, the main feature of our result is the weaker condition \eqref{eq:cond_alpha_main}, which relates the sparsity and the smoothness of the DSBM. Strinkingly, if
\begin{equation}\label{eq:epsilon_ourcondition}
\varepsilon_n \sim \frac{n/\ntaumax}{\log^2 n}\, ,
\end{equation}
which is a slight strengthening of \eqref{eq:epsilon_penskycondition}, then our result is valid in the sparse regime $\alpha_n \sim \frac{1}{n}$, which is a significant improvement compared to previous works. In any case, if we have exactly $\frac{\alpha_n}{\rho_n} \sim \frac{\log n}{n}$, then as previously Lemma \ref{lem:reco} yields that $E(\hat \Theta, \Theta) \to 0$ when $K = o(\sqrt{\log n})$.

\begin{proposition}\label{prop:markov_adj}
The result of Theorem \ref{thm:main_adj} stays valid under the Markov DSBM, by replacing $\ntaumax$ with $n$ everywhere, and assuming
\begin{equation}\label{eq:cond_epsilon}
\epsilon_n \gtrsim \sqrt{\frac{\log n}{n}}
\end{equation}
(in this case, ``with probability at least $1-n^{-\nu}$'' refers to joint probability on both the $A_t$ and the $\Theta_t$).
\end{proposition}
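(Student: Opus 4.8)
The plan is to \emph{condition on the realized membership sequence} $\Theta_0,\dots,\Theta_t$ and reduce to the deterministic DSBM already treated in Theorem~\ref{thm:main_adj}. Conditionally on $(\Theta_s)_{s\le t}$, the adjacency matrices $A_0,\dots,A_t$ are drawn independently, each from an SBM with memberships $\Theta_s$ and connectivity $B$, that is, exactly as in a deterministic DSBM with that (now fixed) sequence; hence Theorem~\ref{thm:main_adj} applies verbatim \emph{provided} the realized sequence obeys the structural assumptions built into that model. Concretely, let $\mathcal{G}$ be the event, depending only on $(\Theta_s)_s$, on which: (i) for every pair of times $t-k$ and $t$ with $0\le k\lesssim 1/\rho_n$, at most $\order{k\varepsilon_n n}$ nodes belong to a different community at time $t$ than at time $t-k$ (so in particular at most $\order{\varepsilon_n n}$ nodes change between any two consecutive steps of the smoothing window); and (ii) all community sizes at these times stay of order $n/K$, so that the realized sequence is a genuine instance of the deterministic DSBM and we may use the trivial bound $\ntaumax\le n$. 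Since $t\ge t_{\min}$, the relevant window (of length $\order{1/\rho_n}$ for $\Aunif_t$, and effectively $\order{\log n/\rho_n}$ for $\Aexp_t$ because of the geometrically decaying weights) reaches back to time $0$, so $\mathcal{G}$ involves only finitely many time steps.

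The heart of the argument is to show $\mathbb{P}(\mathcal{G})\ge 1-n^{-\nu'}$. Here one exploits the defining feature of the Markov DSBM that the $n$ individual membership trajectories are \emph{independent across nodes}. Thus, for fixed $t-k$ and $t$, the number of nodes that differ at lag $k$ is a sum of $n$ independent Bernoulli variables of mean $n\,\mathbb{P}(\text{a given node moves over } k \text{ steps})\asymp n\min(1,k\varepsilon_n)$, and a Chernoff/Bernstein bound controls its upper deviation with probability at least $1-e^{-c\,n\min(1,k\varepsilon_n)}$; similarly each community size is a sum of $n$ independent indicators concentrating around $n/K$. One then takes a union bound over the $\order{\log n/\rho_n}$ relevant time steps, the $\order{1/\rho_n}$ lags, and the $K$ communities. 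This is precisely where the hypothesis $\varepsilon_n\gtrsim\sqrt{\log n/n}$ is used: it makes the tail exponents large enough that, even after this union bound over a window that may contain many time steps, the total failure probability stays below $n^{-\nu'}$.

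Finally, on $\mathcal{G}$ we are in the setting of Theorem~\ref{thm:main_adj} with $\ntaumax$ replaced by $n$ — hence $\rho_n=\min\pa{1,\sqrt{n\alpha_n\varepsilon_n}}$, the stated choices $r\sim1/\rho_n$ or $\lambda\sim\rho_n$, and the condition $\frac{\alpha_n}{\rho_n}\gtrsim\frac{\log n}{n}$ — and that theorem gives, for each fixed membership sequence in $\mathcal{G}$, $\norm{\Asmooth_t-P_t}\le C_\nu\sqrt{n\alpha_n\rho_n}$ with conditional probability at least $1-n^{-\nu}$ over $A_0,\dots,A_t$. Averaging over the membership sequence and intersecting with $\mathcal{G}$ yields the bound with \emph{joint} probability at least $(1-n^{-\nu})\mathbb{P}(\mathcal{G})\ge 1-n^{-\nu}-n^{-\nu'}$ over both the $A_s$ and the $\Theta_s$, and relabelling $\nu$ absorbs the two terms. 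The main obstacle is the middle step: one must match the \emph{random} per-step regularity of the Markov model to the single \emph{deterministic} parameter $\varepsilon_n$ demanded by Theorem~\ref{thm:main_adj}, and do so \emph{uniformly} over a window that may comprise many time steps — the regime of small lags $k$, where the displacement count has small mean and is therefore only weakly concentrated in relative terms, being the one that forces the reduction to assume the strengthened smoothness $\varepsilon_n\gtrsim\sqrt{\log n/n}$.
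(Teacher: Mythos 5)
Your overall route is the same as the paper's: condition on the membership trajectories, observe that the concentration of $\Asmooth_t$ around $\Psmooth_t$ holds conditionally on \emph{any} realization of the $\Theta_s$ (hence jointly, by the law of total probability), and control the bias term $\norm{\Psmooth_t-P_t}$ by showing that, with high probability over the Markov dynamics, the number of nodes whose membership at time $t-k$ differs from that at time $t$ is $\order{n\min(1,k\varepsilon_n)}$, exploiting the independence of the $n$ trajectories and a union bound over lags. (Note that only lags $k\lesssim 1/\varepsilon_n$ need probabilistic control: for larger $k$ the deterministic bound $\norm{P_{t-k}-P_t}_F^2\leq 4\alpha_n^2n^2$ already matches $\min(1,k\varepsilon_n)=1$, which is how the paper handles the full window of the exponential estimator; your restriction to $\order{1/\rho_n}$ lags is not quite the right cutoff.)

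Two points need fixing. First, item (ii) of your good event --- that all community sizes stay of order $n/K$ --- is both unnecessary and, as stated, unprovable: the Markov DSBM places no assumption on $\Theta_0$ and the chain need not be near stationarity at time $t$, so nothing forces balanced communities, and $\mathbb{P}(\mathcal{G})$ could be small or zero. The paper replaces $\ntaumax$ by $n$ precisely to avoid any community-size control (the bias bound then only uses $p_{ij}\le\alpha_n$), so you should simply drop (ii). Second, your probability accounting does not match your tool: a multiplicative Chernoff/Bernstein bound on the lag-$k$ displacement count has exponent of order $n\min(1,k\varepsilon_n)\geq n\varepsilon_n$, which is already polynomially small under the weaker condition $\varepsilon_n\gtrsim\frac{\log n}{n}$; so \eqref{eq:cond_epsilon} is not ``precisely where'' your argument needs the hypothesis (it is of course sufficient). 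The paper instead applies an additive Hoeffding bound with deviation $\delta\sim\varepsilon_n$ to the fraction of unmoved nodes (Lemma~\ref{lem:error_second_markov}), whose exponent $\varepsilon_n^2 n$ is exactly what forces $\varepsilon_n\gtrsim\sqrt{\log n/n}$. Written out carefully, your multiplicative variant would in fact establish the proposition under a weaker smoothness assumption --- worth noting, but it means your explanation of the role of \eqref{eq:cond_epsilon} is not the correct one for either proof.
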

The above Lemma shows that the Markov DSBM yields the exact same error bounds than the deterministic DSBM model, but since we do not assume a maximal community size here, $\ntaumax$ is replaced with $n$. Furthermore, $\varepsilon_n$ cannot be too small to still obtain a polynomial probability of failure. Nevertheless, the condition \eqref{eq:cond_epsilon} is much weaker than the rate \eqref{eq:epsilon_ourcondition} for instance, such that the sparse regime with sufficient smoothness if still valid.

\begin{remark}\label{rem:constant_eps}
As already observed in~\cite{Pensky2019}, with this proof technique, a constant $\varepsilon_n$, or in other words, a fraction of changing nodes $s$ that grows linearly with $n$, does not result in an improvement of the rate of the error bounds compared to the static case. Following the statistical physic approach in the sparse static case \cite{Krzakala2013, Mossel2017, Abbe2018}, a conjecture on the detectability threshold in the sparse case and $\varepsilon_n\sim 1$ has been formulated in \cite{Ghasemian2016}, but the proof is still open. Note that, as mentioned before, even in the static case, this analysis does not cover classic SC algorithm, or the case $K>2$.
\end{remark}

\section{Spectral concentration of the normalized Laplacian}\label{sec:Lapl}

As mentioned in the introduction, the spectral concentration of the normalized Laplacian has been less studied than the adjacency matrix, even in the static case. Many works study the asymptotic spectral convergence of the normalized Laplacian in the dense case \cite{VonLuxburg2008}, but few examine non-asymptotic bounds.

\subsection{Static case} Among the few existing bounds, \cite{Oliveira2009} proves a concentration in $\order{1}$ in the relatively sparse case, and~\cite{Rohe2011} proves a concentation in Frobenius norm but with the stronger condition $\alpha_n \gtrsim \frac{1}{\sqrt{\log n}}$. An important corollary of our study of the dynamic case is to significantly improves over these results, and obtain, to our knowledge, the best bound available in the relatively sparse case. We state the following proposition for any Bernoulli matrix (not necessarily SBM).

\begin{proposition}[Normalized Laplacian, static case]\label{prop:static_Lapl}
Let $A$ be a symmetric matrix with independent entries $a_{ij} \sim Ber(p_{ij})$. Assume $p_{ij} \leq \alpha_n$, and that there is $\ntaumin,\ntaumax$ such that for all $i$, $\alpha_n \ntaumin \leq \sum_j p_{ij}\leq \alpha_n \ntaumax$, and $\commratio = \frac{\ntaumax}{\ntaumin}$. For all $\nu >0$, there are constants $C_\nu, C'_\nu$ such that: if
\begin{equation}\label{eq:cond_alpha_static_Lapl}
\alpha_n \geq C'_\nu \commratio \frac{\log n}{\ntaumin}
\end{equation}
then with probability at least $1-n^{-\nu}$ we have
\[
\norm{L(A) - L(P)} \lesssim \frac{C_\nu \commratio \sqrt{n}}{\ntaumin \sqrt{\alpha_n}}
\]
\end{proposition}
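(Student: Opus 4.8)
The plan is to split $L(A)-L(P)$ into a ``noise'' term driven by the fluctuations $A-P$ and a ``degree'' term driven by the fluctuations of the empirical degrees, and bound the two separately; essentially all of the difficulty lies in the second. Write $P=\mathbb{E}(A)$, let $\bar d_i=\sum_j p_{ij}$ be the expected degrees (so that $\alpha_n\ntaumin\le\bar d_i\le\alpha_n\ntaumax$), set $D=D(P)=\diag(\bar d_i)$, $\hat D=D(A)$, and introduce the diagonal matrix $R\eqdef D^{1/2}\hat D^{-1/2}-\Id=\diag\pa{\sqrt{\bar d_i/d_i}-1}$, whose entries I call $r_i$, so that $\hat D^{-1/2}=D^{-1/2}(\Id+R)$ and hence $\hat D^{-1/2}P\hat D^{-1/2}=(\Id+R)L(P)(\Id+R)$. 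This yields the decomposition
\[
L(A)-L(P)=\underbrace{\hat D^{-1/2}(A-P)\hat D^{-1/2}}_{=:\,\mathcal N}+\underbrace{RL(P)+L(P)R+RL(P)R}_{=:\,\mathcal D}\,.
\]

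First I would assemble the probabilistic inputs. Since hypothesis \eqref{eq:cond_alpha_static_Lapl} forces $\min_i\bar d_i=\alpha_n\ntaumin\gtrsim\commratio\log n\gtrsim\log n$, a Chernoff bound together with a union bound over the $n$ vertices give, for $C'_\nu$ large enough and with probability at least $1-\frac12 n^{-\nu}$, that $|d_i-\bar d_i|\le C\sqrt{\bar d_i\log n}\le\frac12\bar d_i$ for every $i$. On this event $L(A)$ is well defined, $\norm{R}\le\frac12$, and, using $|r_i|\le|\bar d_i/d_i-1|\le 2|d_i-\bar d_i|/\bar d_i$,
\[
\norm{R}_F^2=\sum_i r_i^2\le\frac{4}{(\alpha_n\ntaumin)^2}\sum_i(d_i-\bar d_i)^2=\frac{4}{(\alpha_n\ntaumin)^2}\norm{(A-P)1_n}_2^2\le\frac{4n}{(\alpha_n\ntaumin)^2}\norm{A-P}^2 .
\]
Intersecting with the event $\norm{A-P}\le C'_\nu\sqrt{\alpha_n\ntaumax}$ --- which holds with probability at least $1-\frac12 n^{-\nu}$ under \eqref{eq:cond_alpha_static_Lapl} by the refined adjacency concentration bound for inhomogeneous Bernoulli matrices, the general-$p_{ij}$ analogue of Proposition~\ref{prop:improvedLei} and exactly where the factor $\commratio$ in \eqref{eq:cond_alpha_static_Lapl} enters --- I obtain $\norm{R}_F^2\lesssim n\ntaumax/(\alpha_n\ntaumin^2)$, and from now on work on this joint good event, of probability at least $1-n^{-\nu}$.

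On this event the noise term is immediate: $\norm{\mathcal N}\le\norm{\hat D^{-1/2}}^2\norm{A-P}=\norm{A-P}/\min_i d_i\lesssim\sqrt{\alpha_n\ntaumax}/(\alpha_n\ntaumin)=\ntaumin^{-1}\sqrt{\ntaumax/\alpha_n}\le\sqrt n/(\ntaumin\sqrt{\alpha_n})$ (using $\ntaumax\le n$), already inside the claimed bound. For the degree term, $\norm{R}\le\frac12$ and $\norm{L(P)R}=\norm{RL(P)}$ give $\norm{\mathcal D}\le 3\norm{RL(P)}$, and the key point is that the submultiplicative estimate $\norm{RL(P)}\le\norm{R}\norm{L(P)}$ is too weak --- it is only $\order{1}$ when $\alpha_n\sim\log n/n$ --- so instead I bound $\norm{RL(P)}$ through its Frobenius norm, which is governed by $\norm{R}_F^2=\sum_i r_i^2$ (a sum over all vertices, not $n$ times the worst one):
\[
\norm{RL(P)}^2\le\norm{RL(P)}_F^2=\sum_{i,j}\frac{r_i^2 p_{ij}^2}{\bar d_i\bar d_j}\le\frac{1}{\alpha_n\ntaumin}\sum_i\frac{r_i^2}{\bar d_i}\sum_j p_{ij}^2\le\frac{1}{\ntaumin}\sum_i r_i^2=\frac{\norm{R}_F^2}{\ntaumin}\,,
\]
using $p_{ij}\le\alpha_n$ and $\sum_j p_{ij}=\bar d_i$. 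Plugging in the bound on $\norm{R}_F^2$ gives $\norm{RL(P)}\lesssim\sqrt{n\ntaumax/(\alpha_n\ntaumin^3)}=\sqrt{\commratio}\,\sqrt n/(\ntaumin\sqrt{\alpha_n})\le\commratio\sqrt n/(\ntaumin\sqrt{\alpha_n})$. Summing the bounds for $\mathcal N$ and $\mathcal D$ yields the statement.

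The step I expect to be the main obstacle is the degree term $\mathcal D$. When $\alpha_n\sim\log n/n$ the expected degrees are only of order $\log n$, so the empirical degrees fluctuate by constant multiplicative factors, $\norm{R}$ is $\order{1}$, and no submultiplicative estimate on $\mathcal D$ can beat $\order{1}$ --- which is why previous works stop short of a vanishing Laplacian bound in this regime. The resolution is the averaged (Frobenius) control of $R$ above, and this in turn forces the use of the sharp $\sqrt{\alpha_n\ntaumax}$ adjacency bound rather than the cruder $\sqrt{\alpha_n n}$; proving that refined adjacency concentration under hypothesis \eqref{eq:cond_alpha_static_Lapl}, with the $\commratio$-dependence made explicit, is the one genuinely technical ingredient.
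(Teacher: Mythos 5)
Your overall architecture is sound and genuinely different from the paper's. The paper deduces this proposition from Theorem~\ref{thm:concentration_Lapl}: it decomposes via Lemma~\ref{lem:technicalLaplacianBound} into $\norm{A-P}/d_{\min}$ plus $\norm{(D-D_P)P}/d_{\min}^2$, bounds the first term with the \emph{crude} spectral estimate $\norm{A-P}\lesssim\sqrt{n\alpha_n}$ of Theorem~\ref{thm:concentration}, and controls the degree term by rerunning the whole light/heavy-pairs machinery on the bilinear form $\sum_{ij}x_iy_jp_{ij}\delta_i$ (Bernstein for the light pairs, the already-established heavy-pair bound times $\abs{\delta_i}\le d_{\max}$ for the heavy ones). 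Your replacement of that second rerun by the Frobenius control $\norm{RL(P)}_F^2\le\norm{R}_F^2/\ntaumin$ is correct (I checked the algebra: the decomposition $L(A)-L(P)=\mathcal N+\mathcal D$, the bound $\norm{\mathcal D}\le 3\norm{RL(P)}$ on the good degree event, and the chain $\sum_{ij}r_i^2p_{ij}^2/(\bar d_i\bar d_j)\le\norm{R}_F^2/\ntaumin$ all hold) and is arguably cleaner than the paper's.

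The gap is the input you feed into $\norm{R}_F^2\le 4n\norm{A-P}^2/(\alpha_n\ntaumin)^2$. You need $\norm{A-P}\lesssim\sqrt{\alpha_n\ntaumax}$, i.e.\ concentration at the scale of the maximal \emph{expected degree}, for a general inhomogeneous Bernoulli matrix. This is not supplied by the paper: Theorem~\ref{thm:concentration} only gives $\sqrt{n\alpha_n}$, and with that weaker input your degree term becomes $\sqrt{n^2/(\alpha_n\ntaumin^3)}$, which exceeds the target $\commratio\sqrt{n}/(\ntaumin\sqrt{\alpha_n})$ by a factor $\sqrt{n\ntaumin}/\ntaumax$ — e.g.\ a loss of $\sqrt{K}$ for balanced communities with $\ntaumax\sim\ntaumin\sim n/K$. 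Moreover, the route you point to, "the general-$p_{ij}$ analogue of Proposition~\ref{prop:improvedLei}", does not exist as stated: that proposition's proof splits $A$ into homogeneous diagonal blocks plus a uniformly $\tau\alpha_n$-bounded remainder, which relies entirely on the block structure of $B_0$ in \eqref{eq:Bdef} and has no counterpart for arbitrary $p_{ij}$ constrained only by $p_{ij}\le\alpha_n$ and $\sum_jp_{ij}\le\alpha_n\ntaumax$. A bound of the form $\norm{A-P}\lesssim\sqrt{d_{\max}}$ under $d_{\max}\gtrsim_\nu\log n$ is true and available in the literature (Bandeira--van Handel-type estimates), so your proof can be completed by citing it, but as written this step is asserted rather than proved, and it is precisely the step the paper's own argument is engineered to avoid. (Minor additional remark: your claim that the factor $\commratio$ in \eqref{eq:cond_alpha_static_Lapl} enters through this adjacency bound is not right either — such bounds need only $d_{\max}\gtrsim\log n$; in the paper $\commratio$ arises in the bounded-degree lemma because the fluctuation of $d_i$ has variance $\sim d_{\max}$ but must be small relative to $d_{\min}$.)
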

\begin{proof}
This is a direct consequence of Theorem \ref{thm:concentration_Lapl} in Section \ref{sec:proof}.
\end{proof}
In other words, when $\ntaumin \sim n$ (for instance when all the $p_{ij} /\alpha_n$ are bounded below), then in the relatively sparse case the spectral concentration of the normalized Laplacian is in $\frac{1}{\sqrt{\log n}}$, which is a strict improvement over existing bounds.

Let us comment a bit on the condition \eqref{eq:cond_alpha_static_Lapl}. When $\ntaumin = o(n)$ or $\commratio^{-1} = o(1)$, it is stronger than the relatively sparse case. The attentive reader would also remark the subtle interplay of the quantifiers with the rate $\nu$: in the analysis of the adjacency matrix in the previous section, any multiplicative constant between $\alpha_n$ and $\frac{\log n}{n}$ was acceptable, and the rate $\nu$ only forced a multiplicative constant $C_\nu$ in the final error bound. Here, the rate $\nu$ also imposes a multiplicative constant $C'_\nu$ in the sparsity hypothesis.

\subsection{Dynamic case} 

To our knowledge, the normalized Laplacian in the DSBM has never been studied theoretically. Our result is the following.

\begin{theorem}\label{thm:main_Lapl}
Consider the deterministic DSBM with $B$ satisfying \eqref{eq:Bdef}, and either the uniform estimator $\Asmooth_t = \Aunif_t$ with $r \sim \frac{1}{\rho_n}$ or the exponential estimator $\Asmooth_t = \Aexp_t$ with $\lambda = \rho_n$. Assume $t\geq t_{\min}$.

For all $\nu>0$, there exist universal constants $C_\nu, C'_\nu >0$ such that: if
\begin{equation}\label{eq:cond_alpha_Lapl}
\frac{\alpha_n}{\rho_n} \geq C'_\nu \commratio \frac{\log n}{\ntaumin}
\end{equation}
then with probability at least $1-n^{-\nu}$, it holds that
\begin{equation}\label{eq:bound_Lapl}
\norm{L(\Asmooth_t) - L(P_t)} \leq C_\nu \commratio \sqrt{\frac{n \rho_n}{\ntaumin^2 \alpha_n}}\, .
\end{equation}

\end{theorem}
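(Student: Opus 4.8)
The plan is to obtain~\eqref{eq:bound_Lapl} from the adjacency bound of Theorem~\ref{thm:main_adj} together with a deterministic perturbation lemma for the normalized Laplacian — this is the role of Theorem~\ref{thm:concentration_Lapl} (the same statement that yields Proposition~\ref{prop:static_Lapl}) — so that the only genuinely new work is to control the degrees of $\Asmooth_t$. Write $M=\Asmooth_t$, $P=P_t$, $D=D(P)$, and $D(M)^{-\frac12}=D^{-\frac12}(\Id+R)$ with $R$ the diagonal matrix of entries $R_{ii}=\sqrt{d_i(P)/d_i(M)}-1$. Then
\[
L(M)-L(P)=(\Id+R)D^{-\frac12}(M-P)D^{-\frac12}(\Id+R)+RL(P)+L(P)R+RL(P)R,
\]
and I denote by $T_1$ the first summand and by $T_2$ the sum of the last three. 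One has $\norm{T_1}\lesssim(1+\norm R)^2\,\norm{M-P}/\min_i d_i(P)$, where $\min_i d_i(P)\ge\alpha_n\ntaumin$ holds \emph{deterministically} by~\eqref{eq:ntaudef}. For $T_2$, the crude bound $\norm{T_2}\lesssim\norm{R}\,\norm{L(P)}$ costs a spurious $\sqrt{\log n}$; instead I would control $\norm{RL(P)}$ (and $\norm{L(P)R}=\norm{RL(P)}$, by symmetry) through its Frobenius norm, using that every row of $L(P)$ has small $\ell_2$-norm, $\norm{L(P)_{i,:}}^2\le\alpha_n/\min_j d_j(P)\le 1/\ntaumin$ — this is where the SBM (low-rank) structure of $P$ enters — so that $\norm{RL(P)}_F^2\le\ntaumin^{-1}\sum_i R_{ii}^2$, while $RL(P)R$ and the $(1+\norm R)^2$ factor are harmless once $\norm R$ is small and $\norm{L(P)}\lesssim 1$. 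Thus everything reduces to bounding $\norm{M-P}$, $\max_i R_{ii}^2$, and $\sum_i R_{ii}^2$.

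The first is supplied directly: for $C'_\nu$ large enough, condition~\eqref{eq:cond_alpha_Lapl} implies~\eqref{eq:cond_alpha_main} (since $\commratio\ge 1$, $\ntaumin\le n$), so Theorem~\ref{thm:main_adj} gives $\norm{\Asmooth_t-P_t}\le C_\nu\sqrt{n\alpha_n\rho_n}$ with probability at least $1-n^{-\nu}$. For the degrees, observe that $d_i(\Asmooth_t)$ is a weighted sum of \emph{independent} Bernoulli variables — weights $1/r$ for the uniform estimator, geometric weights $\lambda(1-\lambda)^k$ for the exponential one, the truncation near $t=0$ being harmless since $t\ge t_{\min}$ — with mean $d_i(\Psmooth_t)$ (the corresponding smoothing of $P$), a convex combination of the $d_i(P_{t-k})$, hence lying in $[\alpha_n\ntaumin,\alpha_n\ntaumax]$ by~\eqref{eq:ntaudef} and within relative error $\lesssim r\varepsilon_n n/\ntaumin$ of $d_i(P_t)$ for all but the $\sim r\varepsilon_n n$ nodes that actually change community over the effective window $r\sim 1/\rho_n$; moreover $\mathrm{Var}(d_i(\Asmooth_t))\lesssim\alpha_n\ntaumax/r=\rho_n\alpha_n\ntaumax$ and the individual summand weights are $\le\rho_n$.

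A Bernstein inequality and a union bound over the $n$ nodes then yield $\min_i d_i(\Asmooth_t)\gtrsim\alpha_n\ntaumin$ with probability at least $1-n^{-\nu}$: the relevant Bernstein exponent is of order $\alpha_n\ntaumin^2/(\rho_n\ntaumax)$ and must dominate $\log n$, which is \emph{exactly} hypothesis~\eqref{eq:cond_alpha_Lapl} — the factor $\commratio=\ntaumax/\ntaumin$ being the price of a degree variance that scales with $\ntaumax$ while concentration is measured relative to the smallest degree $\sim\alpha_n\ntaumin$. The same event forces $\max_i|R_{ii}|\to 0$. Finally $\sum_i R_{ii}^2\approx\tfrac14\sum_i(d_i(\Asmooth_t)/d_i(P_t)-1)^2$ has expectation $\lesssim\sum_i\mathrm{Var}(d_i(\Asmooth_t))/d_i(P_t)^2+(\text{bias})\lesssim\rho_n\ntaumax n/(\alpha_n\ntaumin^2)$, and I would show it concentrates around this value — the summands depend only weakly (any two through a single shared entry of the $A_s$), so a bounded-difference or direct second-moment estimate suffices. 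Substituting $\norm{M-P}\lesssim\sqrt{n\alpha_n\rho_n}$ and these degree estimates into $T_1,T_2$ yields~\eqref{eq:bound_Lapl}, up to the $C_\nu\commratio$ prefactor.

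I expect the $T_2$ / degree-perturbation estimate to be the main obstacle: getting the $\ell_2$ (Frobenius) control of $R$ rather than a crude $\ell_\infty$ bound is exactly what lets this bound beat the $O(1/\sqrt{\log n})$ barrier of previous normalized-Laplacian concentration results, and it requires simultaneously the structural row-norm bound on $L(P)$, the correct expectation of $\sum_i R_{ii}^2$ (including the smoothness-induced bias term), and a concentration statement for a quadratic functional of weakly dependent Bernoulli variables. A secondary subtlety — already visible in the discussion after Proposition~\ref{prop:static_Lapl} — is the interplay of quantifiers: contrary to Theorem~\ref{thm:main_adj}, the rate $\nu$ must inflate the constant $C'_\nu$ appearing in the sparsity hypothesis~\eqref{eq:cond_alpha_Lapl} itself, since that constant governs the union bound that certifies the degree lower bound.
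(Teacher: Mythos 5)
Your overall architecture is sound and close in spirit to the paper's: a degree lower bound via Bernstein plus a union bound (which is exactly the paper's ``bounded degree revisited'' lemma, including your correct reading of where $\commratio$ and the $\nu$-dependent constant $C'_\nu$ in \eqref{eq:cond_alpha_Lapl} come from), the adjacency bound of Theorem~\ref{thm:main_adj} for the main term, and a separate treatment of the degree-perturbation term. You also correctly identify that the crude bound $\norm{T_2}\lesssim\norm{R}$ loses a $\sqrt{\log n}$ and that some $\ell_2$-type aggregation over the nodes is needed. However, the step you delegate to ``a bounded-difference or direct second-moment estimate'' is a genuine gap. You need $\sum_i R_{ii}^2$ to stay of order $n\rho_n\ntaumax/(\alpha_n\ntaumin^2)$ with probability $1-n^{-\nu}$ for \emph{arbitrary} $\nu$. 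Chebyshev on $\sum_i(d_i(\Asmooth_t)-\bar d_i)^2$ gives only a $1-O(\commratio^2/n)$ guarantee, and McDiarmid fails quantitatively: a single edge variable moves the sum by $\approx\rho_n\max_i|d_i-\bar d_i|/(\alpha_n\ntaumin)^2$, and with $\sim n^2/\rho_n$ effective variables the resulting fluctuation term is of order $n\sqrt{\rho_n\log n}/(\alpha_n\ntaumin)$, which dominates the targeted mean $n\rho_n\ntaumax/(\alpha_n\ntaumin^2)$ precisely in the regime of interest $\rho_n\to 0$. So the one concentration statement your proof hinges on is not supplied by either tool you name. The paper avoids this quadratic functional entirely: it bounds the operator norm $\norm{(D-D_P)P}=4\sup_{x,y\in T}\sum_{ij}x_iy_jp_{ij}\delta_i$ over the discrete net $T$, splitting into light and heavy pairs exactly as for the adjacency matrix; for each fixed $(x,y)$ the light-pair sum is \emph{linear} in the independent Bernoulli variables, so Bernstein gives an $e^{-cn}$ tail that survives the union bound over $|T|\le e^{n\log 14}$, while the heavy pairs are absorbed into the bound $\sum_{\mathcal{H}}x_iy_jp_{ij}\le\sqrt{n\alpha_n\betamax}$ already established, multiplied by $\max_i|\delta_i|\le d_{\max}$.

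A second, quantitative gap sits in your bias term. Your per-node estimate (relative error $\lesssim r\varepsilon_n n/\ntaumin$ for unchanged nodes) plugged into $\sum_i R_{ii}^2$ over all $n$ nodes overshoots the paper's bound by a factor of order $n/\ntaumax$, which is harmless only when $\ntaumax\sim n$; since tracking $\ntaumax$ versus $n$ is one of the paper's stated improvements, this matters. The paper's Lemma~\ref{lem:error_second_Lapl} gets the right aggregate $\sum_i\bigl(d_i(P_t)-d_i(P_{t-k})\bigr)^2\lesssim\alpha_n^2\ntaumax^2\,n\min(1,k\varepsilon_n)$ via the Cauchy--Schwarz factorization $p-p'=(\sqrt p-\sqrt{p'})(\sqrt p+\sqrt{p'})$ and the observation that $\norm{P_t^{\odot\frac12}-P_{t-k}^{\odot\frac12}}_F^2\lesssim|\mathcal{S}|\alpha_n\ntaumax$ (the perturbed entries, summed over \emph{columns} indexed by changed nodes, contribute at most $|\mathcal{S}|$ degrees, not $n|\mathcal{S}|$ entries of size $\alpha_n$). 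Your first term $T_1$ and the reduction of $\norm{RL(P)}$ to $\ntaumin^{-1}\sum_iR_{ii}^2$ via the row-norm bound on $L(P)$ are correct and would give the right rate once these two estimates are repaired, but as written the proof of the degree-perturbation term does not close.
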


In the case of balanced communities, the result of Theorem \ref{thm:main_Lapl} combined with Lemma \ref{lem:reco} yields the same error rate than in the case of the adjacency matrix with Theorem \ref{thm:main_adj} and Lemma \ref{lem:reco}, even in terms of $K$ when $\ntaumin, \ntaumax \sim \frac{n}{K}$. Note however that in the latter, the condition \eqref{eq:cond_alpha_Lapl} is slightly stronger than \eqref{eq:cond_alpha_main}, similar to the static case. In practice however, it is well-known that the normalized Laplacian generally performs better (Fig. \ref{fig:basic}).

\section{Proofs}\label{sec:proof}

In this section, we provide the proof of our main results, largely inspired by \cite{Lei2015} and \cite{Pensky2019}. The technical computations are given in appendix. Despite some similarity with \cite{Lei2015} and \cite{Pensky2019}, we strove to make the proofs self-contained.

\subsection{Preliminaries}

We place ourselves at a particular time $t$. Both estimators $\Aunif_t$ and $\Aexp_t$ can be written as a weighted sum
\begin{equation}\label{eq:estimatorSmooth}
\Asmooth_t = \sum_{k=0}^t \beta_k A_{t-k}\, ,
\end{equation}
where $\beta_0 = \ldots = \beta_{r-1} = \frac{1}{r}$ and $\beta_k = 0$ for $k\geq r$ in the uniform case, and $\beta_k = \lambda (1-\lambda)^{k}$ for $k<t$ and $\beta_t=(1-\lambda)^t$ in the exponential case.
As we will see, our results will be valid for any estimator of the form \eqref{eq:estimatorSmooth}, with weights $\beta_k \geq 0$ that satisfy: there are constants $\betamax, \Cbeta, \CCbeta >0$ such that:
\begin{align}
\label{eq:cond_beta}
&\sum_{k=0}^t \beta_k = 1,\quad \beta_k \leq \betamax, \quad \sum_{k=0}^t \beta_k^2 \leq \Cbeta\betamax,\\
& \sum_{k=0}^t \beta_k \min(1,\sqrt{k\epsilon_n}) \leq \CCbeta \sqrt{\frac{\epsilon_n}{\betamax}}\notag
\end{align}
In words, the weights must naturally sum to $1$ and be bounded; the sum of their squares must be small; and they must decrease faster than $\sqrt{k}$, which is roughly the rate at which the past communities $\Theta_{t-k}$ deviate from $\Theta_t$. It is not difficult to show that the uniform and exponential estimator satisfy these conditions.

\begin{lemma}
The weights in the uniform estimator \eqref{eq:estimatorUnif} satisfy \eqref{eq:cond_beta} with $\betamax = \frac{1}{r}$, $\Cbeta = \CCbeta = 1$. If $t \geq t_{\min} = \frac{\min(\log(\epsilon_n/\betamax), \log \betamax)}{2\log(1-\betamax)}$, the weights in the exponential estimator \eqref{eq:estimatorExp} satisfy \eqref{eq:cond_beta} with $\betamax = \lambda$, $\Cbeta = \frac{3}{2}$, $\CCbeta = 2$.
\end{lemma}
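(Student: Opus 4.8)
The plan is simply to verify the four clauses of \eqref{eq:cond_beta} by hand for each of the two weight sequences; both verifications are elementary geometric‑series computations, so the content is bookkeeping rather than any real idea.

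\textbf{Uniform weights.} Here $\beta_k = \frac1r$ for $k<r$ and $\beta_k = 0$ otherwise, so one sets $\betamax = \frac1r$. Then $\sum_k \beta_k = r\cdot\frac1r = 1$ and $\beta_k \le \betamax$ are immediate, while $\sum_k \beta_k^2 = r\cdot\frac1{r^2} = \frac1r = \betamax$ gives $\Cbeta = 1$. For the last clause I would use $\min(1,\sqrt{k\epsilon_n}) \le \sqrt{k\epsilon_n}$ together with $\sum_{k=0}^{r-1}\sqrt k \le \int_0^r \sqrt x\, dx = \frac23 r^{3/2}$, so that $\sum_k \beta_k \min(1,\sqrt{k\epsilon_n}) \le \frac1r \sqrt{\epsilon_n}\cdot \frac23 r^{3/2} = \frac23 \sqrt{\epsilon_n r} = \frac23\sqrt{\epsilon_n/\betamax} \le \sqrt{\epsilon_n/\betamax}$, i.e. $\CCbeta = 1$.

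\textbf{Exponential weights.} Here $\beta_k = \lambda(1-\lambda)^k$ for $k<t$ and $\beta_t = (1-\lambda)^t$, and one sets $\betamax = \lambda$. Normalization is an exact geometric identity: $\sum_{k=0}^{t-1}\lambda(1-\lambda)^k + (1-\lambda)^t = \pa{1-(1-\lambda)^t} + (1-\lambda)^t = 1$. For the remaining three clauses I would split each sum into a \emph{bulk} part ($0\le k\le t-1$) and the single \emph{boundary} term $\beta_t$. In the bulk, $\beta_k = \lambda(1-\lambda)^k \le \lambda$; $\sum_{k=0}^{t-1}\lambda^2(1-\lambda)^{2k} \le \frac{\lambda^2}{1-(1-\lambda)^2} = \frac{\lambda}{2-\lambda} \le \lambda$; and for the weighted sum I would apply Cauchy--Schwarz, $\sum_{k\ge1}\sqrt k\,(1-\lambda)^k \le \big(\sum_{k\ge1} k(1-\lambda)^k\big)^{1/2}\big(\sum_{k\ge0}(1-\lambda)^k\big)^{1/2} = \big(\frac{1-\lambda}{\lambda^2}\big)^{1/2}\big(\frac1\lambda\big)^{1/2} \le \lambda^{-3/2}$, so that $\sum_k \lambda(1-\lambda)^k\sqrt{k\epsilon_n} \le \lambda\sqrt{\epsilon_n}\,\lambda^{-3/2} = \sqrt{\epsilon_n/\betamax}$. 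It then remains to absorb the boundary term $\beta_t = (1-\lambda)^t$, and this is exactly the role of the hypothesis $t\ge t_{\min}$: since $t_{\min} = \frac{\min(\log(\epsilon_n/\betamax),\log\betamax)}{2\log(1-\betamax)}$, multiplying through by the negative number $2\log(1-\lambda)$ rewrites $t\ge t_{\min}$ as $(1-\lambda)^{2t} \le \min(\epsilon_n/\lambda,\,\lambda)$, which makes $\beta_t$ small relative to $\lambda$ and to $\sqrt{\epsilon_n/\lambda}$ in the sense needed to bound each of the three sums; carrying out the arithmetic then yields $\betamax = \lambda$, $\Cbeta = \frac32$, $\CCbeta = 2$.

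There is no genuine obstacle here. The two points that need a little care are the Cauchy--Schwarz estimate $\sum_{k\ge1}\sqrt k\,(1-\lambda)^k \lesssim \lambda^{-3/2}$ --- which is precisely why the ``decays faster than $\sqrt k$'' clause of \eqref{eq:cond_beta} is the natural one to impose, $\sqrt k$ being the order at which the past memberships $\Theta_{t-k}$ drift away from $\Theta_t$ --- and the constant‑chasing through the boundary term $\beta_t$, which is what dictates the exact shape of $t_{\min}$ (the factor $2$ and the minimum of the two logarithms coming from having to control $\beta_t$, $\beta_t^2$ and $\beta_t\sqrt{\epsilon_n}$ simultaneously).
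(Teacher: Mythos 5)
Your proposal is correct and follows essentially the same route as the paper: bounding $\min(1,\sqrt{k\epsilon_n})$ by $\sqrt{k\epsilon_n}$ and summing $\sqrt k$ in the uniform case, and in the exponential case using the exact geometric normalization, the Cauchy--Schwarz bound $\sum_k \sqrt k\,(1-\lambda)^k \le \lambda^{-3/2}$, and the hypothesis $t\ge t_{\min}$ rewritten as $(1-\lambda)^{2t}\le\min(\epsilon_n/\lambda,\lambda)$ to absorb the boundary weight $\beta_t$. The only (shared) looseness is in the final constant-chasing for $\Cbeta$, which the paper treats with the same degree of informality.
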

\begin{proof}
The computations are trivial in the uniform case, where the last condition is implied by the stronger property $\sum_k \beta_k \sqrt{k} \leq \sqrt{r} = \sqrt{\frac{1}{\betamax}}$. In the exponential case, we have $\betamax = \lambda$, and
\begin{align*}
\sum_k \beta_k^2 &= \lambda^2 \sum_{k=0}^{t-1} (1-\lambda)^{2k} + (1-\lambda)^{2t} \leq \lambda^2 \sum_{k=0}^{\infty} (1-\lambda)^{2k} + \lambda \leq \frac{3}{2} \lambda
%&= \lambda^2 \frac{1}{1-(1-\lambda)^2} + \lambda = \lambda \pa{\frac{1}{2-\lambda} + 1}
\end{align*}
where the first inequality is valid since $t \geq \frac{\log \betamax}{2\log(1-\betamax)}$, and thus $\Cbeta = \frac{3}{2}$. Next, we have
\begin{align*}
\sum_{k=0}^t \beta_k \min(1,\sqrt{k\epsilon_n}) \leq \sqrt{\epsilon_n} \lambda \sum_{k=0}^\infty \sqrt{k}(1-\lambda)^k + (1-\lambda)^t
\end{align*}
Since $t \geq \frac{\log(\epsilon_n/\betamax)}{2\log(1-\betamax)}$, we get $(1-\lambda)^t \leq \sqrt{\frac{\epsilon_n}{\lambda}}$ and
\begin{align*}
\sum_{k=0}^{\infty} (1-\lambda)^k \sqrt{k} &\leq \sqrt{\sum_{k=0}^{\infty} (1-\lambda)^k}\sqrt{\sum_{k=0}^{\infty} (1-\lambda)^k k} = \sqrt{\frac{1}{\lambda}}\sqrt{\frac{1-\lambda}{\lambda^2}} \leq \frac{1}{\lambda^{3/2}}
\end{align*}
and therefore we obtain the desired inequality with $\CCbeta = 2$.
\end{proof}

\subsection{Concentration of adjacency matrix: proof of Theorem~\ref{thm:main_adj}}\label{sec:proof-adj}

For an estimator of the form \eqref{eq:estimatorSmooth}, our goal is to bound $\norm{\Asmooth_t-P_t}$. We define $\Psmooth_t \eqdef \sum_{k=0}^t \beta_k P_{t-k}$, and divide the error in two terms:
\begin{equation}\label{eq:decompose_error}
\norm{\Asmooth_t - P_t} \leq \norm{\Asmooth_t - \Psmooth_t} + \norm{\Psmooth_t - P_t}
\end{equation}
The first error term corresponds to the difference between $\Asmooth_t$ and its expectation (up to the diagonal terms). Intuitively, it \emph{decreases} when the amount of smoothing \emph{increases}, that is, when $r$ increases or $\lambda$ gets close to $0$, since the sum of matrices is taken over more values. The second term is the difference between the smoothed matrix of probability connection and its value at time $t$. This time, it will increase when the amount of smoothing increases, since the past communities will be increasingly present in $\Psmooth_t$. Once we have the two bounds, we can balance them to obtain an optimal value for $r$ or $\lambda$, respectively $\frac{1}{\rho_n}$ and $\rho_n$.

\subsubsection{Bound on the first term} The first bound will be handled by the following general concentration theorem. This is where we are able to weaken the hypothesis on the sparsity.

\begin{theorem}\label{thm:concentration}
Let $A_1,\ldots,A_t \in \{0,1\}^{n\times n}$ be $t$ symmetric Bernoulli matrices whose elements $a_{ij}^{(k)}$ are independent random variables:
\[
a_{ij}^{(k)} \sim \text{Ber}(p_{ij}^{(k)}),\quad a_{ji}^{(k)} = a_{ij}^{(k)},\quad a_{ii}^{(k)} = 0
\]
Assume $p_{ij}^{(k)} \leq \alpha_n$. Consider non-negative weights $\beta_k$ that satisfy \eqref{eq:cond_beta}.
Denoting $A = \sum_{k=0}^t \beta_k A_{t-k}$ and $P = \mathbb{E}(A)$, there is a universal constant $C$ such that for all $c > 0$ we have 
\begin{align}
\mathbb{P}\left(\norm{A-P} \geq C(1+c)\sqrt{n \alpha_n \betamax} \right) \leq&~ e^{-\left(\frac{c^2/2}{2\Cbeta + \frac{2}{3} c}- \log(14)\right) n} \label{eq:concentration}\\
&+ e^{ - \frac{c^2/2}{\Cbeta +2c/3} \cdot \frac{n \alpha_n}{\betamax}+ \log n } + n^{-\frac{c}{4}+6} \notag
\end{align}
\end{theorem}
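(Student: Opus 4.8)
The plan is to decompose the weighted matrix $A - P$ into three contributions, following the strategy of Lei--Rinaldo but adapted to a sum of $t$ independent Bernoulli matrices weighted by the $\beta_k$, and to control each piece on a high-probability event. Write $W = A - P = \sum_{k=0}^t \beta_k (A_{t-k} - P_{t-k})$. As in the static argument, I would first condition on the event $\mathcal{E}$ that all row sums of $A$ (equivalently, the weighted degrees $\sum_j A_{ij}$) are of the expected order $\lesssim n\alpha_n$; a Bernstein/Chernoff bound on each row, using $\sum_k \beta_k^2 \le \Cbeta\betamax$ for the variance proxy and $\beta_k \le \betamax$ for the bound on each term, gives that $\mathcal{E}$ fails with probability at most roughly $e^{-c' n\alpha_n/\betamax + \log n}$, which is the source of the second term in \eqref{eq:concentration}. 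The key observation enabling the weakened sparsity hypothesis is that the relevant ``effective sparsity'' is $\alpha_n/\betamax$ rather than $\alpha_n$: smoothing over many snapshots effectively densifies the matrix by a factor $1/\betamax \sim r$ (or $1/\lambda$), so the Lei--Rinaldo machinery, which needs the degree parameter $\gtrsim \log n$, now only needs $\alpha_n/\betamax \gtrsim \frac{\log n}{n}$, i.e.\ $\frac{\alpha_n}{\rho_n}\gtrsim\frac{\log n}{n}$.

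Next I would bound $\norm{W}$ on $\mathcal{E}$ via the discretization/grid argument: $\norm{W} = \sup_{x,y \in \mathcal{N}} x^\top W y$ up to a constant factor, where $\mathcal{N}$ is a $\tfrac14$-net of the unit sphere of size at most $e^{\log(14)\, n}$ (hence the $\log(14)$ term and the factor $e^{\cdot n}$ in the first line of \eqref{eq:concentration}). For fixed $x,y$, split the bilinear form into the ``light pairs'' part $L(x,y) = \sum_{(i,j)\in\mathcal{L}} x_i W_{ij} y_j$ (indices where $|x_i y_j|$ is not too large) and the ``heavy pairs'' part. On light pairs, $x^\top W y$ is a sum of independent (over $i<j$) bounded zero-mean terms; here I would apply Bernstein's inequality, again using $\sum_k \beta_k^2 \le \Cbeta\betamax$ to bound the variance of each entry $W_{ij}$ by $\lesssim \alpha_n\betamax$, so the total variance over the net is $\lesssim n\alpha_n\betamax$ and the per-term bound is $\lesssim \betamax$; the resulting tail $e^{-\Theta(c^2 n)/(\Cbeta + c)}$ survives the union bound over $\mathcal{N}$ and gives the first term. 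The heavy-pairs part is controlled by a combinatorial/counting argument on the event $\mathcal{E}$ (bounded weighted degrees force few heavy entries in any row), contributing the $n^{-c/4 + 6}$ term; this step is essentially the bounded-degree combinatorial lemma of Lei--Rinaldo, which I would re-derive for the weighted sum, noting that $A = \sum_k \beta_k A_{t-k}$ still has entries in $[0,\betamax\cdot(\text{multiplicity})]$ and row sums controlled on $\mathcal{E}$.

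The main obstacle I anticipate is the heavy-pairs / bounded-degree step: in Lei--Rinaldo the adjacency matrix has $0$-$1$ entries and an integer degree sequence, whereas here $A$ is a convex combination of $0$-$1$ matrices, so its entries and row sums are real numbers and the usual counting arguments (e.g.\ ``at most $m$ rows can have degree exceeding $C\cdot$average'') need to be re-cast in terms of the weighted degrees and the weight bound $\betamax$. The second, more bookkeeping-heavy obstacle is tracking the constants $\Cbeta$ and $c$ through the three tail bounds so that they assemble into exactly the clean form \eqref{eq:concentration}; in particular the variance estimate must be stated as $\operatorname{Var}(W_{ij}) \le \alpha_n \sum_k \beta_k^2 \le \Cbeta\alpha_n\betamax$ and fed uniformly into both the degree concentration (giving $n\alpha_n/\betamax$ in the exponent) and the Bernstein step on the net (giving the $n$ in the exponent), which is exactly where the asymmetry between the first two terms of \eqref{eq:concentration} originates. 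Once Theorem~\ref{thm:concentration} is in hand, applying it with $\betamax = 1/r$ or $\betamax = \lambda$, choosing $c$ a suitable constant depending on $\nu$, and using \eqref{eq:cond_alpha_main} to absorb the $e^{-\Theta(n\alpha_n/\betamax)+\log n}$ term into $n^{-\nu}$, yields the first-term bound $\norm{\Asmooth_t - \Psmooth_t}\lesssim\sqrt{n\alpha_n\betamax}$; this is combined with the (deterministic, smoothness-based) bound on $\norm{\Psmooth_t - P_t}\lesssim n\alpha_n\sum_k\beta_k\min(1,\sqrt{k\varepsilon_n})\lesssim n\alpha_n\sqrt{\varepsilon_n/\betamax}$ using the last condition in \eqref{eq:cond_beta}, and the two are balanced by the choice $\betamax\sim\rho_n$ to give \eqref{eq:main_adj}.
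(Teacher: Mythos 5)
Your overall architecture matches the paper's proof: the grid $T$ with $|T|\le e^{n\log 14}$, the light/heavy split at $|x_iy_j|\le\sqrt{\alpha_n/(\betamax n)}$, Bernstein on the light pairs with variance proxy $\sum_k\beta_k^2\le\Cbeta\betamax$ (giving the first term), and the bounded-degree event with tail $e^{-\Theta(n\alpha_n/\betamax)+\log n}$ (giving the second term). You have also correctly isolated the one place where the theorem genuinely departs from Lei--Rinaldo: the effective sparsity becomes $\alpha_n/\betamax$ because the degree concentration in Lemma~\ref{lem:bounded-degree} has variance $\Cbeta n\alpha_n\betamax$ and increments bounded by $\betamax$. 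That is indeed the crux of the weakened hypothesis.

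The gap is in your treatment of the heavy pairs. You attribute the $n^{-c/4+6}$ term to ``a combinatorial/counting argument on the event $\mathcal{E}$,'' i.e.\ to a deterministic consequence of bounded degrees. That is not how the argument can go: a counting argument on an already-conditioned event produces no new failure probability, and bounded row sums alone do not control $\sup_{x,y}\sum_{(i,j)\in\mathcal{H}(x,y)}x_iy_jw_{ij}$. What is actually needed is a second, independent high-probability event --- the bounded discrepancy property (Lemma~\ref{lem:bounded-discrepancy}), which states that for \emph{every} pair of subsets $I,J\subset[n]$ the weighted edge count $e(I,J)=\sum_k\beta_k e_{t-k}(I,J)$ either is $\lesssim\alpha_n|I||J|$ or satisfies $e(I,J)\log\frac{e(I,J)}{\mu(I,J)}\lesssim\betamax|J|\log\frac{n}{|J|}$. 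Proving this requires a union bound over the $\binom{n}{h}\binom{n}{g}$ pairs of subsets, and a plain Bernstein bound (exponent $O(t)$) is too weak to survive it when $\mu(I,J)$ is small; one needs the sharper Chernoff-type bound with exponent $\Theta(t\log t)$ (Lemma~\ref{lem:alon}, adapted from Pensky--Zhang, where $\betamax$ again enters as the increment bound). The $n^{-c/4+6}$ in \eqref{eq:concentration} is precisely the failure probability of this discrepancy event. Only once \emph{both} events hold does the deterministic dyadic decomposition (the sets $I_s,J_t$ and the six-case analysis) bound the heavy-pairs sum by $O(\sqrt{n\alpha_n\betamax})$. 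Your sketch of the downstream use of the theorem (choosing $c$ from $\nu$, balancing against the smoothness term at $\betamax\sim\rho_n$) is correct.
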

This theorem is proved in Appendix~\ref{app:proof_thm_concentration}. Its proof is heavily inspired by~\cite{Lei2015} and~\cite{Pensky2019}: the spectral norm is expressed as a maximization problem over the sphere, and for each point of the sphere the obtained sum is divided into so-called ``light'' terms, for which Berstein's concentration inequality is sufficient, and more problematic ``heavy'' terms, that require a complex concentration method. We obtain our weaker sparsity hypothesis in a small but crucial part of this second step, the so-called \emph{bounded degree lemma}.%, which is the equivalent of Lemma 4.1 in the supplementary material of~\cite{Lei2015}, and Lemma 7 in~\cite{Pensky2019})

\begin{lemma}[Bounded degree.]\label{lem:bounded-degree}
Denote $d_{i,t} = \sum_j (A_t)_{ij}$ the degree of node $i$ at time $t$, $d_i = \sum_{k=0}^t \beta_k d_{i,t-k}$ the smoothed degree and $\bar d_i = \mathbb{E} d_i$. Then, for all $c$,
\[
\mathbb{P}(\max_i \abs{d_i - \bar d_i} \geq c n \alpha_n) \leq \exp\left( - \frac{c^2/2}{\Cbeta +2c/3} \cdot \frac{n \alpha_n}{\betamax}+ \log n\right)
\]
\end{lemma}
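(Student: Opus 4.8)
The plan is to reduce the statement to a Bernstein-type concentration inequality for a single sum of independent (not identically distributed) terms, applied separately to each node, and then to conclude by a union bound over the $n$ nodes. First I would fix a node $i$ and expand the smoothed degree: $d_i - \bar d_i = \sum_{k=0}^t \beta_k (d_{i,t-k} - \mathbb{E} d_{i,t-k}) = \sum_{k=0}^t \sum_{j} \beta_k \big((A_{t-k})_{ij} - p_{ij}^{(t-k)}\big)$. This is a sum of independent, centered random variables $X_{k,j} = \beta_k\big((A_{t-k})_{ij} - p_{ij}^{(t-k)}\big)$, indexed by $(k,j)$; independence holds because the $A_{t-k}$ are independent across time and the entries within each matrix are independent. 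Each $X_{k,j}$ is bounded: $\abs{X_{k,j}} \leq \beta_k \leq \betamax$. For the variance, $\operatorname{Var}(X_{k,j}) = \beta_k^2\, p_{ij}^{(t-k)}(1-p_{ij}^{(t-k)}) \leq \beta_k^2 \alpha_n$, so summing over $j$ and $k$ and using $\sum_j 1 = n$ together with $\sum_k \beta_k^2 \leq \Cbeta \betamax$ (from \eqref{eq:cond_beta}) gives a total variance proxy $\sigma_i^2 \leq n\alpha_n \sum_k \beta_k^2 \leq \Cbeta \betamax n \alpha_n$.

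Next I would apply Bernstein's inequality: for a sum $S$ of independent centered variables bounded by $M$ with variance sum $\sigma^2$, $\mathbb{P}(\abs{S} \geq x) \leq 2\exp\!\big(-\tfrac{x^2/2}{\sigma^2 + Mx/3}\big)$. Taking $S = d_i - \bar d_i$, $M = \betamax$, $\sigma^2 = \Cbeta \betamax n\alpha_n$, and $x = c n \alpha_n$ yields
\[
\mathbb{P}(\abs{d_i - \bar d_i} \geq c n \alpha_n) \leq 2\exp\!\left(-\frac{c^2 n^2\alpha_n^2/2}{\Cbeta \betamax n\alpha_n + \betamax c n \alpha_n /3}\right) = 2\exp\!\left(-\frac{c^2/2}{\Cbeta + 2c/3}\cdot\frac{n\alpha_n}{\betamax}\right),
\]
where I have absorbed the precise constant in the denominator (the $2c/3$ versus $c/3$ is a harmless adjustment in the exponent, or one simply keeps $c/3$ and the displayed bound still holds a fortiori after renaming; the paper's statement is the target form). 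Finally, a union bound over $i \in [n]$ multiplies the right-hand side by $n$, i.e. adds $\log n$ (and a $\log 2$, which is absorbed) in the exponent, giving exactly
\[
\mathbb{P}(\max_i \abs{d_i - \bar d_i} \geq c n \alpha_n) \leq \exp\!\left(-\frac{c^2/2}{\Cbeta + 2c/3}\cdot\frac{n\alpha_n}{\betamax} + \log n\right).
\]

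There is no serious obstacle here: the lemma is a routine Bernstein application, and the only mildly delicate point is bookkeeping with the weights — making sure one uses $\sum_k \beta_k^2 \leq \Cbeta\betamax$ for the variance and $\beta_k \leq \betamax$ for the uniform bound, which are precisely the two parts of \eqref{eq:cond_beta} that feed in. The reason this innocuous-looking lemma is emphasized in the text is not its difficulty but its role: the factor $\betamax$ in the denominator $\frac{n\alpha_n}{\betamax}$ is what lets the smoothing ($\betamax$ small) compensate for a small $\alpha_n$, which is the mechanism behind the weakened sparsity hypothesis \eqref{eq:cond_alpha_main}. So the ``hard part'' is conceptual placement within the larger concentration argument of Theorem~\ref{thm:concentration}, not the proof of the lemma itself.
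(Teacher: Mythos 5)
Your proof is correct and follows essentially the same route as the paper: fix a node, write $d_i-\bar d_i$ as a sum of independent centered weighted Bernoulli variables, apply Bernstein with $\sum_{k,j}\mathrm{Var}\leq \Cbeta\betamax n\alpha_n$ and the uniform bound $\betamax$, then union-bound over $i$. The only (immaterial) difference is that the paper bounds the individual terms by $2\betamax$ rather than $\betamax$, which is exactly where its $2c/3$ comes from; your sharper bound gives the stated inequality a fortiori, as you note.
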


\begin{proof}
We use Bernstein's inequality. For any fixed $i$ we have
\[
d_i = \sum_{k=0}^t \sum_{j\neq i} \beta_k a_{ij}^{(t-k)} = \sum_{k,j} Y_{jk}
\]
where $Y_{jk} = \beta_k a_{ij}^{(t-k)}$ are such that $\mathbb{E}(Y_{jk}) = \beta_k p_{ij}^{(t-k)} \leq \beta_k \alpha_n$, $\abs{Y_{jk} - \mathbb{E}Y_{jk}} \leq (\alpha_n +1) \beta_k \leq 2\betamax$, and $Var(Y_{jk}) \leq \beta_k^2 \alpha_n$ such that $\sum_{k,j} Var(Y_{jk}) \leq \Cbeta n\alpha_n\betamax$.

Therefore, applying Berstein's inequality, we have
\begin{align*}
\mathbb{P}(\abs{d_i-\bar d_i} \geq c n \alpha_n) &\leq \exp\left(-\frac{c^2n^2 \alpha_n^2/2}{\Cbeta n\alpha_n\betamax + \frac{2}{3}\betamax cn \alpha_n}\right)
\end{align*}
Applying a union bound over the nodes $i$ proves the result.
\end{proof}
In the static case \cite{Lei2015} where $\betamax = 1$, the bounded degree lemma is exactly where the relative sparsity hypothesis $\alpha_n \gtrsim \frac{\log n}{n}$ is needed, otherwise the probability of failure diverges. In the dynamic case, we see that $\betamax$ (which we will ultimately set at $\rho_n$) intervenes and gives our final hypothesis on sparsity and smoothness.

Applying Theorem \ref{thm:concentration}, we obtain that for any fixed $\Theta_0,\ldots,\Theta_t$, if $\frac{n\alpha_n}{\betamax} \gtrsim \log n$, then for any $\nu>0$ there is a constant $C_\nu$ such that with probability at least $1-n^{-\nu}$
\begin{align*}
\norm{\Asmooth_t - \Psmooth_t} &\leq \norm{\Asmooth_t - \mathbb{E}(\Asmooth_t)} + \norm{\text{diag}(\Psmooth_t)} \\
&\leq C_\nu \sqrt{n\alpha_n \betamax} + \alpha_n
\end{align*}
Since in all considered cases we will have $\betamax \geq 1/n$ the second term is negligible, and we obtain
\begin{equation}\label{eq:error_first}
\norm{\Asmooth_t - \Psmooth_t} \lesssim \sqrt{n\alpha_n \betamax}
\end{equation}

\subsubsection{Second term} The second error term in \eqref{eq:decompose_error} is handled slightly differently in the deterministic and Markov DSBM, even if the final bound is the same.

\begin{lemma}\label{lem:error_second_deter}
Consider the deterministic DSBM, with weights that satisfy \eqref{eq:cond_beta}. It holds that
\begin{equation}
\label{eq:error_second}
\norm{\Psmooth_t - P_t} \lesssim \CCbeta \alpha_n \sqrt{\frac{n \ntaumax \epsilon_n}{\betamax}}
\end{equation}
\end{lemma}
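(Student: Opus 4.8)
The plan is to bound $\norm{\Psmooth_t - P_t}$ entrywise in a controlled way, then pass to the spectral norm. First I would write $\Psmooth_t - P_t = \sum_{k=0}^t \beta_k (P_{t-k} - P_t)$ using that $\sum_k \beta_k = 1$. Each difference $P_{t-k} - P_t = \Theta_{t-k} B \Theta_{t-k}^\top - \Theta_t B \Theta_t^\top$ is governed by how much the membership matrix has drifted over $k$ steps. Since at most $s = \varepsilon_n n$ nodes change community per time step, at most $k s$ rows of $\Theta$ differ between time $t-k$ and time $t$ (more precisely, the relevant quantity is $\min(n, ks)$, since once everyone has potentially moved the drift saturates). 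The key structural observation is that $\Theta_{t-k}$ and $\Theta_t$ agree except on a set $S_k$ of at most $\min(n, ks)$ rows, so $P_{t-k} - P_t$ is supported (as a matrix) on the rows and columns indexed by $S_k$, and its entries are bounded by $\alpha_n$ in absolute value (since $B = \alpha_n B_0$ with $B_0 \in [0,1]^{K\times K}$).

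Next I would bound the spectral norm of such a "sparse-support" perturbation. A matrix that is nonzero only on rows/columns in a set of size $m \le \min(n,ks)$, with entries bounded by $\alpha_n$, can be split into a block of size $m \times n$ and its transpose-like counterpart; using $\norm{M} \le \sqrt{\norm{M}_1 \norm{M}_\infty}$ for the induced norms, one gets $\norm{P_{t-k} - P_t} \lesssim \alpha_n \sqrt{n \cdot \ntaumax \min(1, k\varepsilon_n)}$ — heuristically, a row sum is at most $\alpha_n \ntaumax$ (the expected-degree scaling from \eqref{eq:ntaudef}) while the number of active rows contributes the factor $n\min(1,k\varepsilon_n)$. (Some care is needed here to get $\ntaumax$ rather than $n$ in one of the factors; this uses that each row of $P$ has $\ell_1$ norm at most $\alpha_n \ntaumax$ by the definition of $\ntaumax$, and that only $n\min(1,k\varepsilon_n)$ rows are affected.) Then by the triangle inequality
\[
\norm{\Psmooth_t - P_t} \le \sum_{k=0}^t \beta_k \norm{P_{t-k} - P_t} \lesssim \alpha_n \sqrt{n \ntaumax} \sum_{k=0}^t \beta_k \min(1, \sqrt{k\varepsilon_n}),
\]
and the last condition in \eqref{eq:cond_beta} bounds $\sum_k \beta_k \min(1,\sqrt{k\varepsilon_n}) \le \CCbeta \sqrt{\varepsilon_n/\betamax}$, which yields exactly \eqref{eq:error_second}.

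The main obstacle I anticipate is getting the spectral-norm bound on $P_{t-k} - P_t$ with the correct parameters — specifically, ensuring the factor is $\sqrt{n\,\ntaumax\,\min(1,k\varepsilon_n)}$ and not the cruder $\alpha_n n \min(1,k\varepsilon_n)$ that a naive $\norm{M}_F$ or $\norm{M}_1$-only bound would give. The point is that the perturbation is "thin" (few active rows/columns) but each active row still has full $\ell_1$ mass $\asymp \alpha_n \ntaumax$; balancing the row-sum bound against the column-support bound via $\norm{M}\le\sqrt{\norm{M}_1\norm{M}_\infty}$ is what produces the geometric-mean scaling. A secondary subtlety is correctly tracking that community sizes stay in $[n_{\min}, n_{\max}]$ at every time step (assumed in the deterministic DSBM), so that $\ntaumax$ as defined in \eqref{eq:ntaudef} legitimately upper-bounds every row sum of every $P_{t-k}$ simultaneously. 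Everything else is a routine triangle inequality plus invoking \eqref{eq:cond_beta}.
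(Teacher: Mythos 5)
Your proof is correct and its skeleton coincides with the paper's: the triangle inequality over $k$ using $\sum_k\beta_k=1$, the observation that $P_{t-k}-P_t$ is supported on the rows and columns of the at most $\min(n,ks)=n\min(1,k\varepsilon_n)$ changed nodes with entries bounded by $\alpha_n$, and the final invocation of the last condition in \eqref{eq:cond_beta}. The only point of divergence is how you bound each $\norm{P_{t-k}-P_t}$: you split the perturbation into two thin blocks and apply $\norm{M}\le\sqrt{\norm{M}_1\norm{M}_\infty}$, trading the column-support bound $\alpha_n\abs{\mathcal{S}}$ against the row-sum bound $2\alpha_n\ntaumax$ to get $\alpha_n\sqrt{2\abs{\mathcal{S}}\ntaumax}$; this is valid. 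However, the ``main obstacle'' you anticipate is not one: the Frobenius bound is not cruder here. The paper simply uses $\norm{P_{t-k}-P_t}\le\norm{P_{t-k}-P_t}_F$ together with
\[
\norm{P-P'}_F^2\le 4\sum_{i\in\mathcal{S}}\sum_j\pa{p_{ij}^2+(p'_{ij})^2}\le 8\abs{\mathcal{S}}\,\alpha_n\cdot\max_i\sum_j p_{ij}\le 8\abs{\mathcal{S}}\,\alpha_n^2\ntaumax ,
\]
i.e.\ the interpolation $\sum_j p_{ij}^2\le\pa{\max_j p_{ij}}\pa{\sum_j p_{ij}}$ between the entrywise bound and the row-sum bound --- exactly the geometric-mean scaling your $\norm{\cdot}_1$--$\norm{\cdot}_\infty$ argument produces, obtained in one line without splitting the matrix. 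Both routes give $\norm{P_{t-k}-P_t}\lesssim\alpha_n\sqrt{n\,\ntaumax\min(1,k\varepsilon_n)}$ and hence the stated bound.
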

\begin{proof}
Since the weights sum to $1$, we decompose
\[
\norm{\Psmooth_t - P_t} \leq \sum_k \beta_k \norm{P_{t-k} - P_t} \leq \sum_k \beta_k \norm{P_{t-k} - P_t}_F
\]
where $\norm{\cdot}_F$ is the Frobenius norm. Consider $P= \Theta B \Theta^\top$ and $P'=\Theta' B (\Theta')^\top$ two probability matrices such that there is a set $\mathcal{S}$ of nodes that have changed communities. We have then:
\begin{align*}
\norm{P-P'}_F^2 &= \sum_{i \in \mathcal{S}} \sum_j (p_{ij} - p'_{ij})^2 + (p_{ji} - p'_{ji})^2 \leq 4 \sum_{i\in\mathcal{S}}\sum_j p_{ij}^2 + (p'_{ij})^2 \\
&\leq 8 \alpha_n^2\abs{\mathcal{S}} n_{\max} \max_k \sum_\ell (B_0)_{k\ell}^2 \leq 8 \abs{\mathcal{S}} \alpha_n^2 \ntaumax
\end{align*}
Since at most $ks$ nodes have changed community between $P_t$ and $P_{t-k}$, with a maximum of $n$ nodes, we have
\begin{equation}\label{eq:error_second_proof}
\norm{P_{t-k} - P_t}_F^2 \leq 2\alpha_n^2 \ntaumax \min(n, ks) = 2\alpha_n^2 n \ntaumax \min(1, k\epsilon_n)
\end{equation}
Using the hypothesis that we have made on $\sum_k \beta_k \min(1,\sqrt{k\epsilon_n})$, we obtained the desired bound.
\end{proof}

At the end of the day, combining \eqref{eq:decompose_error}, \eqref{eq:error_first} and \eqref{eq:error_second} for both deterministic and Markov DSBM model we obtain with the desired probability:
\begin{equation}
\norm{\Asmooth_t - P_t} \lesssim \err_1(\betamax) + \err_2(\betamax) ~ \text{where} ~ \begin{cases}
\err_1(\beta) \eqdef \sqrt{n\alpha_n \beta} \\
\err_2(\beta) \eqdef \alpha_n \sqrt{\frac{n \ntaumax \epsilon_n}{\beta}}
\end{cases}
\end{equation}
As expected, $\err_1$ decreases and $\err_2$ increases when $\betamax$ decreases. A simple function study show that the sum of the errors is minimized for $\betamax = \rho_n$, which concludes the proof of Theorem \ref{thm:main_adj}.

\subsubsection{Markov DSBM}
Since the bound on the first term \eqref{eq:error_first} is valid for any $\Theta_k$, and the $A_k$ are conditionally independent given the $\Theta_k$, by the law of total probability it is also valid with joint probability at least $1-n^{-\nu}$ on both the $A_k$ and $\Theta_k$ in the Markov DSBM model.
For the bound on the second term, we show that \eqref{eq:error_second_proof} is still valid with high probability, replacing $\ntaumax$ with $n$.
\begin{lemma}\label{lem:error_second_markov}
Consider the Markov DSBM model. We have
\[
\mathbb{P}\pa{ \exists k,~ \norm{P_{t-k} - P_t}_F^2 \geq (8+C)\alpha_n^2 n^2 \min(1, k\epsilon_n)} \leq e^{-2C^2\epsilon_n^2 n + \log\frac{1}{\epsilon_n}}
\]
\end{lemma}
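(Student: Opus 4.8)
The plan is to control, for each fixed $k$, the number of nodes that have changed community between time $t-k$ and time $t$ in the Markov DSBM, and then to convert a deviation bound on this count into a bound on $\norm{P_{t-k}-P_t}_F^2$ via the deterministic estimate from the proof of Lemma~\ref{lem:error_second_deter}. Recall from that proof (inequality \eqref{eq:error_second_proof} and the lines above it) that if $\mathcal{S}$ is the set of nodes that differ in community between $\Theta_{t-k}$ and $\Theta_t$, then $\norm{P_{t-k}-P_t}_F^2 \leq 8\alpha_n^2 n_{\max} \abs{\mathcal{S}} \max_\ell \sum_m (B_0)_{\ell m}^2 \leq 8\alpha_n^2 n \abs{\mathcal{S}}$ when we bound $n_{\max}\le n$ and each row sum of $B_0$ by a constant absorbed into the universal constant. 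So it suffices to show that, with the stated probability, $\abs{\mathcal{S}} \leq (1+C/8)\, n\min(1,k\epsilon_n)$ simultaneously for all $k$; the trivial bound $\abs{\mathcal S}\le n$ handles the regime $k\epsilon_n\ge 1$, so the work is in the regime $k\epsilon_n<1$.

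First I would fix $k$ with $k\epsilon_n<1$ and write $\abs{\mathcal{S}} = \sum_{i=1}^n X_i$ where $X_i$ is the indicator that node $i$ changed community between steps $t-k$ and $t$. By the Markov chain structure, $\mathbb{P}(X_i=1) = 1 - (\text{prob. of being in the same community after } k \text{ steps})$. A short computation with the symmetric transition kernel (stay w.p. $1-\varepsilon_n$, move to each of the $K-1$ others w.p. $\varepsilon_n/(K-1)$) gives the exact two-state collapse: the probability of returning to the start after $k$ steps is $\frac1K + \frac{K-1}{K}(1-\frac{K\varepsilon_n}{K-1})^k$, hence $\mathbb{P}(X_i=1) = \frac{K-1}{K}\big(1-(1-\tfrac{K\varepsilon_n}{K-1})^k\big) \leq k\varepsilon_n$, using $1-(1-x)^k\le kx$ with $x=\tfrac{K\varepsilon_n}{K-1}$ and $\tfrac{K-1}{K}\cdot\tfrac{K}{K-1}=1$. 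Thus $\mathbb{E}\abs{\mathcal{S}} \leq n k \varepsilon_n$. The $X_i$ are independent across $i$ (the Markov chains of distinct nodes are independent), so I would apply a Chernoff/Hoeffding bound for the sum of independent indicators: $\mathbb{P}\big(\abs{\mathcal S} \geq \mathbb{E}\abs{\mathcal S} + u\big) \le e^{-2u^2/n}$. Taking $u = \tfrac{C}{8} n k\varepsilon_n \ge \tfrac C8 n\varepsilon_n$ (since $k\ge1$), and noting $k\epsilon_n = \min(1,k\epsilon_n)$ in this regime, gives $\norm{P_{t-k}-P_t}_F^2 \leq (8+C)\alpha_n^2 n^2\min(1,k\epsilon_n)$ with failure probability at most $e^{-2(C/8)^2 n k^2\varepsilon_n^2} \le e^{-2(C/8)^2 n\varepsilon_n^2}$ (absorbing the harmless constant $1/64$ into $C$ as in the statement, or equivalently reading the stated bound as $e^{-2C^2\epsilon_n^2 n}$ up to this constant).

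Finally I would take a union bound over $k$. There are at most $t+1$ values of $k$, but the relevant ones with $k\varepsilon_n<1$ number at most $1/\varepsilon_n$, and for $k\varepsilon_n\ge1$ the deterministic bound $\abs{\mathcal S}\le n$ already gives the claim with no probabilistic cost. Hence the total failure probability is at most $\tfrac{1}{\varepsilon_n}\cdot e^{-2C^2\varepsilon_n^2 n} = e^{-2C^2\varepsilon_n^2 n + \log(1/\varepsilon_n)}$, which is exactly the stated bound. The main obstacle — or at least the only point requiring care — is the bookkeeping in the union bound: one must make sure the $\log(1/\varepsilon_n)$ term, not a $\log t$ term, is what appears, which is why the $k\epsilon_n\ge 1$ range must be peeled off and dispatched deterministically rather than bounded in probability. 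Everything else is a routine Chernoff estimate plus the Frobenius computation already carried out in Lemma~\ref{lem:error_second_deter}.
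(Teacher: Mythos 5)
Your proof is correct and follows essentially the same route as the paper's: a deterministic Frobenius bound in terms of the number of nodes whose community differs between times $t-k$ and $t$, Hoeffding's inequality applied to the independent per-node change indicators (the paper equivalently lower-bounds the unchanged fraction by the ``never moved'' indicators $a_i\sim\mathrm{Ber}((1-\varepsilon_n)^k)$), and a union bound over the at most $1/\varepsilon_n$ relevant values of $k$. If anything, your write-up is more explicit than the paper's about why the union bound costs $\log\frac{1}{\varepsilon_n}$ rather than $\log t$, namely that the regime $k\varepsilon_n\geq 1$ is handled deterministically via $\norm{P_{t-k}-P_t}_F^2\leq n^2\alpha_n^2$.
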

The proof is in Appendix \ref{app:additional_proofs}. 
Using this Lemma, if \eqref{eq:cond_epsilon} is satisfied we obtain that with probability at least $1-n^{-\nu}$, \eqref{eq:error_second_proof} is satisfied for all $k$. Using the rest of the proof of Lemma \ref{lem:error_second_deter}, \eqref{eq:error_second} is valid in the Markov DSBM model, with $n$ instead of $\ntaumax$. The rest of the proof is the same as the deterministic case.

\subsection{Concentration of Laplacian: proof of Theorem~\ref{thm:main_Lapl}}

A crucial part of handling the normalized Laplacian is to lower-bound the degrees of the nodes, since we later manipulate the inverse of the degree matrix. Under our hypotheses, the minimal expected degree is of the order of $\alpha_n \ntaumin$, so we need to bound the deviation of the degrees with respect to this quantity. We revisit the bounded degree lemma.

\begin{lemma}[Bounded degree revisited.]\label{lem:bounded-degree-Lapl}
Under the deterministic DSBM, for all $c$,
\[
\mathbb{P}(\max_i \abs{d_i - \bar d_i} \geq c \ntaumin \alpha_n) \leq \exp\left( - \frac{c^2/2}{\Cbeta +2c/3} \cdot \frac{\ntaumin \alpha_n}{\commratio\betamax}+ \log n\right)
\]
\end{lemma}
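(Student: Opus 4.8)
The plan is to mimic the proof of Lemma~\ref{lem:bounded-degree}, replacing the crude variance bound used there by a sharper one that exploits the bounded-community-size assumption of the deterministic DSBM. Fix a node $i$ and write, exactly as in that proof, $d_i = \sum_{k=0}^t\sum_{j\neq i}\beta_k a_{ij}^{(t-k)} = \sum_{k,j} Y_{jk}$ with $Y_{jk} = \beta_k a_{ij}^{(t-k)}$, so that the $Y_{jk}$ are independent, $\mathbb{E}(Y_{jk}) = \beta_k p_{ij}^{(t-k)}$, and $\abs{Y_{jk}-\mathbb{E}Y_{jk}} \leq \beta_k \leq \betamax$.

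The only new ingredient is the variance estimate. Since at every time step the community sizes lie in $[n_{\min},n_{\max}]$, the definition \eqref{eq:ntaudef} gives $\sum_j p_{ij}^{(t-k)} \leq \alpha_n\ntaumax$ for all $i$ and $k$. Combined with $Var(Y_{jk}) \leq \beta_k^2 p_{ij}^{(t-k)}$ and the condition $\sum_k\beta_k^2 \leq \Cbeta\betamax$ from \eqref{eq:cond_beta}, this yields $\sum_{k,j}Var(Y_{jk}) \leq \Cbeta\betamax\alpha_n\ntaumax = \Cbeta\betamax\alpha_n\commratio\ntaumin$. In Lemma~\ref{lem:bounded-degree} one could afford to bound this by $\Cbeta\betamax\alpha_n n$, but here the tighter $\ntaumax$-bound is exactly what lets us control a deviation of order $\ntaumin\alpha_n$ instead of $n\alpha_n$.

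Then I apply Bernstein's inequality to $d_i-\bar d_i$ with deviation $x = c\ntaumin\alpha_n$, variance proxy $\Cbeta\betamax\alpha_n\commratio\ntaumin$ and bound $\betamax$. The exponent becomes
\[
\frac{c^2\ntaumin^2\alpha_n^2/2}{\Cbeta\betamax\alpha_n\commratio\ntaumin + \betamax c\ntaumin\alpha_n/3} = \frac{c^2\ntaumin\alpha_n/2}{\betamax\pa{\Cbeta\commratio + c/3}}.
\]
Using $\commratio \geq 1$ one has $\Cbeta\commratio + c/3 \leq \commratio\pa{\Cbeta + 2c/3}$, so this exponent is at least $\frac{c^2/2}{\Cbeta+2c/3}\cdot\frac{\ntaumin\alpha_n}{\commratio\betamax}$, and a union bound over the $n$ nodes produces the extra $\log n$ term, giving the claim.

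There is no real obstacle here: the argument is a routine rerun of the bounded-degree lemma. The only point to be careful about is that the sharper variance bound genuinely requires the deterministic DSBM hypothesis (bounded community sizes at every time step, hence $\ntaumax$ and $\commratio$ well defined), which is why this version is stated only for that model; and one should double-check the elementary inequality $\Cbeta\commratio + c/3 \leq \commratio\pa{\Cbeta+2c/3}$ used to massage the exponent into the stated form.
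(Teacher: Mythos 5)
Your proof is correct and follows essentially the same route as the paper: rerun the bounded-degree lemma with the sharper variance bound $\sum_{k,j}\mathrm{Var}(Y_{jk})\leq \Cbeta\betamax\alpha_n\ntaumax$ coming from $\sum_j p_{ij}^{(t-k)}\leq\alpha_n\ntaumax$, apply Bernstein at deviation $c\ntaumin\alpha_n$, and use $\commratio\geq 1$ to put the exponent in the stated form. The only (immaterial) difference is the constant in the linear term of Bernstein's denominator, since the paper bounds $\abs{Y_{jk}-\mathbb{E}Y_{jk}}$ by $2\betamax$ rather than $\betamax$.
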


\begin{proof}
We do the exact same proof as Lemma \ref{lem:bounded-degree}, but we remark that $\sum_{k,j} Var(Y_{jk}) \leq \Cbeta \ntaumax \alpha_n\betamax$, since $\sum_i p^{(t-k)}_{ij} \leq \alpha_n \ntaumax$ for all $k,i$.
Therefore, applying Berstein's inequality, we have
\begin{align*}
\mathbb{P}(\abs{d_i-\bar d_i} \geq c \ntaumin \alpha_n) &\leq \exp\left(-\frac{c^2\ntaumin^2 \alpha_n^2/2}{\Cbeta \ntaumax\alpha_n\betamax + \frac{2}{3}\betamax c\ntaumin \alpha_n}\right)
\end{align*}
Applying a union bound over the nodes $i$ proves the result.
\end{proof}
To lower-bound $d_i$, we use Lemma \ref{lem:bounded-degree-Lapl} with $0<c<1$, for instance $c= \frac12$. The sparsity hypothesis \eqref{eq:cond_alpha_Lapl} in the theorem comes directly from this: it uses $\ntaumin$ instead of $n$, and the multiplicative constant $C'_\nu$ actually depends on the desired concentration rate $\nu$, unlike the previous case of the adjacency matrix where $\nu$ could be obtained by adjusting $c$ in Lemma \ref{lem:bounded-degree}. Let us now turn to the proof of the theorem.

As before, we divide the bound in two parts:
\begin{equation}\label{eq:decompose_error_Lapl}
\norm{L(\Asmooth_t) - L(P_t)} \leq \norm{L(\Asmooth_t) - L(\Psmooth_t)} + \norm{L(\Psmooth_t) - L(P_t)}
\end{equation}

The first bound is handled with a general concentration theorem.

\begin{theorem}\label{thm:concentration_Lapl}
Let $A_1,\ldots,A_t \in \{0,1\}^{n\times n}$ be $t$ symmetric Bernoulli matrices whose elements $a_{ij}^{(k)}$ are independent random variables:
\[
a_{ij}^{(k)} \sim \text{Ber}(p_{ij}^{(k)}),\quad a_{ji}^{(k)} = a_{ij}^{(k)},\quad a_{ii}^{(k)} = 0
\]
Consider non-negative weights $\beta_k$ that satisfy \eqref{eq:cond_beta}. Denoting $A = \sum_{k=0}^t \beta_k A_{t-k}$ and $P = \mathbb{E}(A)$. Assume $p_{ij}^{(k)} \leq \alpha_n$, and that there is $\ntaumin, \ntaumax$ such that for all $i$, $\alpha_n \ntaumin \leq \sum_{j} p_{ij} \leq \alpha_n \ntaumax$.
Then there is a universal constant $C$ such that for all $c > 0$ we have 
\begin{align}
\mathbb{P}\Big(\norm{L(A)-L(P)} \geq \frac{C(1+c)\commratio}{\ntaumin}&\sqrt{\frac{n\betamax}{\alpha_n}} \Big) \label{eq:concentration_Lapl} \\
\leq&~ e^{-\left(\frac{c^2/2}{2\Cbeta + \frac{2}{3} c}- \log(14)\right) n} + e^{ - \frac{c^2/2}{\Cbeta +2c/3} \cdot \frac{n \alpha_n}{\betamax}+ \log n } \notag \\
&+ e^{ - \frac{1/8}{\Cbeta +1/3} \cdot \frac{\ntaumin \alpha_n}{\commratio\betamax}+ \log n } + n^{-\frac{c}{4}+6} \notag
\end{align}
\end{theorem}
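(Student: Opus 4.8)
The plan is to reduce everything to the deterministic identity
\begin{equation*}
L(A) - L(P) = D(A)^{-\frac12}(A-P)D(A)^{-\frac12} + \big(D(A)^{-\frac12}P D(A)^{-\frac12} - D(P)^{-\frac12}P D(P)^{-\frac12}\big),
\end{equation*}
where $A=\sum_k\beta_k A_{t-k}$ and $P=\mathbb{E}(A)$. I would work on the event $\mathcal{E}$ on which both $\norm{A-P}\leq C(1+c)\sqrt{n\alpha_n\betamax}$ (Theorem~\ref{thm:concentration}) and $\max_i\abs{d_i-\bar d_i}\leq\tfrac12\ntaumin\alpha_n$ (Lemma~\ref{lem:bounded-degree-Lapl} with the fixed choice $c=\tfrac12$, which produces exactly the term $e^{-\frac{1/8}{\Cbeta+1/3}\cdot\frac{\ntaumin\alpha_n}{\commratio\betamax}+\log n}$, since $\tfrac{c^2/2}{\Cbeta+2c/3}=\tfrac{1/8}{\Cbeta+1/3}$ at $c=\tfrac12$). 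Union-bounding the complement of $\mathcal{E}$ gives precisely the four terms of \eqref{eq:concentration_Lapl}. On $\mathcal{E}$ the degrees are pinched, $\tfrac12\ntaumin\alpha_n\leq d_i\leq\tfrac32\ntaumax\alpha_n$, so $\norm{D(A)^{-1/2}}\leq\sqrt{2/(\ntaumin\alpha_n)}$ and the first summand is at most $\norm{D(A)^{-1/2}}^2\norm{A-P}\lesssim\tfrac{1+c}{\ntaumin\alpha_n}\sqrt{n\alpha_n\betamax}=\tfrac{1+c}{\ntaumin}\sqrt{n\betamax/\alpha_n}$, which already fits the target since $\commratio\geq1$. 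The whole point is then to show that the second summand does not spoil this rate.

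For the second summand, write it as $TL(P)T-L(P)$ with $T=\diag\big(\sqrt{\bar d_i/d_i}\,\big)$ (indeed $D(A)^{-1/2}=TD(P)^{-1/2}$). The matrix $L(P)$ has the unit-eigenvalue Perron eigenvector $\hat v\propto(\sqrt{\bar d_i})_i$, so I would split $L(P)=\hat v\hat v^\top+R$ with $R\hat v=0$, $\norm{R}\leq1$, and---the key structural point---uniformly small rows $\norm{R_{i,:}}^2\lesssim1/\ntaumin$ (using $p_{ij}^2\leq\alpha_np_{ij}$ and $\sum_j p_{ij}/\bar d_j\leq\bar d_i/(\ntaumin\alpha_n)$). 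The rank-one part contributes at most $\norm{(T-I)\hat v}(\norm{T\hat v}+1)$; here $\norm{T\hat v}\lesssim\commratio$ from the pinching, and, using $\abs{\sqrt{\bar d_i/d_i}-1}\lesssim\abs{d_i-\bar d_i}/(\ntaumin\alpha_n)$ on $\mathcal{E}$,
\begin{equation*}
\norm{(T-I)\hat v}^2=\frac{\sum_i(\sqrt{\bar d_i/d_i}-1)^2\bar d_i}{\sum_i\bar d_i}\lesssim\frac{\sum_i(d_i-\bar d_i)^2}{\ntaumin\alpha_n\sum_i\bar d_i}\leq\frac{n\norm{A-P}^2}{\ntaumin^2\alpha_n^2}\lesssim\frac{n^2\betamax}{\ntaumin^2\alpha_n},
\end{equation*}
using $\sum_i(d_i-\bar d_i)^2=\norm{(A-P)\mathbf{1}}^2\leq n\norm{A-P}^2$ and $\sum_i\bar d_i\geq n\ntaumin\alpha_n$. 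Hence the rank-one part is $\lesssim\tfrac{\commratio}{\ntaumin}\sqrt{n\betamax/\alpha_n}$, within the target.

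The remaining piece $TRT-R=(T-I)RT+R(T-I)$ is where the real difficulty lies, and I expect it to be the main obstacle: the crude operator bound $\norm{(T-I)RT}\leq\norm{T-I}\,\norm{R}\,\norm{T}\lesssim\sqrt{\commratio}$ is only $O(1)$, so one genuinely has to use structure. The natural route is a Frobenius estimate $\norm{(T-I)R}_F\leq(\max_i\norm{R_{i,:}})\norm{T-I}_F$ together with $\norm{T-I}_F^2=\sum_i(\sqrt{\bar d_i/d_i}-1)^2\lesssim n\norm{A-P}^2/(\ntaumin\alpha_n)^2$ as above, which (with $\norm{R_{i,:}}^2\lesssim1/\ntaumin$ and $\norm{T}\lesssim\sqrt{\commratio}$) delivers $\norm{TRT-R}\lesssim\tfrac{\commratio}{\ntaumin}\sqrt{n\betamax/\alpha_n}$ whenever $\ntaumin$ is of order $n$ and $\commratio$ is bounded; when $\ntaumin=o(n)$ this estimate loses a factor, and one must either sharpen the control on $\sum_i(d_i-\bar d_i)^2$ down to its natural size $\sum_i\mathrm{Var}(d_i)\lesssim n\betamax\ntaumax\alpha_n$ (a dedicated concentration step, to be checked that it adds no term beyond order $e^{-\Omega(n)}$), or exploit the low rank and flat eigenvectors of $R$, or rerun the light/heavy net argument of Theorem~\ref{thm:concentration} directly on $x\mapsto x^\top(TRT-R)x$, the pinching of the weights $\sqrt{\bar d_i/d_i}$ on $\mathcal{E}$ being exactly what keeps that argument under control. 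Once a deterministic bound of the target order on $\norm{TRT-R}$ is secured, adding it to the first summand and the rank-one contribution yields \eqref{eq:concentration_Lapl}; no optimization over $\beta_k$ is involved here since the weights are fixed.
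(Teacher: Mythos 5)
Your global architecture matches the paper's: the good event combining Theorem~\ref{thm:concentration} with Lemma~\ref{lem:bounded-degree-Lapl} at $c=\tfrac12$, the resulting four failure-probability terms, and the bound on $D(A)^{-1/2}(A-P)D(A)^{-1/2}$ are all correct, and you have located the crux accurately. But the crux is not resolved. Your control of $TRT-R$ via $\norm{(T-I)R}_F\leq(\max_i\norm{R_{i,:}})\norm{T-I}_F$ only reaches the target rate when $\ntaumin\sim n$ and $\commratio=O(1)$, whereas the theorem is stated (and used downstream, e.g.\ with $\ntaumin,\ntaumax\sim n/K$ for $B_0$ as in \eqref{eq:Bdef} with $\tau\sim\tfrac1K$) for general $\ntaumin$; in that regime your estimate loses a factor of order $\sqrt{n/\ntaumin}$. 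The patch you propose---that $\sum_i(d_i-\bar d_i)^2\lesssim n\betamax\ntaumax\alpha_n$ uniformly with exponentially small failure probability---is itself an unproven concentration statement about dependent quantities (each edge enters two degrees), and the only bound available on your event is the much weaker worst-case $\sum_i(d_i-\bar d_i)^2\lesssim n(\ntaumin\alpha_n)^2$, which is exactly what you already used. So the proposal is genuinely incomplete at the step you flag. (Minor slip besides: your display should give $\norm{(T-I)\hat v}^2\lesssim n\betamax/(\ntaumin^2\alpha_n)$, not $n^2\betamax/(\ntaumin^2\alpha_n)$, using $\sum_i\bar d_i\geq n\ntaumin\alpha_n$; with that correction the rank-one part is indeed within the target.)

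For comparison, the paper closes exactly this gap differently: by Lemma~\ref{lem:technicalLaplacianBound} the degree-correction term is bounded deterministically by $\norm{(D-D_P)P}/d_{\min}^2$, and then a \emph{second} run of the grid/light-and-heavy-pairs argument of Theorem~\ref{thm:concentration} is applied to the bilinear form $\sum_{ij}x_iy_jp_{ij}\delta_i$ with $\delta_i=d_i-\bar d_i=\sum_{\ell,k}\beta_k w^{(t-k)}_{i\ell}$: light pairs are handled by Bernstein, with $\sum_j p_{ij}\leq\alpha_n\ntaumax$ controlling both the weight bound and the variance, and heavy pairs reuse the already-established bound $\sum_{\mathcal{H}}\abs{x_iy_j}p_{ij}\leq\sqrt{n\alpha_n\betamax}$ together with $\abs{\delta_i}\leq d_{\max}$. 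This yields $\norm{(D-D_P)P}\lesssim\ntaumax\alpha_n^{3/2}\sqrt{n\betamax}$ with failure probability $e^{-\Omega(n)}$, and dividing by $d_{\min}^2=(\ntaumin\alpha_n)^2$ gives the stated $\frac{\commratio}{\ntaumin}\sqrt{n\betamax/\alpha_n}$. The essential ingredient is thus a fresh concentration step exploiting cancellation over $i$ in the $\delta_i$, not merely their uniform bound; your third fallback (``rerun the light/heavy net argument'') is the right idea, but it constitutes the core of the proof rather than a detail to be checked, so as written the proposal does not establish the theorem.
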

The proof is in Appendix \ref{app:Lapl}. 
Similar to the adjacency matrix case, we thus obtain
\begin{align*}
\norm{L(\Asmooth_t) - L(\Psmooth_t)} \leq \norm{L(\Asmooth_t) - L(\mathbb{E}(\Asmooth_t))}  + \norm{L(\mathbb{E}(\Asmooth_t)) - L(\Psmooth_t)} 
\end{align*}
and by Lemma \ref{lem:technicalLaplacianBound}, the second term is negligible since $\mathbb{E}(\Asmooth_t)$ and $\Psmooth_t$ only differ by their diagonal, of the order of $\alpha_n$.

The second bound is handled in the same way as the adjacency matrix in the deterministic case.

\begin{lemma}\label{lem:error_second_Lapl}
Under the deterministic DSBM, we have
\[
\norm{L(P_t) - L(\Psmooth_t)} \lesssim \frac{C'_\beta \commratio}{\ntaumin} \sqrt{\frac{n \ntaumax \varepsilon_n}{\betamax}} 
\]
\end{lemma}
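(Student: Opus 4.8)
The plan is to transport the spectral-norm bound on $\Psmooth_t-P_t$ established in Lemma~\ref{lem:error_second_deter} to the normalized Laplacian, via the technical perturbation bound of Lemma~\ref{lem:technicalLaplacianBound}. First I would observe that $P_t$ and $\Psmooth_t=\sum_k\beta_kP_{t-k}$ are both symmetric, entrywise nonnegative, and have all their degrees in $[\alpha_n\ntaumin,\alpha_n\ntaumax]$: each $P_{t-k}$ has this property and the weights sum to $1$. In particular their minimal degrees are $\gtrsim\alpha_n\ntaumin$ and the degree ratio $\commratio$ controls the relevant quantities. Applying Lemma~\ref{lem:technicalLaplacianBound} with the two matrices $P_t$ and $\Psmooth_t$, the Laplacian distance is controlled by a term proportional to $\frac{1}{\alpha_n\ntaumin}\norm{P_t-\Psmooth_t}$ together with a term governed by the mismatch between the degree vectors $\bar d_{i,t}=\sum_j(P_t)_{ij}$ and $\bar d^{\mathrm{sm}}_i=\sum_j(\Psmooth_t)_{ij}$, with the structure of that mismatch (in particular the fact that it may be large only on few coordinates) also taken into account. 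Feeding in $\norm{P_t-\Psmooth_t}\lesssim\CCbeta\alpha_n\sqrt{n\ntaumax\varepsilon_n/\betamax}$ from Lemma~\ref{lem:error_second_deter} already yields a contribution $\lesssim\frac{\CCbeta}{\ntaumin}\sqrt{n\ntaumax\varepsilon_n/\betamax}$, inside the target, so the work is entirely in the degree-mismatch term.

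For the mismatch, using $B_0$ of the form \eqref{eq:Bdef} one computes $\bar d_{i,t}-\bar d^{\mathrm{sm}}_i=\alpha_n(1-\tau)\sum_k\beta_k\bigl(n^{(t)}_{c_i(t)}-n^{(t-k)}_{c_i(t-k)}\bigr)$, where $c_i(s)$ denotes the community of $i$ at time $s$ and $n^{(s)}_\ell$ the community sizes. I would split each summand according to whether $i$ has changed its own community at lag $k$. At the lags where it has not, $c_i(t-k)=c_i(t)=:c$ and $|n^{(t)}_c-n^{(t-k)}_c|\le\min(ks,\,n_{\max}-n_{\min})$ with $s=\varepsilon_n n$; writing $\min(ks,M)=M\min(1,ks/M)$ for $M:=n_{\max}-n_{\min}$ and invoking the summability condition in \eqref{eq:cond_beta} with parameter $s/M$ in place of $\varepsilon_n$ — legitimate for the uniform and exponential estimators, which satisfy the stronger $\sum_k\beta_k\sqrt k\lesssim\betamax^{-1/2}$ — this part is bounded uniformly in $i$ by $\lesssim\alpha_n(1-\tau)\sqrt{Mn\varepsilon_n/\betamax}$. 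The elementary observation $(1-\tau)^2(n_{\max}-n_{\min})\le(1-\tau)n_{\max}\le\ntaumax$ then upgrades it to $\lesssim\alpha_n\sqrt{n\ntaumax\varepsilon_n/\betamax}$, which after dividing by $\alpha_n\ntaumin$ and multiplying by the $\commratio$ coming from Lemma~\ref{lem:technicalLaplacianBound} is inside the claimed bound. (Replacing $\sqrt n$ by $\sqrt\ntaumax$ here — i.e.\ using that community-size drift is bounded by $n_{\max}-n_{\min}$, not merely by $n$ — is exactly what produces the right rate.)

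The main obstacle is the part of the mismatch coming from lags at which $i$ has changed its own community: there $|n^{(t)}_{c_i(t)}-n^{(t-k)}_{c_i(t-k)}|$ can only be bounded by $n_{\max}-n_{\min}$, so for a node that recently moved, $|\bar d_{i,t}-\bar d^{\mathrm{sm}}_i|$ may be as large as $\alpha_n(\ntaumax-\ntaumin)$, which is not small against $\alpha_n\ntaumin$; a uniform $\max_i$ bound is therefore hopeless. What must be exploited — and what Lemma~\ref{lem:technicalLaplacianBound} is designed to use — is that this large mismatch is supported on a \emph{sparse} vertex set: at most $s$ nodes change community per step, so only $O(js)$ nodes carry a mismatch of order $\alpha_n(\ntaumax-\ntaumin)\sum_{k\ge j}\beta_k$, and the tail weight $\sum_{k\ge j}\beta_k$ decays (geometrically for the exponential estimator, linearly and then vanishing for the uniform one). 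Since $P_t$ restricted to any $m$ rows has spectral norm $\lesssim\alpha_n\sqrt{m\,\ntaumax}$ (because $(P_t^2)_{ij}\le\alpha_n^2\ntaumax$) and the rows of $L(P_t)$ have $\ell^2$-norm $\lesssim\sqrt{\commratio/\ntaumin}$, the contribution of these few badly normalized coordinates to $\norm{L(P_t)-L(\Psmooth_t)}$ is small and summable over $j$, and tracking it against the weight decay reproduces exactly the residual $\frac{\commratio}{\ntaumin}\sqrt{n\ntaumax\varepsilon_n/\betamax}$-type term. Finally, as in the adjacency case the amount of smoothing is already fixed by $\betamax=\rho_n$, so no further optimization is required and the stated bound follows.
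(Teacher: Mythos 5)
Your overall architecture matches the paper's: both proofs invoke Lemma~\ref{lem:technicalLaplacianBound} to reduce the claim to the term $\norm{P_t-\Psmooth_t}/d_{\min}$ (disposed of by Lemma~\ref{lem:error_second_deter}) plus the degree-mismatch term $\norm{(D-\bar D)\Psmooth_t}/d_{\min}^2$, and the entire difference lies in how that second term is handled. The paper does it in one stroke and without using the special form \eqref{eq:Bdef}: it bounds $\sum_k\beta_k\norm{(D-D_{t-k})\Psmooth_t}_F$, writes the degree change of node $i$ as $\sum_\ell(p^{(t)}_{i\ell}-p^{(t-k)}_{i\ell})$, and applies Cauchy--Schwarz to get $\pa{\sum_\ell(p^{(t)}_{i\ell}-p^{(t-k)}_{i\ell})}^2\le 4\alpha_n\ntaumax\sum_\ell\pa{\sqrt{p^{(t)}_{i\ell}}-\sqrt{p^{(t-k)}_{i\ell}}}^2$; summing over $i$ turns the right-hand side into $\norm{P_t^{\odot\frac12}-P_{t-k}^{\odot\frac12}}_F^2\lesssim\alpha_n\ntaumax\min(n,ks)$ by the same changed-nodes count as in Lemma~\ref{lem:error_second_deter}, which uniformly absorbs both of your cases (nodes whose own community changed and nodes that merely see their community's size drift). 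Your route instead makes the degree change explicit in terms of community sizes and splits by whether node $i$ itself moved; the ``did not move'' half is correct as written, and the ``moved'' half can indeed be closed along the lines you sketch: the bad diagonal entries are $O(\alpha_n\ntaumax T_{k_i})$ with $T_j=\sum_{k\ge j}\beta_k$ and $k_i$ the first lag at which $i$ moved, at most $s$ nodes have $k_i=j$ for each $j$, and $\norm{\Delta^{\textup{bad}}\Psmooth_t}_F^2\le\alpha_n^2\ntaumax\sum_i(\Delta^{\textup{bad}}_{ii})^2\lesssim\alpha_n^4\ntaumax^3\, s\sum_jT_j^2$, which after dividing by $d_{\min}^2$ gives exactly the target provided $\sum_jT_j^2\lesssim1/\betamax$. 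What each approach buys: the paper's square-root trick is shorter, needs nothing beyond \eqref{eq:cond_beta}, and is not tied to \eqref{eq:Bdef}; yours is more transparent about where the difficulty actually sits (the few recently-moved, badly-normalized vertices) at the cost of extra bookkeeping.

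The one genuine loose end in your write-up is that the tail-weight control $\sum_jT_j^2\lesssim1/\betamax$ (implied, e.g., by $\sum_k k\beta_k\lesssim1/\betamax$) is not among the conditions \eqref{eq:cond_beta}. It does hold for both the uniform and the exponential weights, but you must state and verify it, because the ``summable over $j$ against the weight decay'' step is precisely where the claimed rate is produced or lost; as written, that step is asserted rather than proved.
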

The proof is in Appendix \ref{app:additional_proofs}.

At the end of the day, we obtain
\begin{equation}
\norm{L(\Asmooth_t) - L(P_t)} \lesssim \frac{\commratio}{\ntaumin \alpha_n} (\err_1(\betamax) + \err_2(\betamax))
\end{equation}
Which is minimized for the same choice of $\betamax \sim \rho_n$.

\section{Conclusion and outlooks}

In the DSBM, it should come as no surprise that a model that is very regular should not need to be as dense as when treading with a single snapshot. Our analysis is the first to show this, for classic SC, in a non-asymptotic manner. Under a slightly stronger condition on the regularity than that in \cite{Pensky2019}, we showed that strong consistency guarantees can be obtained even in the sparse case. We extended the results to the normalized Laplacian and, although we obtain the same final error rate as the adjacency matrix, our analysis also yields, to our knowledge, the best non-asymptotic spectral bound concentration of the normalized Laplacian for Bernoulli matrices with independent edges.

In this theoretical paper, we did not discuss how to select in practice the various parameters of the algorithms such as the number of communities $K$ or the forgetting factor $\lambda$. This is left for future investigations, as well as the analysis of varying $K$, $n$, or $B$. As we mentioned in Remark \ref{rem:constant_eps}, an outstanding conjecture about the sparse case and $\varepsilon_n \sim 1$ is formulated in \cite{Ghasemian2016}. Finally, our new spectral concentration of the normalized Laplacian, which shows that $\norm{L(A) - L(P)} \to 0$ in the relatively sparse case, may have consequences in other asymptotic analyses of the spectral convergence of the normalized Laplacian \cite{VonLuxburg2008, Tang2018a, Levie2019}.

\section*{Acknowledgements}
S. Vaiter was supported by ANR GraVa ANR-18-CE40-0005 and Projet ANER RAGA G048CVCRB-2018ZZ. We thank Nicolas Verzelen for useful discussion and pointing us to references.

\bibliographystyle{plain}
\bibliography{bibdsbm_cleaned,library}

\appendix
\section{Proofs}

\subsection{Proof of Lemma~\ref{lem:reco}}\label{app:proof_reco}

%\begin{proof}[Proof of Lemma~\ref{lem:reco}]
    From \cite[Section 5.4]{Lei2015}, for \emph{any} matrix $M \in \RR^{K \times K}$ and $Q = \Theta M \Theta^\top$, given an estimator $\hat Q$ that we feed to the SC algorithm it holds that
    \[
    E(\hat \Theta, \Theta) \lesssim (1+\delta) \frac{n'_{\max} K}{n n_{\min}^2 \gamma_M^2}\norm{\hat Q - Q}^2
    \]
    where $\gamma_M$ is the smallest eigenvalue of $M$.
    
    When using the adjacency matrix $\hat P = A$ to estimate the probability matrix $P = \Theta B \Theta^\top$, we have $B = \alpha_n B_0$, and $\gamma_M=\alpha_n \gamma$, which gives us \eqref{eq:guaranteeP}. When $B_0$ is defined as \eqref{eq:Bdef}, we have $\gamma = 1-\tau$.
    
    In the Laplacian case, for a node $i\leq n$ belonging to a community $k\leq K$, we have $d_i = \sum_j p_{ij} = d'_k \eqdef (n_k-1) B_{kk} + \sum_{\ell\neq k} n_\ell B_{k\ell} \leq \alpha_n \ntaumax$, hence the Laplacian of the probability matrix $L(P)$ can be written as:
    \begin{align*}
    L(P) = D(P)^{-\frac12} P D(P)^{-\frac12} =D(P)^{-\frac12}\Theta B \Theta^\top D(P)^{-\frac12} = \Theta \left(D_B^{-\frac12} B D_B^{-\frac12} \right) \Theta^\top
    \end{align*}
    where $D_B = \diag(d'_k) \in \RR^{K\times K}$. Hence we can apply the result above with $M = D_B^{-\frac12} B D_B^{-\frac12}$, and since
    \[
    \norm{D_B^{\frac12} B^{-1} D_B^{\frac12}} \leq \frac{\alpha_n \ntaumax}{\alpha_n \gamma} \leq \frac{\ntaumax}{\gamma}
    \]
    the smallest eigenvalue of $D_B^{-\frac12} B D_B^{-\frac12}$ satisfies $\gamma_M \geq \frac{\gamma}{\ntaumax}$, which leads to the result.
%    \end{proof}

\subsection{Proof of Theorem \ref{thm:concentration}}\label{app:proof_thm_concentration}

%We are going to prove the following conditional probability:
%\[
%\forall z_i^t,\quad \mathbb{P}_A\pa{\norm{\Aexp_t - \Pexp_t} \leq C\sqrt{\alpha_n n \lambda} ~|~Z_i^{t} = z_i^t} \geq 1-\rho
%\]
%which, by the law of total probability, will automatically imply that with probability at least $1-\rho$ on both $A_t$ and $Z_i^t$, the desired property will be true. Hence in the rest of the proof we fix $Z_i^t = z_i^t$, and reason conditionally on the $Z_i^t$.
%
%Furthermore, noting that $\norm{\diag(P_t)} \leq \alpha_n \leq \sqrt{\alpha_n n}$, we will replace all $P_i$ by $\mathbb{E}(A_i)$, that is, we just zero-out the diagonal of the $P_i$'s.

The proof is heavily inspired by \cite{Lei2015}.
%Denote $a_{ij}^{(k)}$ and $p_{ij}^{(k)}$ the elements of $A_k$ and $P_k$. Define $W_k = A_k - P_k$ and $w_{ij}^{(k)}$ its elements. For simplicity, denote by $A = \Aexp_t$, $P = \Pexp_t$ and $W = A-P$. Recall that we have $\mathbb{E}(A_k) = P_k$, and therefore $\mathbb{E}(A) = P$, and each $a^{(k)}_{ij}$ are independent (recall that we reason conditionally on the $z_i^t$). Finally,
Define $P_k = \mathbb{E}(A_k)$, $W_k = A_k - P_k$ and $w_{ij}^{(k)}$ its elements, and their respective smoothed versions $A=\sum_{k=0}^t \beta_k A_{t-k}$, $P = \mathbb{E}(A)$, $W = A-P$, and $a_{ij}$, $p_{ij}$, $w_{ij}$ their elements.
Denote by $S$ the Euclidean ball in $\RR^n$ of radius $1$.
The proof strategy of \cite{Lei2015} is to define a grid
\[
T = \set{x \in S: 2\sqrt{n} x_i \in \mathbb{Z}}
\]
and simply note that (Lemma 2.1 in \cite{Lei2015} supplementary):
\[
\norm{W} = \sup_{u \in S} |u^\top W u| \leq 4\sup_{x,y \in T} \abs{x^\top W y}
\]
Hence we must bound this last quantity. To do this, for each given $(x,y)$ in $T$, we divide their indices into "light" pairs:
\[
\mathcal{L}(x,y) = \set{ (i,j): \abs{x_i y_j}\leq \sqrt{\frac{\alpha_n}{\betamax n}}}
\]
and "heavy" pairs $\mathcal{H}(x,y)$ are all the other indices. We naturally divide
\[
\sup_{x,y\in T} \abs{x^\top W y} \leq \sup_{x,y\in T} \abs{\sum_{(i,j) \in \mathcal{L}(x,y)} x_i y_j w_{ij}} + \sup_{x,y\in T} \abs{\sum_{(i,j) \in \mathcal{H}(x,y)} x_i y_j w_{ij}}
\]
and bound each of these two terms separately.

\subsubsection{Bounding the light pairs}

To bound the light pairs, Bernstein's concentration inequality is sufficient.

\begin{lemma}[Bounding the light pairs]\label{lem:light-pairs}
We have
\[
\mathbb{P}\left( \sup_{x,y\in T} \abs{\sum_{(i,j) \in \mathcal{L}(x,y)} x_i y_j w_{ij}} \geq c\sqrt{n \alpha_n \betamax}\right) \leq 2 e^{-\left(\frac{c^2/2}{2\Cbeta + \frac{2}{3} c}- \log(14)\right) n}
\]
for all constants $c>0$.
\end{lemma}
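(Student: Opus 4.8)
The plan is to apply Bernstein's inequality pointwise over the grid $T\times T$ and then union-bound. First I would fix $(x,y)\in T$ and, using that $W=\sum_{k=0}^t\beta_k W_{t-k}$ is symmetric with $w_{ii}^{(k)}=0$, regroup the light sum over unordered pairs and time lags:
\[
\sum_{(i,j)\in\mathcal L(x,y)} x_i y_j w_{ij} \;=\; \sum_{i<j}\sum_{k=0}^t c_{ij}\,\beta_k\, w_{ij}^{(t-k)},
\]
where $c_{ij} := x_i y_j\,\mathbf{1}_{\{(i,j)\in\mathcal L(x,y)\}} + x_j y_i\,\mathbf{1}_{\{(j,i)\in\mathcal L(x,y)\}}$ and the variables $\{w_{ij}^{(t-k)}\}_{i<j,\,0\le k\le t}$ are independent and centred. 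Then I would extract the two quantities Bernstein needs. The defining inequality of a light pair, $|x_iy_j|\le\sqrt{\alpha_n/(\betamax n)}$, forces $|c_{ij}|\le 2\sqrt{\alpha_n/(\betamax n)}$, so with $\beta_k\le\betamax$ and $|w_{ij}^{(t-k)}|\le 1$ each summand is bounded by $M:=2\sqrt{\alpha_n\betamax/n}$ in absolute value. For the variance, $\mathrm{Var}(w_{ij}^{(t-k)})\le p_{ij}^{(t-k)}\le\alpha_n$, $\sum_k\beta_k^2\le\Cbeta\betamax$ by \eqref{eq:cond_beta}, and $\sum_{i<j}c_{ij}^2\le 2\sum_{i,j}(x_iy_j)^2=2\|x\|^2\|y\|^2\le 2$, so the total variance is at most $\sigma^2:=2\Cbeta\alpha_n\betamax$; note that only the boundedness and sum-of-squares conditions of \eqref{eq:cond_beta} enter here.

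Next I would invoke Bernstein's inequality at threshold $\tau=c\sqrt{n\alpha_n\betamax}$. Since $M\tau/3=\frac{2c}{3}\alpha_n\betamax$ and $\tau^2/2=\frac{c^2}{2}n\alpha_n\betamax$, one gets $\sigma^2+M\tau/3\le\alpha_n\betamax\left(2\Cbeta+\frac{2}{3}c\right)$, hence
\[
\mathbb{P}\!\left(\left|\sum_{(i,j)\in\mathcal L(x,y)} x_i y_j w_{ij}\right|\ge c\sqrt{n\alpha_n\betamax}\right)\;\le\; 2\exp\!\left(-\frac{c^2/2}{2\Cbeta+\frac{2}{3}c}\,n\right).
\]
A union bound over the pairs $(x,y)\in T\times T$, using the net-counting estimate $|T|^2\le 14^n$ from \cite{Lei2015}, moves the factor $|T|^2$ into the exponent as the $\log(14)$ term and yields exactly the stated bound.

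The step I expect to cause the most trouble is the symmetry bookkeeping: when the two ordered pairs $(i,j)$ and $(j,i)$ are merged onto the single independent variable $w_{ij}$ with $i<j$, one must check that the combined coefficient $c_{ij}$ still satisfies both the uniform bound (feeding $M$, and coming straight from the definition of a light pair) and the $\ell_2$ bound $\sum_{i<j}c_{ij}^2\le 2$ (feeding $\sigma^2$), and the stray factors of $2$ must be tracked so that the constants $2\Cbeta$ and $\frac{2}{3}$ in the exponent are exactly as claimed rather than inflated. It is also worth recalling why the light/heavy split is needed at all: without restricting to $\mathcal L(x,y)$, the per-summand bound $M$ could be of constant order, in which case the sub-exponential term $M\tau/3$ in Bernstein's bound would dominate the variance term and destroy the estimate — which is precisely the role of the separate treatment of heavy pairs. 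Everything else is routine.
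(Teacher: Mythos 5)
Your proposal is correct and follows essentially the same route as the paper: the same merging of $(i,j)$ and $(j,i)$ into a single coefficient per independent variable $w_{ij}^{(t-k)}$, the same per-summand bound $2\sqrt{\alpha_n\betamax/n}$ and total variance $2\Cbeta\alpha_n\betamax$, Bernstein at threshold $c\sqrt{n\alpha_n\betamax}$, and a union bound over the grid absorbing the $\log(14)$ term. All constants check out exactly as in the stated bound.
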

\begin{proof}
The proof is immediate by applying Bernstein's inequality. Take any $(x,y) \in T$, denote $C =\sqrt{\frac{\alpha_n}{\betamax n}}$. Define $u_{ij} = x_iy_j 1_{(i,j) \in \mathcal{L}(x,y)} + x_j y_i 1_{(j,i) \in \mathcal{L}(x,y)}$ (which is necessary because the edges $(i,j)$ and $(j,i)$ are not independent). We have
\[
\sum_{(i,j) \in \mathcal{L}(x,y)} x_iy_j w_{ij} = \sum_{1\leq i < j \leq n} u_{ij} w_{ij} = \sum_{1\leq i < j \leq n}  \sum_{k=0}^t u_{ij} \beta_k w^{(t-k)}_{ij}
\]
where $w^{(t-k)}_{ij}$ is a centered Bernoulli variable of parameter $p^{(t-k)}_{ij}$. Hence for all $i<j$, $0\leq k\leq t$, let $Y_{ijk} = \beta_k u_{ij} w^{(t-k)}_{ij}$ be independent random variables such that $\mathbb{E}Y_{ijk} = 0$, $\abs{Y_{ijk}} \leq 2 C \betamax$ since $u_{ij} \leq 2C$, $\beta_k \leq \betamax$ and $w_{ij}^{(t-k)} \leq 1$, and $\sigma_{ijk}^2 := \text{Var}(Y_{ijk}) \leq 2\beta_k^2 (x_i^2 y_j^2 + x_j^2y_i^2) \alpha_n$ since $\text{Var}(w^{(t-k)}_{ij}) = p^{(t-k)}_{ij}(1-p^{(t-k)}_{ij}) \leq \alpha_n$. Note that
\begin{align*}
\sum_{i<j} \sum_k \sigma_{ijk}^2 &\leq 2\alpha_n (\sum_{i<j} x_i^2y_j^2 + x_j^2y_i^2) \cdot (\sum_{k=0}^t \beta_k^2) \\
&\leq 2\alpha_n (\sum_{i=1}^n x_i^2)(\sum_{j=1}^n y_j^2) \Cbeta \betamax \\
&\leq 2\Cbeta \alpha_n \betamax
\end{align*}
Hence, applying Bernstein's inequality:
\begin{align*}
\mathbb{P}\left( \abs{\sum_{(i,j) \in \mathcal{L}(x,y)} x_iy_j w_{ij}} \geq t\right) &\leq 2 \exp\left(-\frac{t^2/2}{2\Cbeta \alpha_n\betamax + \frac{2}{3} C \betamax t}\right)\\
\mathbb{P}\left( \abs{\sum_{(i,j) \in \mathcal{L}(x,y)} x_iy_j w_{ij}} \geq c\sqrt{\alpha_n\betamax n}\right) &\leq 2 \exp\left(-\left(\frac{c^2/2}{2\Cbeta + \frac{2}{3} c}\right) n\right)
\end{align*}
Then, we use the fact that $\abs{T} \leq e^{n\log(14)}$ (see proof of Lemma 3.1 in \cite{Lei2015}) and a union bound to conclude.
\end{proof}

\subsubsection{Bounding the heavy pairs}

To bound the heavy pairs, two main Lemmas are required: the so-called \emph{bounded degree} (Lemma \ref{lem:bounded-degree}) and \emph{bounded discrepancy} lemma, presented below. As mentioned before, the bounded degree lemma is key in improving the sparsity hypothesis, despite the simplicity of its proof. The bounded discrepancy lemma is closer to its original proof \cite{Lei2015}, that we reproduce here for completeness.

%respectively to bound the degrees and a so-called notion of ``discrepancy'' \cite{Lei2015}. The former is based on an application of Berstein inequality and is fairly simple to prove, however this is exactly where we note that we can improve the hypothesis on $\alpha_n$ compared to \cite{Pensky2019}. The rest of the proof is fairly similar to both \cite{Lei2015, Pensky2019} with adapted constants, but we reproduce it here for completeness.

\begin{lemma}[Bounded discrepancy]\label{lem:bounded-discrepancy}
For $I,J \subset \{1,\ldots,n\}$, we define
\[
\mu(I,J) = \alpha_n \abs{I}\abs{J},\quad e(I,J) = \sum_{k=0}^t \beta_k e_{t-k}(I,J)
\]
where $e_t(I,J)$ is the number of edges between $I$ and $J$ at time $t$. Then, for all $c, c'$, with probability $1-e^{- \frac{c^2/2}{\Cbeta +2c/3} \cdot \frac{n \alpha_n}{\betamax}+ \log n} - n^{-\frac{c'}{4}+6}$: for all $\abs{I}\leq \abs{J}$ at least one the following is true:
\begin{enumerate}
\item $e(I,J) \leq c'' \mu(I,J)$ with $c'' =\max(ec, 8)$
\item $e(I,J) \log\frac{e(I,J)}{\mu(I,J)} \leq c' \betamax \abs{J} \log \frac{n}{\abs{J}}$
\end{enumerate}
\end{lemma}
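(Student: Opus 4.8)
The plan is to run the \emph{bounded discrepancy} argument of Lei and Rinaldo~\cite{Lei2015} (going back to Feige--Ofek), adapted to the smoothed sum and carefully tracking the weight scale $\betamax$. First I would fix sizes $\abs{I} \le \abs{J}$ and write $e(I,J) = \sum_{i\in I,\, j\in J}\sum_{k=0}^t \beta_k a_{ij}^{(t-k)}$, being careful about edges counted twice when $I$ and $J$ overlap (treat $I\cap J$ separately, or pass to ordered pairs, as in~\cite{Lei2015}; this costs only absolute constants). Everything is then a union bound over the $\binom{n}{\abs{I}}\binom{n}{\abs{J}} \le \exp\!\big(2\abs{J}\log\tfrac{en}{\abs{J}}\big)$ choices of $(I,J)$ for each pair of sizes, so the combinatorial entropy to be beaten is of order $\abs{J}\log\tfrac{n}{\abs{J}}$.

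I would then split on the size of $J$. When $J$ is \textbf{large} (for a suitable threshold $\abs{J} \gtrsim n$), no fresh concentration is needed: on the bounded-degree event of Lemma~\ref{lem:bounded-degree} --- whose complement has probability at most $e^{-\frac{c^2/2}{\Cbeta+2c/3}\cdot\frac{n\alpha_n}{\betamax}+\log n}$, the first term in the claimed bound --- one has $e(I,J) \le \sum_{i\in I} d_i \le \abs{I}\big(\max_i \bar d_i + cn\alpha_n\big) \le (1+c)\,\abs{I}\,n\alpha_n$ because $\bar d_i \le n\alpha_n$. Dividing by $\mu(I,J)=\alpha_n\abs{I}\abs{J}$ gives $e(I,J)/\mu(I,J) \le (1+c)n/\abs{J} = \order{1}$, so alternative (1) holds with $c''=\max(ec,8)$. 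This is the step through which the bounded-degree lemma, and hence the weakened sparsity hypothesis via its $\betamax$ in the exponent, enters the proof.

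The real content is the \textbf{small-$J$} regime, handled by a Chernoff bound with the correct tail. For a fixed $(I,J)$, $e(I,J)$ is a sum of independent random variables $\beta_k a_{ij}^{(t-k)}$ --- independent over $k$ since the snapshots $A_k$ are independent, and over the edges --- each in $[0,\betamax]$ with mean $\le \beta_k\alpha_n$. Using $p_{ij}^{(t-k)}(e^{\theta\beta_k}-1) \le \tfrac{\beta_k}{\betamax}p_{ij}^{(t-k)}(e^{\theta\betamax}-1)$ and summing, the moment generating function satisfies $\mathbb{E}\,e^{\theta e(I,J)} \le \exp\!\big(\tfrac{\mu(I,J)}{\betamax}(e^{\theta\betamax}-1)\big)$, i.e.\ $e(I,J)$ behaves like $\betamax$ times a Poisson of mean $\mu(I,J)/\betamax$. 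Optimising $\theta$ in the resulting bound $\mathbb{P}(e(I,J)\ge\tau) \le \exp\!\big(-\tfrac{1}{\betamax}\,g(\tau,\mu(I,J))\big)$, with $g$ the Poisson/Bernoulli rate function, produces a deviation exponent of order $\tfrac{1}{\betamax}\,e(I,J)\log\tfrac{e(I,J)}{\mu(I,J)}$ once $e(I,J)\gtrsim\mu(I,J)$. Demanding that this exponent dominate the entropy $\abs{J}\log\tfrac{n}{\abs{J}}$, with a constant $c'/4$ absorbing the union bounds over $(I,J)$ and over all sizes, is exactly alternative (2), and the residual failure probability is the $n^{-\frac{c'}{4}+6}$ term. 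Finally I would union the small-$J$ failure event with the bounded-degree failure event.

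I expect the \textbf{main obstacle} to be bookkeeping rather than a conceptual hurdle: carrying $\betamax$ correctly and uniformly through the moment generating function of the \emph{weighted, heterogeneous} Bernoulli sum (the static case is the special case $\betamax=1$ of~\cite{Lei2015}), and then matching the optimised Chernoff exponent against the combinatorial entropy so that the two alternatives of the lemma partition all $(I,J)$ cleanly with the stated $c''=\max(ec,8)$ --- in particular choosing the large-versus-small threshold on $\abs{J}$ so that the $\order{1}$ bound from bounded degree and the entropy regime overlap. The overlapping-$I,J$ subtlety and the diagonal/symmetry conventions of the adjacency matrix will also need a careful but routine treatment.
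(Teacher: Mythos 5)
Your proposal follows essentially the same route as the paper's proof: the large-$\abs{J}$ regime is dispatched by the bounded-degree event of Lemma~\ref{lem:bounded-degree} (yielding alternative (1)), and the small-$\abs{J}$ regime by exactly the Poisson-type Chernoff bound you describe — the paper isolates your MGF computation $\mathbb{E}\,e^{\theta e(I,J)} \leq \exp\bigl(\tfrac{\mu}{\betamax}(e^{\theta\betamax}-1)\bigr)$ as a separate lemma (adapted from \cite{Pensky2019}) giving the tail $\exp\bigl(-\tfrac{\mu}{2\betamax}t\log(1+t)\bigr)$, then matches it against the entropy $\abs{J}\log\tfrac{n}{\abs{J}}$ via the same union bound. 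The approach and all key steps are correct.
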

Of course by symmetry it is also valid for $\abs{J}\leq \abs{I}$ with the same probability (inverting the role of $I$ and $J$ in the bounds).

To prove it, we need the following Lemma.
\begin{lemma}[Adapted from Lemma 9 in \cite{Pensky2019}]\label{lem:alon}
Let $X_1,..., X_n$ be independent variables such that $X_i = Y_i - \mathbb{E}Y_i$, where $Y_i$ is a Bernoulli random variable with parameter $p_i$. Define $X = \sum_i w_i X_i$ where $0 \leq w_i \leq w_{\max}$. Let $\mu$ be such that $\sum_{i=1}w_i p_i \leq \mu$. Then, for all $t\geq 7$, we have
\[
\mathbb{P}\left(X \geq t\mu\right) \leq \exp\left(-\frac{t\log(1+t) \mu}{2 w_{\max}}  \right)
\]
\end{lemma}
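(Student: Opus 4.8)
The plan is to run the standard exponential-moment (Chernoff) argument, using the Bernoulli structure to obtain a Bennett-type bound and then optimizing the free parameter $\theta$. The one point requiring attention is to arrange the computation so that the hypothesis $\sum_i w_i p_i \leq \mu$ — a \emph{first}-moment control, not a variance bound — is exactly what gets consumed.

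First I would fix $\theta > 0$ and write, by Markov's inequality applied to $e^{\theta X}$ together with independence,
\[
\mathbb{P}(X \geq t\mu) \leq e^{-\theta t \mu} \prod_{i=1}^n \mathbb{E}\!\left[e^{\theta w_i X_i}\right].
\]
Since $X_i = Y_i - p_i$ with $Y_i \sim \mathrm{Ber}(p_i)$, each factor equals $e^{-\theta w_i p_i}\big(1 + p_i(e^{\theta w_i}-1)\big)$, and using $1+x \leq e^x$ this is at most $\exp\!\big(p_i(e^{\theta w_i} - 1 - \theta w_i)\big)$. The map $w \mapsto e^{\theta w} - 1 - \theta w$ is convex and vanishes at $0$, hence on $[0,w_{\max}]$ it lies below its chord: $e^{\theta w_i} - 1 - \theta w_i \leq \frac{w_i}{w_{\max}}\big(e^{\theta w_{\max}} - 1 - \theta w_{\max}\big)$. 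Summing over $i$ and using $\sum_i w_i p_i \leq \mu$ (the chord value being nonnegative) yields
\[
\mathbb{P}(X \geq t\mu) \leq \exp\!\left(-\theta t\mu + \frac{\mu}{w_{\max}}\big(e^{\theta w_{\max}} - 1 - \theta w_{\max}\big)\right).
\]

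Next I would optimize. Writing $s = \theta w_{\max} > 0$, the exponent becomes $\frac{\mu}{w_{\max}}\big(e^s - 1 - s(1+t)\big)$, which is convex in $s$ and minimized at $s = \log(1+t)$, giving exponent $\frac{\mu}{w_{\max}}\big(t - (1+t)\log(1+t)\big)$. It then remains to verify the elementary scalar inequality $t - (1+t)\log(1+t) \leq -\frac{1}{2} t\log(1+t)$, which after rearranging reads $\big(1 + \frac{t}{2}\big)\log(1+t) \geq t$, i.e. $\log(1+t) \geq \frac{2t}{2+t}$; differentiating the difference reduces this to $(2+t)^2 \geq 4(1+t)$, i.e. $t^2 \geq 0$, so it holds for every $t \geq 0$ (in particular for $t \geq 7$). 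Substituting back gives precisely $\mathbb{P}(X \geq t\mu) \leq \exp\!\big(-\frac{t\log(1+t)\,\mu}{2 w_{\max}}\big)$.

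I do not expect a genuine obstacle: the whole statement is a weighted Bennett inequality. The only two places that demand a little care are the convexity/chord step — one must bound $e^{\theta w_i} - 1 - \theta w_i$ by something \emph{linear} in $w_i$ rather than quadratic, so that $\sum_i w_i p_i \leq \mu$ alone is enough and no second-moment hypothesis appears — and the final one-variable inequality, which is exactly where the logarithmic factor $\log(1+t)$ in the exponent comes from.
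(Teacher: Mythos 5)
Your proof is correct and follows essentially the same route as the paper's: a Chernoff bound with the convexity/chord estimate $e^{\theta w_i}-1-\theta w_i \leq \frac{w_i}{w_{\max}}(e^{\theta w_{\max}}-1-\theta w_{\max})$ so that only the first-moment hypothesis $\sum_i w_i p_i \leq \mu$ is consumed, followed by the choice $\theta w_{\max}=\log(1+t)$ and the same scalar inequality $t-\log(1+t)\leq \tfrac12 t\log(1+t)$. Your observation that this last inequality holds for all $t\geq 0$ (so the restriction $t\geq 7$ is not needed for this step) is a correct and slightly cleaner bookkeeping of the same argument.
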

\begin{proof}
For some $\lambda>0$ to be fixed later, have $\mathbb{E}(e^{\lambda w_i X_i}) = p_i e^{w_i (1-p_i)\lambda} + (1-p_i)e^{w_i p_i \lambda}$. Hence
\begin{align*}
\mathbb{E}(e^{\lambda X}) = \prod_i \mathbb{E}(e^{\lambda w_i X_i}) &= \prod_i \left(p_i e^{w_i (1-p_i)\lambda} + (1-p_i)e^{w_i p_i \lambda}\right) \\
&\leq e^{- \lambda \sum_i w_i p_i} \prod_i \left(1 + p_i(e^{w_i \lambda} - 1)\right)
\end{align*}
Using $1+a \leq e^a$ and $e^x - 1 \leq \frac{e^A - 1}{A}x$ for $0 \leq x \leq A$, we have
\[
\prod_i \left(1 + p_i(e^{w_i \lambda} - 1)\right) \leq \exp\left(\frac{e^{w_{\max}\lambda} - 1}{w_{\max}} \sum_i w_i p_i\right)
\]
Hence, for $t\geq 7$ and $\lambda = \frac{\log(1+t)}{w_{\max}}$,
\begin{align*}
\mathbb{P}(X \geq t\mu) &\leq e^{-t\mu\lambda} \mathbb{E}(e^{\lambda X}) \leq \exp\left(\left(\frac{e^{w_{\max}\lambda} - 1}{w_{\max}} - \lambda\right) \sum_i w_i p_i - \lambda t\mu\right) \\
&= \exp\left(\frac{1}{w_{\max}}\left((t - \log(1+t)) \sum_i w_i p_i - \log(1+t) t\mu\right)\right) \\
&\leq \exp\left(\frac{\mu}{w_{\max}}\left(t - \log(1+t) - \log(1+t) t\right)\right) \quad \text{since $\log(1+t)\leq t$ and $\sum_i w_i p_i \leq \mu$} \\
&\leq \exp\left(\frac{\mu}{2w_{\max}}t \log(1+t)\right)
\end{align*}
since $t - \log(1+t) \leq \frac{1}{2}t \log(1+t)$.
\end{proof}

The Lemma above is slightly stronger than Bernstein in this particular case: we would have obtained $O(t)$ instead of $O(t \log(1+t))$.
Now we can prove the bounded discrepancy lemma.

\begin{proof}[Proof of Lemma \ref{lem:bounded-discrepancy}]
We assume that the bounded degree property (Lemma \ref{lem:bounded-degree}) holds, which implies that for all $I,J$, it hold that:
\begin{align*}
e(I,J) &= \sum_{k=0}^t \beta_k e_{t-k}(I,J) \leq \sum_k  \beta_k \min\left(\sum_{i \in I} d_{i,t-k},~\sum_{j\in J}d_{j,t-k}\right) \\
&\leq \min\left(\sum_{i \in I} d_i,~\sum_{j\in J}d_j\right) \leq c n \alpha_n \min(\abs{I},\abs{J})
\end{align*}

Following this, for any pair $I,J$ such that $\abs{I}\geq n/e$ or $\abs{J}\geq n/e$ (where $e=\exp(1)$ is chosen for later conveniency), then $\frac{e(I,J)}{\mu(I,J)} \leq \frac{c n \alpha_n \min(\abs{I},\abs{J})}{\alpha_n |I| |J|} \leq c e$ and the result is proved.

Thus we now considers the pairs $I,J$ where both have size less than $n/e$, and such that $|I|\leq |J|$ without lost of generality. For such a given pair $I,J$, we decompose
\[
e(I,J) = \sum_{k=0}^t \sum_{(i,j)} \beta_k a_{ij}^{(t-k)} = \sum_{i,j,k} Y_{ijk}
\]
where the sum over $(i,j)$ counts only once each distinct edge between $I$ and $J$, and $Y_{ijk} = a_{ij}^{(t-k)}$ is a Bernoulli variable with parameter $p_{ij}^{(t-k)}$. Using $\sum_{i,j,k} \beta_k p_{ij}^{(t-k)} \leq |I||J| \alpha_n = \mu(I,J)$ and Lemma \ref{lem:alon}, we have, for any $t\geq 8$,
\begin{align*}
\mathbb{P}(e(I,J) \geq t \mu(I,J)) &\leq \mathbb{P}\left(e(I,J) - \mathbb{E}e(I,J) \geq (t-1)\mu(I,J)\right) \\
&\leq \exp\left(-\frac{ \mu(I,J) (t-1)\log k }{2\betamax}\right) \leq \exp\left(-\frac{ \mu(I,J) t\log t }{4\betamax}\right)
\end{align*}
Denoting $u=u(I,J)$ the unique value such that $u\log u = \frac{c' \betamax |J|}{\mu(I,J)}\log\frac{n}{|J|}$ and $t(I,J) = \max(8,u(I,J))$, we have (again for a fixed pair $I,J$ of size less that $n/e$):
\[
\mathbb{P}(e(I,J) \geq t(I,J) \mu(I,J)) \leq e^{-\frac{c'}{4} |J| \log\frac{n}{|J|}}
\]

Then, performing the same computations as in \cite{Lei2015} (reproduced here for completeness):
\begin{align*}
\mathbb{P}&\left(\exists I,J: |I|\leq |J| \leq \frac{n}{e},~ e(I,J) \geq t(I,J) \mu(I,J)\right) \\
&\leq \sum_{1\leq |I| \leq |J|\leq n/e} e^{-\frac{c'}{4} |J| \log\frac{n}{|J|}} = \sum_{1\leq h \leq g\leq n/e} \binom{n}{h}\binom{n}{g} e^{-\frac{c'}{4} g \log\frac{n}{g}} \\
&\leq \sum_{1\leq h \leq g\leq n/e} \left(\frac{ne}{h}\right)^h \left(\frac{ne}{g}\right)^g e^{-\frac{c'}{4} g \log\frac{n}{g}} \\
&\leq \sum_{1\leq h \leq g\leq n/e} \exp\left(h \log\frac{ne}{h} + g \log\frac{ne}{g} - \frac{c'}{4} g \log\frac{n}{g}\right) \\
&\leq \sum_{1\leq h \leq g\leq n/e} \exp\left(4g \log\frac{n}{g} - \frac{c'}{4} g \log\frac{n}{g}\right) \leq \sum_{1\leq h \leq g\leq n/e} n^{-\frac{c'}{4} + 4} \leq n^{-\frac{c'}{4} + 6}
\end{align*}
using the fact that $x\to x\log x$ is increasing on $[1, n/e]$. So, $e(I,J) \geq t(I,J) \mu(I,J)$ holds uniformly for all pairs $I,J$ with high probability.

Finally, we distinguish two cases depending on the value of $t(I,J)$. If $t(I,J)=8$ we get $e(I,J) \leq 8 \mu(I,J)$. If $t(I,J) = u(I,J) \geq 8$, we have $e(I,J) \leq u \mu(I,J)$, and
\[
\frac{e(I,J)}{\mu(I,J)}\log \frac{e(I,J)}{\mu(I,J)} \leq u \log u \leq \frac{c' \betamax}{\mu(I,J)}|J| \log\frac{n}{|J|}
\]
\end{proof}

We can now prove the bound on the heavy pairs, that is, we want to prove with high probability:
\[
\sup_{x,y\in T} \left| \sum_{(i,j) \in \mathcal{H}(x,y)} x_i y_j w_{ij}\right| \lesssim \sqrt{n \alpha_n \betamax}
\]
Since $p_{ij}\leq \alpha_n$ and by definition of the heavy pairs, for all $x,y \in T$:
\[
\left| \sum_{(i,j) \in \mathcal{H}(x,y)} x_i y_j p_{ij}\right| \leq \alpha_n \sum_{(i,j) \in \mathcal{H}(x,y)} \frac{x_i^2 y_j^2}{\abs{x_iy_j}} \leq \alpha_n \sqrt{\frac{n \betamax}{\alpha_n}} \norm{x}^2 \norm{y}^2 \leq \sqrt{n \alpha_n \betamax}
\]
Hence our goal is now to bound $\sup_{x,y\in T} \left| \sum_{(i,j) \in \mathcal{H}(x,y)} x_i y_j a_{ij}\right|$. We will show that, when the bounded degree and bounded discrepancy properties hold, this sum is bounded for all $x,y$. From now on, we assume that these results hold, and consider any $x,y \in T$.
Let us define sets of indices $I_s,J_t$ over which we bound uniformly $x_i$ and $y_j$, and replace the sum over $a_{ij}$ is these sets by $e(I_s,J_t)$. More specifically, we define
\begin{align*}
I_s &= \left\lbrace i: 2^{s-1}\frac{1/2}{\sqrt{n}} \leq \abs{x_i} < 2^s\frac{1/2}{\sqrt{n}}\right\rbrace \quad \text{for}\quad s=1,\ldots,\log_2(2\sqrt{n})+1 \\
J_t &= \left\lbrace j: 2^{t-1}\frac{1/2}{\sqrt{n}} \leq \abs{y_j} < 2^t\frac{1/2}{\sqrt{n}}\right\rbrace \quad \text{for}\quad t=1,\ldots,\log_2(2\sqrt{n})+1
\end{align*}
Since we consider heavy pairs, we need only consider indices $(s,t)$ such that $2^{s+t} \geq 16 \sqrt{\frac{\alpha_n n}{\betamax}}$, and we define $C_n = \sqrt{\frac{\alpha_n n}{\betamax}}$ for convenience. Moreover, we have $\sum_{i\in I_s, j\in J_t} a_{ij} \leq 2 e(I_s, J_t)$, since each edge indices appears at most twice. Hence, we have:
\begin{equation}\label{eq:proof_main_thm_maineq}
\left| \sum_{(i,j) \in \mathcal{H}(x,y)} x_i y_j a_{ij}\right| \leq \sum_{(s,t): 2^{s+t} \geq 16 C_n} 2^{s+t}\frac{1/4}{n} \cdot 2e(I_s, J_t)
\end{equation}

We now introduce more notations. We denote $\mu_s = \frac{4^s\abs{I_s}}{n}, \nu_t = \frac{4^t\abs{J_t}}{n}$, $\gamma_{st} = \frac{e(I_s, J_t)}{\alpha_n \abs{I_s}\abs{J_t}}$, $\sigma_{st} = \gamma_{st} C_n 2^{-(s+t)}$. We reformulate \eqref{eq:proof_main_thm_maineq} as:
\begin{align}
\left| \sum_{(i,j) \in \mathcal{H}(x,y)} x_i y_j a_{ij}\right| &\leq \sum_{(s,t): 2^{s+t} \geq 16 C_n} 2^{s+t}\frac{1/2}{n} \cdot e(I_s, J_t) \notag\\
&= \tfrac{1}{2}\sqrt{n \alpha_n \betamax} \sum_{s,t} \tfrac{1}{\sqrt{n \alpha_n \betamax}}2^{-(s+t)} 4^s 4^t \tfrac{1}{n} \cdot \tfrac{e(I_s,J_t)\alpha_n \abs{I_s}\abs{J_t}}{\alpha_n \abs{I_s}\abs{J_t}}\cdot \tfrac{\sqrt{n}}{\sqrt{n}} \notag\\
&= \frac12 \sqrt{n \alpha_n \betamax} \sum_{s,t} \mu_s \nu_t \sigma_{st}
\end{align}
Our goal is therefore to show that $\sum_{s,t} \mu_s \nu_t \sigma_{st} \lesssim 1$. For this, we will make extensive use of the fact that $\mu_s \leq 16 \sum_{i \in I_s} x_i^2$, and therefore $\sum_s \mu_s \leq 16$, and similarly $\sum_t \nu_t \leq 16$.

Following the original proof of \cite{Lei2015}, let $\mathcal{C} = \{(s,t): 2^{s+t} \geq 16C_n, \abs{I_s}\leq \abs{J_t}\}$, divided in six:
\begin{align*}
\mathcal{C}_1 &= \{(s,t) \in \mathcal{C}: \sigma_{st} \leq 1\} \\
\mathcal{C}_2 &= \{(s,t) \in \mathcal{C} \setminus \mathcal{C}_1 : \gamma_{st} \leq c_2\} \\
\mathcal{C}_3 &= \{(s,t) \in \mathcal{C} \setminus (\cup_{i=1}^2 \mathcal{C}_i) : 2^{s-t} \geq C_n\} \\
\mathcal{C}_4 &= \{(s,t) \in \mathcal{C} \setminus (\cup_{i=1}^3 \mathcal{C}_i) : \log \gamma_{st} > \tfrac{1}{4} \log\tfrac{4^t}{\nu_t}\} \\
\mathcal{C}_5 &= \{(s,t) \in \mathcal{C} \setminus (\cup_{i=1}^4 \mathcal{C}_i) : 2t\log 2 \geq \log\tfrac{1}{\nu_t}\} \\
\mathcal{C}_6 &= \{(s,t) \in \mathcal{C} \setminus (\cup_{i=1}^5 \mathcal{C}_i)\}
\end{align*}
Similarly, we define $\mathcal{C}' = \{(s,t): 2^{s+t} \geq 16C_n, \abs{I_s}\geq \abs{J_t}\}$ and $\mathcal{C}_i'$ the same way by inverting the roles of $\mu_s$ and $\nu_t$. We write the proof for $\mathcal{C}$, the other case is strictly symmetric.
Our goal is to prove that each of the $\sum_{(s,t) \in \mathcal{C}_i} \mu_s \nu_t \sigma_{st}$ is bounded by a constant.

\paragraph*{Pairs in $\mathcal{C}_1$} In this case we get
\[
\sum_{(s,t) \in \mathcal{C}_1} \mu_s \nu_t \sigma_{st} \leq \sum_{s,t} \mu_s \nu_t \leq 16^2
\]

\paragraph*{Pairs in $\mathcal{C}_2$} This includes the indices for which the first case in the bounded discrepancy lemma (Lemma \ref{lem:bounded-discrepancy}) is satisfied. Since for $s,t \in \mathcal{C}$ we have $2^{s+t} \geq 16 C_n$, we have $\sigma_{st} \leq \gamma_{st}/16$, and
\[
\sum_{(s,t) \in \mathcal{C}_2} \mu_s \nu_t \sigma_{st} \leq c''\sum_{s,t} \mu_s \nu_t/16 \leq c'' 16
\]

\paragraph*{Pairs in $\mathcal{C}_3$.} Since $2^{s-t} \geq C_n$, we have necessarily $t \leq s - \log_2 C_n$. Furthermore, since we assumed the bounded degree property (Lemma \ref{lem:bounded-degree}), we have $e(I_s,J_t) \leq c \abs{I_s} \alpha_n n$, and therefore $\gamma_{st} \leq cn/\abs{J_t}$. Then,
\begin{align*}
\sum_{(s,t) \in \mathcal{C}_3} \mu_s \nu_t \sigma_{st} &\leq \sum_{s} \mu_s \sum_{t=1}^{s-\log_2 C_n} \frac{4^t \abs{J_t}}{n} \frac{cn}{\abs{J_t}} C_n 2^{-(s+t)} \\
&= c\sum_{s} \mu_s C_n 2^{-s} \sum_{t=1}^{s-\log_2 C_n} 2^t \\
&= c\sum_{s} \mu_s C_n 2^{-s} (2^{s-\log_2 C_n + 1} -1) \\
&\leq 2c \sum_s \mu_s \leq 32c
\end{align*}

\paragraph*{Pairs in $\mathcal{C}_4$} All $I_s,J_t$ for which the first case of the bounded discrepancy lemma is satisfied are included in $\mathcal{C}_2$, hence the remaining sets satisfy the second case of the lemma, which reads $e(I_s,J_t) \log \gamma_{st} \leq c' \betamax \abs{J_t} \log \frac{n}{\abs{J_t}}$. It can be reformulated as
\begin{align}
\gamma_{st} \alpha_n \abs{I_s}\abs{J_t} \log \gamma_{st} &\leq c' \betamax \abs{J_t} \log\frac{4^t}{\nu_t} \notag \\
\sigma_{st} 2^{s+t} \sqrt{\frac{\betamax}{\alpha_n n}} \alpha_n \mu_s 4^{-s} n  \log \gamma_{st} &\leq c' \betamax \log\frac{4^t}{\nu_t} \notag \\
\sigma_{st} \mu_s \log\gamma_{st} &\leq c' \frac{2^{s-t}}{C_n} \log\frac{4^t}{\nu_t} \label{eq:proof_main_thm_bounded_discrepancy_2}
\end{align}
Since $(s,t) \notin \mathcal{C}_3$, we have $2^{s-t}\leq C_n$, and therefore $s \leq t+ \log_2 C_n$. Since $(s,t) \in \mathcal{C}_4$ we have $\log \gamma_{st} \geq \frac{1}{4} \log \frac{4^t}{\nu_t}$, and \eqref{eq:proof_main_thm_bounded_discrepancy_2} implies $\sigma_{st} \mu_s \leq 4c' \frac{2^{s-t}}{C_n}$. Then,
\begin{align*}
\sum_{(s,t) \in \mathcal{C}_4} \mu_s \nu_t \sigma_{st} \leq \sum_t \nu_t \sum_{s=1}^{t+\log_2 C_n} 4 c' \frac{2^{s-t}}{C_n} \leq 4c' \sum_t\nu_s \frac{2^{-t}}{C_n} (2^{t+\log_2 C_n + 1} -1) \leq 128 c'
\end{align*}

\paragraph*{Pairs in $\mathcal{C}_5$} We have $\frac{1}{\nu_t} \leq 4^t$, and since $(s,t) \notin \mathcal{C}_4$, we have $\log \gamma_{st} \leq \frac{1}{4} \log\frac{4^t}{\nu_t} \leq t \log 2$ and $\gamma_{st} \leq 2^t$. On the other hand, since $(s,t) \notin \mathcal{C}_1$, $1\leq \sigma_{st} = \gamma_{st} C_n 2^{-(s+t)} \leq C_n 2^{-s}$, and $s \leq \log_2 C_n$.

Because $(s,t) \notin \mathcal{C}_2$ and $c'' \geq 8$, $\log\gamma_{st} \geq 3\log 2$, combining with $\frac{1}{\nu_t} \leq 4^t$, equation \eqref{eq:proof_main_thm_bounded_discrepancy_2} becomes:
\[
\sigma_{st} \mu_s \leq c' \frac{2^{s-t}}{C_n} \frac{4t}{3} \leq c' \frac{2^{s+1}}{3C_n}
\]
since $t2^{-t} \leq 1/2$.

Combining these two facts,
\begin{align*}
\sum_{(s,t) \in \mathcal{C}_5} \mu_s \nu_t \sigma_{st} \leq \sum_t \nu_t \sum_{s=1}^{\log_2 C_n} \frac{2}{3C_n} c' 2^s \leq c' \sum_t\nu_s \frac{2}{3C_n} (2^{\log_2 C_n + 1} -1) \leq \frac{64}{3} c'
\end{align*}

\paragraph*{Pairs in $\mathcal{C}_6$} Finally, we have $0 \leq \log \gamma_{st} \leq \frac12 \log \frac{1}{\nu_t} \leq \log \frac{1}{\nu_t}$ because, respectively, $(s,t) \notin \mathcal{C}_2$, $(s,t) \notin \mathcal{C}_4$ and $(s,t) \notin \mathcal{C}_5$, so $ \gamma_{st}\leq \frac{1}{\nu_t}$. Since by definition $2^{s+t} \geq C_n$, we have $t \geq \log_2 C_n -s$, and
\begin{align*}
\sum_{(s,t) \in \mathcal{C}_6} \mu_s \nu_t \sigma_{st} &= \sum_{(s,t) \in \mathcal{C}_6} \mu_s \nu_t \gamma_{st} 2^{-(s+t)} C_n \\
&\leq \sum_s \mu_s C_n 2^{-s}\sum_{t\geq \log_2 C_n -s} (1/2)^t \\
&= \sum_s \mu_s C_n 2^{-s}\left(2 - \frac{1-(1/2)^{\log_2 C_n - s}}{1-(1/2)}\right) = 2\sum_s \mu_s \leq 32
\end{align*}

We conclude the proof of Theorem \ref{thm:concentration} by gathering the bounds on the light and heavy pairs, with the corresponding probabilities of failure. We consider the same constant $c>0$ in each lemma for simplicity.

\subsection{Concentration of Laplacian: proof of Theorem \ref{thm:concentration_Lapl}}\label{app:Lapl}

We note the degree matrices of $A$ and $P$ respectively $D$ and $D_{P}$, containing the degrees $d_i = \sum_{kj}\beta_k a_{ij}^{(t-k)}$ and $\bar d_i = \mathbb{E}d_i$. Note that under our assumptions $d_{\min} \eqdef \alpha_n \ntaumin \leq \bar d_i \leq d_{\max} \eqdef \alpha_n \ntaumax$. Applying Lemma \ref{lem:bounded-degree-Lapl} with $c=\frac{1}{2}$, we obtain: for all $\nu>0$, there is a constant $C'_\nu$ such that, if $\frac{\alpha_n}{\betamax} \geq C'_\nu \commratio \frac{\log n}{\ntaumin}$, with probability at least $1-n^{-\nu}$ we have $\frac{1}{2} d_{\min} \leq d_i \leq \frac{3}{2} d_{\max}$ for all $i$. We assume that it is satisfied for the rest of the proof.

%We decompose:
%\begin{align*}
%L - L_{P} &= D^{-\frac12} A D^{-\frac12} - D_{P}^{-\frac12} P D_{P}^{-\frac12}  \\
%&= D^{-\frac12} (A-P) D^{-\frac12} + D^{-\frac12} P D^{-\frac12} - D_P^{-\frac12} P D_P^{-\frac12} \\
%&= D^{-\frac12} (A-P) D^{-\frac12} + (D_P D)^{-\frac12} M (D_P D)^{-\frac12}
%\end{align*}
%where $M = D^\frac12 P D^{\frac12} - D_{P}^{\frac12} P D_{P}^{\frac12}$.
%%\begin{align*}
%%x^\top(L - L_{P_t})y &= \sum_{ij} x_i y_j \left( \frac{a_{ij}}{\sqrt{d_i d_j}} - \frac{p_{ij}}{\sqrt{\bar d_i \bar d_j}}\right) \\
%%&= \sum_{ij} x_i y_j p_{ij}\left( \frac{1}{\sqrt{d_i d_j}} - \frac{1}{\sqrt{\bar d_i \bar d_j}}\right) + \sum_{ij} \frac{x_i y_j}{\sqrt{d_i d_j}} \left( a_{ij} - p_{ij}\right)
%%&= \sum_{ij} x_i y_j p_{ij}\left( \frac{\sqrt{d_i d_j}-\sqrt{\bar d_i \bar d_j}}{\sqrt{d_i d_j \bar d_i \bar d_j}}\right) + \sum_{ij} \frac{x_i y_j}{\sqrt{d_i d_j}} \left( a_{ij} - p_{ij}\right)
%%\end{align*}
%
We apply Lemma \ref{lem:technicalLaplacianBound}, from which
\begin{equation}\label{eq:proofdecomposeLaplacian}
\norm{L(A)-L(P)} \leq \frac{2\norm{A-P}}{d_{\min}}
+\frac{4\norm{(D-D_P)P}}{d_{\min}^2}
\end{equation}
We will now bound $\norm{A-P}$ and $\norm{(D-D_P)P}$ with high probability, and use a union bound to conclude.

By Theorem \ref{thm:concentration}, with probability $1-n^{-\nu}$ we have $\norm{A-P}\lesssim \sqrt{n \alpha_n \betamax}$ and the first term has the desired rate.

To bound the spectral norm of $(D-D_P)P$ with high probability, we re-use the ``light and heavy pairs'' strategy of the previous proof. Define $\delta_i = d_i - \bar d_i$. We adopt the definitions of the previous section. We use again the fact that
\[
\norm{(D-D_P)P} \leq 4 \sup_{x,y \in T} x^\top Q y
\]
where $T$ is the same grid.
We decompose
\[
x^\top Q y = \sum_{i,j \in \mathcal{L}(x,y)} x_i y_j p_{ij} \delta_i + \sum_{i,j \in \mathcal{H}(x,y)} x_i y_j p_{ij} \delta_i
\]
Recall that we have $\abs{\delta_i} \leq d_{\max}$. In the proof of Theorem \ref{thm:concentration} we proved that $\sum_{i,j \in \mathcal{H}(x,y)} x_i y_j p_{ij} \leq \sqrt{n \alpha_n \betamax}$, and therefore with the same probability
\[
\sum_{i,j \in \mathcal{H}(x,y)} x_i y_j p_{ij} \delta_i \lesssim d_{\max} \sqrt{n \alpha_n \betamax}
\]
which is the desired complexity.

We must now handle the light pairs. We write $\delta_i = \sum_{\ell=1}^{n} \sum_{k=0}^t \beta_k w_{i\ell}^{(t-k)}$, and therefore
\[
\sum_{i,j \in \mathcal{L}(x,y)} x_i y_j p_{ij} \delta_i = \sum_{i< \ell} \sum_k u_{i \ell k} w_{i\ell}^{(t-k)}
\]
where
\[
u_{i \ell k} =\beta_k \sum_j \left(x_i y_j p_{ij} 1_{(i,j)\in \mathcal{L}(x,y)} + x_\ell y_j p_{\ell j} 1_{(\ell,j)\in \mathcal{L}(x,y)}\right)
\]
We want to apply Bernstein inequality. The random variables $w_{i\ell}^{(t-k)}$ are independent centered Bernoulli variables of parameters $p_{i\ell}^{(t-k)}$. By definition of light pairs we have
\[
\abs{u_{ijk}} \leq 2 \betamax \alpha_n \ntaumax  \sqrt{\frac{\alpha_n}{n \betamax}} = 2\sqrt{\betamax} \alpha_n^{\frac{3}{2}} \frac{\ntaumax}{\sqrt{n}}
\]
since $\sum_j p_ij^{(t-k)} \leq \alpha_n \ntaumax$.
Then, using $(a+b)^2 \leq 2(a^2+b^2)$,
\begin{align*}
\sum_{i \ell k} Var(u_{i\ell k} w_{i\ell}^{(t-k)}) &\leq \sum_{k} \beta_k^2 \left(\sum_{i\ell} p_{i\ell}^{(t-k)} \left( x_i \sum_j y_j p_{ij}+ x_\ell \sum_j y_j p_{\ell j}\right)^2\right) \\
&\leq 2\sum_k \betamax^2 \left(\sum_{i\ell} p_{i\ell}^{(t-k)} x_i^2 \left(\sum_j y_j p_{ij}\right)^2+ p_{i\ell}^{(t-k)} x_\ell^2\left( \sum_j y_j p_{\ell j}\right)^2\right) \\
&\leq 2\Cbeta \betamax \ntaumax^2 \alpha_n^3
\end{align*}
Where we have used
\[
\sum_j y_j p_{\ell j} = \sum_k\beta_k \sum_j y_j p_{\ell j}^{(t-k)} \leq \sum_k \beta_k \norm{y} \alpha_n\sqrt{n_{\max} + \tau^2 n} \leq \alpha_n \sqrt{\ntaumax}
\]
and $\sum_{i\ell} p_{i\ell}^{(t-k)} x_i^2 \leq \alpha_n \ntaumax \norm{x}$.
Hence, using Bernstein's inequality,
\begin{align*}
\mathbb{P}\left(\abs{\sum_{i,j \in \mathcal{L}(x,y)} x_i y_j p_{ij} \delta_i} \geq t\right) &\leq 2\exp\left(-\frac{t^2/2}{2\Cbeta \betamax \ntaumax^2 \alpha_n^3 + \frac{2}{3}\sqrt{\betamax} \alpha_n^{\frac{3}{2}} \frac{\ntaumax}{\sqrt{n}} t}\right) \\
\mathbb{P}\left(\abs{\sum_{i,j \in \mathcal{L}(x,y)} x_i y_j p_{ij} \delta_i} \geq c \ntaumax \sqrt{n\betamax}\alpha_n^{\frac{3}{2}}\right) &\leq 2\exp\left(-\frac{c^2/2}{2\Cbeta + \frac{2c}{3}} \cdot n\right)
\end{align*}
Using a union bound over $T$ we can conclude.

\subsection{Additional proofs}\label{app:additional_proofs}

%We will need the following Lemma.
%
%\begin{lemma}\label{lem:conc-Ptk}
%For any $\delta>0$ and $0<k\leq t_{\min}$, we have with probability at least $1-2e^{-2 \sqrt{\frac{\lambda}{\alpha_n n}} \delta^2 n}$,
%\[
%\norm{P_{t-k} - P_t}_F^2 \leq 4\alpha_n^2 n^2 (1 - (1-\varepsilon_n)^{2k} (1-\delta)^2)
%\]
%\end{lemma}
\begin{proof}[Proof of Lemma \ref{lem:error_second_markov}]
For any $k$, we have $p^{(t-k)}_{ij} = p^{(t)}_{ij}$ if $z_i^{t-k} = z_i^t$ and $z_j^{t-k} = z_j^t$, that is, if the nodes have not changed communities. Thus we write
\begin{align*}
\norm{P_{t-k} - P_t}_F^2 &= \sum_{i,j} (p^{(t-k)}_{ij} - p^{(t)}_{ij})^2\pa{1-\mathbf{1}_{\{z_i^{t-k} = z_i^t\}}\mathbf{1}_{\{z_j^{t-k} = z_j^t\}}} \\
&\leq 4\alpha_n^2 n^2 \pa{1 - \pa{\frac{1}{n}\sum_i \mathbf{1}_{\{z_i^{t-k} = z_i^t\}}}^2}
\end{align*}
Considering that $z_i^{t-k} = z_i^t$ at least when $z_i^{t-k} =z_i^{t-k+1} = \ldots = z_i^t$ and that this event happens with probability $(1-\varepsilon_n)^k$, we have that
\[
\mathbf{1}_{\{z_i^{t-k} = z_i^t\}} \geq a_i \sim \text{Ber}((1-\varepsilon_n)^k)
\]
where the $a_i$ are independent Bernoulli variables. By Hoeffding inequality, for some $\delta>0$ that we will fix later, we have
\[
\mathbb{P}\pa{\frac{1}{n}\sum_i a_i \leq (1-\varepsilon_n)^k -\delta} \leq 2 e^{-2 \delta^2 n} %\leq 2 e^{-2 \sqrt{\frac{\lambda}{\alpha_n n}} \delta^2 n}
\]
%since by our choice of $t_{\min}$ and $\lambda \geq \varepsilon_n$, we have $(1-\varepsilon_n)^{2k} \geq (1-\lambda)^{2t_{\min}} \sim \sqrt{\frac{\lambda}{\alpha_n n}}$.

and therefore with probability at least $1-\rho$
\begin{align*}
\norm{P_{t-k} - P_t}_F^2 \leq 4\alpha_n^2 n^2 \pa{1-\pa{(1-\varepsilon_n)^{k} -\delta}^2} \leq 8\alpha_n^2 n^2 \pa{1-(1-\varepsilon_n)^{k} +\delta}
\end{align*}
then using $1-x^k = (1-x)(1+x + \ldots + x^{k-1}) \leq (1-x)k$ for $\abs{x}\leq 1$ we get
\[
\norm{P_{t-k} - P_t}_F^2 \leq 4\alpha_n^2 n^2 \pa{1-\pa{(1-\varepsilon_n)^{k} -\delta}^2} \leq 8\alpha_n^2 n^2 \pa{\min(1, k\epsilon_n) +\delta}
\]
Then we choose $\delta \sim \epsilon_n \leq \min(1, k\epsilon_n)$ to conclude.
%
%Then, using a union bound, we have shown that with probability at least $1-\rho$ where $\rho = 2 t_{\min} e^{-2 \sqrt{\frac{\lambda}{\alpha_n n}} \delta^2 n}$, we have for all $k = 1,\ldots, t_{\min}$:
%
\end{proof}

\begin{proof}[Proof of Lemma \ref{lem:error_second_Lapl}]
Denote by $D = D(P_t)$, $\bar D = D(\Psmooth_t)$ the degree matrices of $P_t$ and $\Psmooth_t$, with $d_i$ and $\bar d_i$ their elements. By assumption, we have $d_i, \bar d_i \geq d_{\min} \eqdef \ntaumin \alpha_n$ for all $i$. Therefore, by applying Lemma \ref{lem:technicalLaplacianBound} we have 
\[
\norm{L(P_t) - L(\Psmooth_t)} \leq \frac{\norm{P_t - \Psmooth_t}}{d_{\min}} + \frac{\norm{(D-\bar D)\Psmooth_t}}{d_{\min}^2}
\]
From Lemma \ref{lem:error_second_deter}, we have $\norm{P_t -\Psmooth_t} \lesssim \Cbeta' \alpha_n \sqrt{\frac{n \ntaumax \varepsilon_n}{\betamax}}$.

For the second term, we define $D_{t-k}$ the diagonal degree matrix associated to $P_{t-k}$, such that
\begin{align*}
\norm{(D-\bar D)\Psmooth_t} &\leq \sum_{k=0}^t\beta_k \norm{(D-D_{t-k})\Psmooth_t}_F
\end{align*}

Denoting by $\bar p_{ij}$ the elements of $\Psmooth_t$, recall that $\sum_j \bar p^2_{ij} \leq \alpha_n^2 \ntaumax$, and using $(a+b)^2 \leq 2(a^2 + b^2)$ we have
\begin{align*}
\norm{(D-D_{t-k})\Psmooth_t}_F^2 &= \sum_{i} \left(\sum_\ell p^{(t)}_{i\ell} - p^{(t-k)}_{i\ell}\right)^2 \pa{\sum_j \bar p^2_{ij}} \\
&\leq \alpha_n^2 \ntaumax \sum_i\left(\sum_\ell p^{(t)}_{i\ell} - p^{(t-k)}_{i\ell}\right)^2 \\
&\leq \alpha_n^2 \ntaumax \sum_i \pa{ \sum_\ell \pa{\sqrt{p^{(t)}_{i\ell}} + \sqrt{p^{(t-k)}_{i\ell}}}^2} \pa{\sum_\ell \pa{\sqrt{p^{(t)}_{i\ell}} - \sqrt{p^{(t-k)}_{i\ell}}}^2} \\
&\leq 2\alpha_n^2 \ntaumax \sum_i \pa{ \sum_\ell p^{(t)}_{i\ell} + p^{(t-k)}_{i\ell}}  \pa{\sum_\ell \pa{\sqrt{p^{(t)}_{i\ell}} - \sqrt{p^{(t-k)}_{i\ell}}}^2} \\
&\leq 4 \alpha_n^3 \ntaumax^2 \norm{P_t^{\odot \frac12} - P_{t-k}^{\odot \frac12}}_F^2
\end{align*}
where $A^{\odot \frac12}$ indicates element-wise square root. Repeating the proof of Lemma \ref{lem:error_second_deter}, for two SBM connection probability matrices $P$ and $P'$ between which only the nodes belonging to a set $\mathcal{S}$ have change community, we have
\begin{align*}
\norm{P^{\odot \frac12} - (P')^{\odot \frac12}}_F^2 &= \sum_{i \in \mathcal{S}} \sum_j \pa{\sqrt{p_{ij}} - \sqrt{p'_{ij}}}^2 + \pa{\sqrt{p_{ji}} - \sqrt{p'_{ji}}}^2 \\
&\leq 4\sum_{i \in \mathcal{S}} \sum_j p_{ij}+p'_{ij} \leq 8 \abs{\mathcal{S}} \alpha_n \ntaumax
\end{align*}
Therefore, $\norm{P_t^{\odot \frac12} - P_{t-k}^{\odot \frac12}}_F^2 \lesssim \alpha_n \ntaumax n \min(1, k\varepsilon_n)$, and 
$
\norm{(D-D_{t-k})\Psmooth_t}_F \leq \alpha_n^2 \ntaumax^{\frac{3}{2}} \sqrt{n} \min(1, \sqrt{k \varepsilon_n})
$. We conclude using the hypothesis on $\sum_k \beta_k \min(1, \sqrt{k \varepsilon_n})$.
\end{proof}

\subsection{Technical Lemma}\label{app:technical_proof}

\begin{lemma}\label{lem:technicalLaplacianBound}
Let $A, P \in \mathbb{R}^{n\times n}$ be symmetric matrices containing non-negative elements, assume that $d_i=\sum_j a_{ij}$ and $d^P_i = \sum_j p_{ij}$ are strictly positive, define $D=\diag(d_i)$, $D_P=\diag(d^P_i)$, $d_{\min} = \min_i(d_i,d^P_i)$. Then,
\[
\norm{L(A) - L(P)} \leq \frac{\norm{A-P}}{d_{\min}} + \frac{\norm{(D-D_P)P}}{d_{\min}^2}
\]
\end{lemma}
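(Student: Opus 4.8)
The plan is to separate the two sources of discrepancy in $L(A)-L(P) = D^{-\frac12}AD^{-\frac12} - D_P^{-\frac12}PD_P^{-\frac12}$, namely the matrices $A$ versus $P$ and the normalizations $D$ versus $D_P$, and handle them in turn. First I would write
\[
L(A)-L(P) = D^{-\frac12}(A-P)D^{-\frac12} + \left(D^{-\frac12}PD^{-\frac12} - D_P^{-\frac12}PD_P^{-\frac12}\right).
\]
Since $D^{-\frac12}$ is diagonal with positive entries, $\norm{D^{-\frac12}} = \max_i d_i^{-1/2} \leq d_{\min}^{-1/2}$, and likewise $\norm{D_P^{-\frac12}}\leq d_{\min}^{-1/2}$; hence by submultiplicativity $\norm{D^{-\frac12}(A-P)D^{-\frac12}}\leq \norm{D^{-\frac12}}^2\norm{A-P}\leq \norm{A-P}/d_{\min}$, which is the first term of the claimed bound.

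For the normalization term I would set $E\eqdef D^{-\frac12}-D_P^{-\frac12}$, a diagonal matrix, expand $D^{-\frac12}PD^{-\frac12} = (D_P^{-\frac12}+E)P(D_P^{-\frac12}+E)$, and cancel the $D_P^{-\frac12}PD_P^{-\frac12}$ piece. This leaves $D_P^{-\frac12}PE + EPD_P^{-\frac12} + EPE$, and the useful observation is the regrouping $EPD_P^{-\frac12}+EPE = EP(D_P^{-\frac12}+E) = EPD^{-\frac12}$, so the normalization term equals $EPD^{-\frac12}+D_P^{-\frac12}PE$. Its spectral norm is at most $\norm{EP}\,\norm{D^{-\frac12}}+\norm{D_P^{-\frac12}}\,\norm{PE}$; since $E$ is diagonal and $P$ symmetric, $PE=(EP)^\top$ and therefore $\norm{PE}=\norm{EP}$, so everything reduces to bounding $\norm{EP}$.

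The one computation that needs a little care is exhibiting a factor $D-D_P$ inside $E$. Using the elementary identity $a^{-1/2}-b^{-1/2} = (b-a)/\left(\sqrt{ab}\,(\sqrt a+\sqrt b)\right)$ entrywise, I would write $E = (D_P-D)\Phi$ with $\Phi$ diagonal, $\Phi_{ii} = \left(\sqrt{d_i d_i^P}\,(\sqrt{d_i}+\sqrt{d_i^P})\right)^{-1}$, so that $\norm{\Phi}\leq \tfrac12 d_{\min}^{-3/2}$. As diagonal matrices commute, $EP = \Phi(D_P-D)P$, hence $\norm{EP}\leq \tfrac12 d_{\min}^{-3/2}\norm{(D-D_P)P}$ (the sign of $D_P-D$ being irrelevant under the norm). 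Plugging this in,
\[
\norm{EPD^{-\frac12}}+\norm{D_P^{-\frac12}PE} \leq 2\cdot d_{\min}^{-1/2}\cdot \tfrac12 d_{\min}^{-3/2}\,\norm{(D-D_P)P} = \frac{\norm{(D-D_P)P}}{d_{\min}^2},
\]
and adding this to the first term finishes the proof. I expect the regrouping $EPD_P^{-\frac12}+EPE=EPD^{-\frac12}$ to be the crux: it removes the need to estimate $\norm{EPE}$ on its own, and is exactly what yields the sharp constant $1$ in front of $\norm{(D-D_P)P}/d_{\min}^2$ rather than a larger factor.
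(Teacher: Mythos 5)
Your proof is correct and delivers the stated bound with the same constants, and its skeleton matches the paper's: the same initial splitting off of $D^{-\frac12}(A-P)D^{-\frac12}$, the same bound $\norm{A-P}/d_{\min}$ for that piece, and your regrouped normalization term $EPD^{-\frac12}+D_P^{-\frac12}PE$ is exactly the paper's $QD^{-\frac12}+D_P^{-\frac12}Q^\top$ with $Q=EP$ (the paper writes a minus sign in that identity, which is a typo; its subsequent bound $2\norm{Q}/\sqrt{d_{\min}}$ is unaffected). Where you genuinely diverge is in estimating $\norm{EP}$. The paper works with the bilinear form $x^\top Q y=\sum_{ij}x_iy_jp_{ij}\delta_i(d_i^P-d_i)$ and invokes a sign-selection argument (take $y_j\geq 0$ and $x_i$ with the sign of $d_i^P-d_i$) that relies on the non-negativity of the $p_{ij}$ in order to pull the factor $\max_i\delta_i$ out of the supremum. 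Your factorization $E=\Phi(D_P-D)$ with $\norm{\Phi}\leq\frac12 d_{\min}^{-3/2}$, followed by submultiplicativity of the spectral norm, reaches the same constant in one line and makes no use of the signs of the entries of $P$; it is the cleaner route, and it shows that non-negativity is only needed to make the degrees positive, not for the norm estimate itself. Both arguments are sound.
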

\begin{proof}
Recall that $\norm{M} = \sup_{x,y \in \mathcal{S}} x^\top M y$, where $\mathcal{S}$ is the Euclidean unit ball. Denote $Q = (D^{-\frac12}-D_P^{-\frac12})P$.

We write
\begin{align*}
L(A) - L(P) &= D^{-\frac12} A D^{-\frac12} - D_{P}^{-\frac12} P D_{P}^{-\frac12}  \\
&= D^{-\frac12} (A-P) D^{-\frac12} + D^{-\frac12} P D^{-\frac12} - D_P^{-\frac12} P D_P^{-\frac12} \\
&= D^{-\frac12} (A-P) D^{-\frac12} + Q D^{-\frac12} - D_P^{-\frac12} Q^\top
\end{align*}
and thus
\[
\norm{L(A) - L(P)} \leq \frac{\norm{A-P}}{d_{\min}} + \frac{2\norm{Q}}{\sqrt{d_{\min}}}
\]
Then, for $x,y \in \mathcal{S}$, we have
\begin{align*}
x^\top Q y &= \sum_{ij} x_i y_j p_{ij} \left( \frac{1}{\sqrt{d_i}} - \frac{1}{\sqrt{d^P_i}}\right) = \sum_{ij} x_i y_j p_{ij} \left(\frac{d_i^P - d_i}{\sqrt{d_i d^P_i}(\sqrt{d_i} + \sqrt{d^P_i})}\right) \\
=& \sum_{ij} x_i y_j p_{ij} \delta_i(d_i^P - d_i)
\end{align*}
where $\delta_i \eqdef \frac{1}{\sqrt{d_i d^P_i}(\sqrt{d_i} + \sqrt{d^P_i})} \leq \frac12 d_{\min}^{-\frac{3}{2}}$.
%By a Taylor expansion, there exists $\gamma_i$ with $\abs{\gamma_i} \leq \max_{d_{\min} \leq d\leq d_{\max}} \frac{1}{2d^{\frac{3}{2}}} = \frac{1}{2d_{\min}^{\frac{3}{2}}}$ such that $\frac{1}{\sqrt{d_i}} - \frac{1}{\sqrt{d^P_i}} = \delta_i \gamma_i$.
%where $\Delta_{ij} = \delta_i d^P_j + \delta_j d^P_i + \delta_i \delta_j$. By a Taylor expansion, there exist $0\leq \Delta'_{ij} \leq \abs{\Delta_{ij}}$ such that $\sqrt{ d^P_i  d^P_j + \Delta_{ij}} - \sqrt{d^P_i  d^P_j} = \frac{\Delta_{ij}}{2\sqrt{\min(d_i d_j, d^P_i d^P_j) + \Delta'_{ij}}} = \delta_i \beta_{ij} + \delta_j \gamma_{ij}$, where $\beta_{ij} = \frac{d^P_j + \delta_j}{\sqrt{\min(d_i d_j, d^P_i d^P_j) + \Delta'_{ij}}}$ and $\gamma_{ij} = \frac{d^P_i}{\sqrt{\min(d_i d_j, d^P_i d^P_j) + \Delta'_{ij}}}$ are such that $\abs{\beta_{ij}} \leq 2\sqrt{\mu}$ and $0\leq \gamma_{ij} \leq \sqrt{\mu}$, from which
%\begin{equation}\label{eq:prooftechnicalLaplacian}
%x^\top M y = \sum_{ij} x_i y_j p_{ij} \delta_i \beta_{ij} + \sum_{ij} x_i y_j p_{ij} \delta_j \gamma_{ij}
%\end{equation}
%
%Now, considering the sum $x^\top (D-D_P)P y = \sum_{ij} x_i y_j p_{ij} \delta_i$.
Since the $p_{ij}$ are non-negative, the maximum over $x,y \in \mathcal{S}$ is necessarily reached when every term in the sum is non-negative, by choosing $y_j \geq 0$ and $x_i$ with the same sign as $d^P_i - d_i$. Hence, $\sup_{x,y \in \mathcal{S}}\sum_{ij} x_i y_j p_{ij} (d^P_i - d_i) = \sup_{x,y \in \mathcal{S}}\sum_{ij} \abs{x_i y_j p_{ij} (d^P_i - d_i)}$. Using this property,
\begin{align*}
\sup_{x,y \in \mathcal{S}}\sum_{ij} x_i y_j p_{ij} \delta_i (d^P_i - d_i) &\leq \sup_{x,y \in \mathcal{S}}\sum_{ij} \abs{x_i y_j p_{ij} \delta_i(d^P_i - d_i)} \\
&\leq \frac12 d_{\min}^{-\frac{3}{2}} \sup_{x,y \in \mathcal{S}}\sum_{ij} \abs{x_i y_j p_{ij} (d^P_i - d_i)} \\
&= \frac12 d_{\min}^{-\frac{3}{2}} \sup_{x,y \in \mathcal{S}}\sum_{ij} x_i y_j p_{ij} (d^P_i - d_i) \\
&= \frac12 d_{\min}^{-\frac{3}{2}} \norm{(D-D_P)P}
\end{align*}
%The same is true for the second term in \eqref{eq:prooftechnicalLaplacian} using $\gamma_{ij} \leq \sqrt{\mu}$ and the fact that $P$ is symmetric, which concludes the proof.
which concludes the proof.
\end{proof}

\end{document}